\newtheorem{lemma}{Lemma}
\newtheorem{theorem}{Theorem}
\newtheorem{proposition}{Proposition}
\def\det{\mathrm{det}} 
\def\det{\mathrm{det}} 
\newcommand{\abf}{\mathbf{a}}
\newcommand{\bbf}{\mathbf{b}}
\newcommand{\Mcal}{\mathcal{M}}
\newcommand{\Lcal}{\mathcal{L}}
\newcommand{\KL}{\mathrm{KL}}
\newcommand{\tr}{\mathsf{tr}}
\newcommand{\detsf}{\mathsf{det}}
\newcommand{\Idsf}{\mathsf{Id}}
\DeclareMathOperator*{\argmin}{arg\,min}
\newcommand{\diag}{\operatorname{\mathbf{diag}}}
\newcommand{\Sigmawtd}{\widetilde{\Sigma}}
\newcommand{\piol}{\overline{\pi}}
\newcommand{\alphaol}{\overline{\alpha}}
\newcommand{\betaol}{\overline{\beta}}
\newcommand{\Lambdaol}{\overline{\Lambda}}
\begin{document}
\begin{center}

{\bf{\LARGE{On Barycenter Computation: Semi-Unbalanced Optimal Transport-based Method on Gaussians}}}

\vspace*{.2in}
 {\large{
 \begin{tabular}{ccccc}
 Ngoc-Hai Nguyen$^{\star,\diamond}$& Dung Le$^{\star,\dagger}$& Hoang-Phi Nguyen$^{\diamond}$& Tung Pham$^{\diamond}$& Nhat Ho$^{\dagger}$
 \end{tabular}
}}

\vspace*{.2in}

\begin{tabular}{c}
The University of Texas, Austin$^{\dagger}$; VinAI Research$^{\diamond}$
\end{tabular}
\vspace*{.2in}

\today

\vspace*{.2in}

\begin{abstract}
  We explore a robust version of the barycenter problem among $n$ centered Gaussian probability measures, termed Semi-Unbalanced Optimal Transport (SUOT)-based Barycenter, wherein the barycenter remains fixed while the others are relaxed using Kullback-Leibler divergence. We develop optimization algorithms on Bures-Wasserstein manifold, named the Exact Geodesic Gradient Descent and Hybrid Gradient Descent algorithms. While the Exact Geodesic Gradient Descent method is based on computing the exact closed form of the first-order derivative of the objective function of the barycenter along a geodesic on the Bures manifold, the Hybrid Gradient Descent method utilizes optimizer components when solving the SUOT problem to replace outlier measures before applying the Riemannian Gradient Descent. We establish the theoretical convergence guarantees for both methods and demonstrate that the Exact Geodesic Gradient Descent algorithm attains a dimension-free convergence rate. Finally, we conduct experiments to compare the normal Wasserstein Barycenter with ours and perform an ablation study.
\end{abstract}

\let\thefootnote\relax\footnotetext{$^{\star}$ Ngoc-Hai Nguyen and Dung Le contributed equally to this work.}
\end{center}

\section{Introduction}
Aggregating multiple data sources has garnered significant interest in 
data science and artificial intelligence due to its fundamental role and wide range of applications. When we have to work with data distribution, one of the useful aggregations is the barycenter of those distributions. In the context of Optimal Transport, where the distance between distributions is defined as the optimal cost to transport masses,  this problem is known as the Wasserstein Barycenter problem, which is known to have several applications, e.g., image processing \cite{ferradans2014regularized} \cite{simon2020barycenters}, image restoration \cite{mignon2023semi}, time-series modeling \cite{cheng2021dynamical}, domain adaptation \cite{montesuma2021wasserstein}, graph representation \cite{simou2020node2coords}, signal processing~\cite{simou2019graph}, and medical multi-modal large language model~\cite{med_llm}.  

However, a common challenge arises as data often exhibits more complexity, especially in real-world scenarios where noises and outliers are prevalent, that could distort the final results of any statistical procedure. In this paper, 
we work on the robustness of the barycenter when the data measures contain noise, akin to extracting the true mean of clean distributions. It is known that a relaxed version of OT, which is Unbalanced Optimal Transport (UOT) \cite{liero2018optimal}, is able to reduce the effect of contamination thus produce robust estimation of the OT cost between corrupted distributions. The penalty function often used in the UOT is the KL divergence, which has been well-studied \cite{pham2020unbalanced, robust_OT} and preferred because of its pleasant mathematical properties and computational advantage over other divergences.  In particular, when data distributions are Gaussian, the Wasserstein distance \cite{peyre2019computational}, even the UOT cost with KL divergence has closed form solution \cite{le2022entropic}. Motivated by those nice mathematical formulas of OT and UOT costs, we aim to find out the barycenter of contaminated Gaussian distributions. The most challenging problem is that the minimization is taken over Bures manifold \cite{bhatia2009positive}, the manifold of symmetric positive definite matrices, that has a positive curvature.

\vspace{0.3 em}
\noindent
\textbf{Contributions.} Our contributions can be summarized as follows. 
We propose a framework for robust barycenter computation using the debiased effect of Semi-Unbalanced Optimal Transport (SUOT) to measure distances between barycenter and other measures. This approach aims to maintain the integrity of the true barycenter even in the presence of contamination. Our contributions include:

\begin{enumerate}
    \item We embark on the Gaussians measures $\alpha_i(s) = m_{\alpha_i} \mathcal{N}(\textbf{a}_i, \Sigma_{\alpha_i})$ residing $\mathbb{R}^d$. Motivated from previous work \cite{altschuler2021averaging} \cite{han2021riemannian} showing close form for Wasserstein distance between Gaussians, we prove that our SUOT optimization problem also has a closed-form solution. We derive two approaches for robust barycenter computation: a hybrid method using the scheme from \cite{chewi2020gradient} and a Riemannian Gradient Descent method that computes the exact Wasserstein gradient, treating our framework as a function with the barycenter as the variable.

    \item Theoretically, we provide the proof for the Gaussian form guarantee of the SUOT-based Barycenter, along with a complete proof and analysis of the convergence guarantees for the two proposed algorithms.
    
    \item Finally, we conduct experiments to demonstrate the debiased effect of our framework compared to traditional Wasserstein distance methods and highlight the advantages of our methods. We also conduct an ablation study to examine the convergence of our algorithms, comparing them against steepest descent methods that do not involve closed-form gradient calculations.
\end{enumerate}
\textbf{Organization.} In Section~\ref{sec:Background}, we recall some necessary background about OT and UOT, as well as an optimization on the Riemannian manifold. Section~\ref{sec:semi_unbalanced_OT_barycenter} proposes a framework for barycenter computation based on distance from Semi-Unbalanced Optimal Transport. Then in Section~\ref{sec:center_Gaussian} we delve into the realm of Gaussian measures, with the assumption that the Gaussians are centered at the origin, effectively working within the space of symmetric positive definite matrices, later we propose and analyze one hybrid algorithm and one first-order algorithm for robust barycenter computation. In Section~\ref{sec:numerical_experiment} we carry out numerical experiments to illustrate the performance of our approach. Section~\ref{sec:conclusion} is the conclusion and further discussion. Proofs of the theoretical results are in the supplementary material. 

\section{Background}
\label{sec:Background}
In this section, we provide necessary backgrounds for the Wasserstein Distance, Unbalanced Optimal Transport, Riemannian Optimization on Bures-Wasserstein manifold (SPD manifold) with Gradient Descent.

\vspace{0.3 em}
\noindent
\textbf{Notations.} Let $\mathcal{P}_{2, ac} \left(\mathbb{R}^d\right)$ denote the space of absolutely continuous probability measures on $\mathbb{R}^d$ with finite second moment. For any $\mu, \nu \in \mathcal{P}_{2, ac} \left(\mathbb{R}^d\right)$, the generalized Kullback-Leibler divergence is defined as $\KL(\mu \| \nu)= \int_{\mathbb{R}^d} \mu \log \left(\frac{\mu}{\nu}\right) \mathrm{d}x$. We denote the set of symmetric matrices by $\mathbb{S}^d$, symmetric semi-positive definite matrices by $\mathbb{S}_{+}^d$, and symmetric positive definite matrices by $\mathbb{S}_{++}^d$. The singular values of $\Sigma \in \mathbb{S}_{+}^d$ in descending order are $\left\{\lambda_i(\Sigma)\right\}_{i=1}^d$. The Gaussian measure on $\mathbb{R}^d$ with mean $m \in \mathbb{R}^d$ and covariance matrix $\Sigma \in \mathbb{S}_{++}^d$ is denoted $\mathcal{N}(m, \Sigma)$. The identity matrix in $\mathbb{R}^{d \times d}$ is $\Idsf$. For a vector $x \in \mathbb{R}^d$, $\diag(x)$ is the diagonal matrix with $x$ on the diagonal. We use $\|.\|_2$ as $l_2$-norm of vector in $\mathbb{R}^d$ and $\| . \|_F $ as Frobenius norm of matrix in $\mathbb{R}^{d \times d}$. Further notations will be defined as necessary.

\subsection{Wasserstein distance}
We start with a formal definition of $p$-Wasserstein distance. Given probability measures $\mu,\nu\in \mathcal{P}_{2, ac} \left(\mathbb{R}^d\right)$ with finite second moment, the $p$-Wasserstein distance between $\mu$ and $\nu$ is defined as~\cite{peyre2019computational, villani2009optimal}:
\begin{align*}
    W_p^p(\mu, \nu):=\inf _{\pi \in \Pi(\mu, \nu)} \mathbb{E}_\pi\|X-Y\|_p^p,
\end{align*}
where $X, Y$ are independent random vectors in $\mathbb{R}^d$ such that $(X, Y) \sim \pi$, $\Pi(\mu, \nu)$ denotes the set of couplings of $\mu$ and $\nu$, i.e., the set of probability measures on $\mathbb{R}^d \times \mathbb{R}^d$ whose marginals are respectively $\mu$ and $\nu$. If $\mu$ and $\nu$ admit densities with respect to the Lebesgue measure on $\mathbb{R}^d$, then the infimum is attained, and the optimal coupling is supported on the graph of a map, i.e., there exists a map $T: \mathbb{R}^d \rightarrow \mathbb{R}^d$ such that for $\pi$-a.e. $(x, y) \in \mathbb{R}^d \times \mathbb{R}^d$, it holds that $y=T(x)$. The map $T$ is called the optimal transport map from $\mu$ to $\nu$. 
The Wasserstein distance has a closed-form expression for Gaussians \cite{altschuler2021averaging}
\begin{align*}
    &W_2^2(\mathcal{N}(m, \Sigma) , \mathcal{N}(m', \Sigma')) = \| m - m' \|_2^2 + \tr\Big(\Sigma+\Sigma^{\prime}-2\left[\Sigma^{1 / 2} \Sigma^{\prime} \Sigma^{1 / 2}\right]^{1 / 2}\Big) .
\end{align*}
In this paper, we mainly work with centered Gaussians, which can be identified by their covariance matrices. Throughout, all Gaussians of interest are non-degenerate i.e. their covariances are non-singular. If there is no confusion, for a distance function $d$, we write $d\left(\Sigma, \Sigma^{\prime}\right)$ as the distance of two centered Gaussians with corresponding covariance matrices $\Sigma$, $\Sigma^{\prime}$ (for example: $W_2\left(\Sigma, \Sigma^{\prime}\right)$). Similarly, $T_{\Sigma \rightarrow \Sigma^{\prime}}$ denotes optimal transport map between these Gaussians. 

\vspace{0.3 em}
\noindent
\textbf{Barycenter Problem.} Let $P$ be a probability measure over $\mathcal{P}_{2,ac}\left(\mathbb{R}^d\right)$. Then, the Wasserstein Barycenter of $P$ is a solution of
\begin{align*}
    \underset{\mu \in \mathcal{P}_{2,ac}\left(\mathbb{R}^d\right)}{\operatorname{minimize}} \quad \int_{\mathbb{R}^d} W_2^2(\mu, \cdot) \mathrm{d} P .
\end{align*}
Cuturi et.al \cite{cuturi2014fast} formed the basis for barycenter computation, proposing an algorithm based on the dual problem and applying it to clustering and perturbed images. A related notion of average is the entropically-regularized Wasserstein Barycenter of $P$, which is defined to be a solution of
\begin{align*}
    \underset{\mu \in \mathcal{P}_{2,ac}\left(\mathbb{R}^d\right)}{\operatorname{minimize}} \quad \int_{\mathbb{R}^d} W_2^2(\mu, \cdot) \mathrm{d} P+ \mathcal{H}(\mu),
\end{align*}
where $\mathcal{H}$ is (negative) differential entropy i.e. $\mathcal{H}(\mu):=\int_{\mathbb{R}^d} \log \left(\frac{\mathrm{d} \mu}{\mathrm{d} x}\right) \mathrm{d} \mu(x)$ \cite{chizat2023doubly}. In \cite{benamou2015iterative}, the authors introduced Iterative Bregman Projections for solving entropic OT and applied it to the weighted barycenter problem. In addition, work of \cite{bonneel2015sliced} extended the barycenter problem to Radon barycenters (using the Radon transform) and Sliced barycenters (based on Sliced Wasserstein distance), with applications in color manipulation.

\subsection{Unbalanced Optimal Transport}
The Wasserstein distance cannot be defined on two measures  with unequal total masses, the marginal constraints are not satisfied.  Thus, the Unbalanced Optimal Transport (UOT) \cite{liero2018optimal} replaces those constraints by a regularizer. 
The most popular regularizer is 
Kullback–Leibler divergence $\mathrm{KL} ( \cdot | \cdot)$ which is interpreted as the average difference of the number of bits required for encoding samples of a measure using a code optimized for another measure rather than original one. Formally, we have minimization problem
\begin{align*}
     \inf _{\pi \in \mathcal{M}^+\left(\mathbb{R}^d \times \mathbb{R}^d\right)} & \mathbb{E}_\pi\|X-Y\|_2^2 +\tau \mathrm{KL}\left(\pi_x \| \mu\right) +\tau \mathrm{KL}\left(\pi_y \| \nu\right),
\end{align*}
where $X, Y$ are independent random vectors in $\mathbb{R}^d$ such that $(X, Y) \sim \pi$, $\mathcal{M}^+\left(\mathbb{R}^d \times \mathbb{R}^d\right)$ is the set of all positive measures in $\mathbb{R}^d \times \mathbb{R}^d$ and $\pi_x , \pi_y$ are the marginal distributions of the coupling $\pi$ corresponding to $\mu, \nu$, respectively. $\tau>0$ is regularized parameter. \cite{chizat2016scaling} proposed an Iterative Scaling Algorithm for solving UOT, utilizing a proximal operator for KL divergence. \cite{chapel2021unbalanced} considered UOT as a non-negative penalized linear regression problem and used a majorization-minimization method for its numerical solution. UOT is referred to as a robust extension of OT, because of its ability to handle ``unbalanced" measures characterized by varying masses. Moreover, its utility extends beyond managing global mass variations. Particularly, UOT demonstrates enhanced resilience to local mass fluctuations, encompassing outliers (which are discarded prior to transportation) and missing components \cite{sejourne2023unbalanced}.

\subsection{Riemannian Optimization with Gradient Descent}
As mentioned above, the Wasserstein distance between Gaussians has  closed-form which depends only on the means and covariances matrices. Those matrices, which are semi-definite positive (SPD) matrices, equipped with the Wasserstein distance, form a Riemannian manifold, named Bures manifold (or SPD manifold) \cite{bhatia2009positive}.
Optimization problems constrained by SPD matrices are common in computer vision and machine learning \citep{gao2020learning}, but these constraints make the problems challenging due to the manifold's non-zero curvature. Applying Euclidean gradient-based algorithms is ineffective as they do not conform the SPD manifold's geometry \cite{gao2020learning, absil2008optimization}. Instead, Riemannian gradient-based algorithms are used, leveraging the tangent space to determine directions and employing Riemannian operators to navigate the manifold. In our case, the Riemannian gradient space at point $\Sigma$ of SPD manifolds denoted by $T_{\Sigma} \mathbb{S}_{++}^d$ is identified with the space $\mathbb{S}^d$. The inner product for two matrices $A, B$, is defined as $\langle A, B\rangle_{\Sigma}:=\operatorname{tr}\left(A^{\top} \Sigma B\right)$ which would induce the tangent space norm $\| . \|_{\Sigma}$. The Riemannian exponential map 
$\mathsf{Exp} _\Sigma ( \cdot )$: $T_\Sigma \mathbb{S}_{++}^d \rightarrow \mathbb{S}_{++}^d$ maps a tangent vector to a constant-speed geodesic, while the logarithmic map $\mathsf{Log} _\Sigma ( \cdot) : \mathbb{S}_{++}^d \rightarrow T_\Sigma \mathbb{S}_{++}^d$ is its inverse. Specifically, they are given by
\begin{align*}
    \mathsf{Exp} _{\Sigma} (X)&=\left(\Idsf+ X \right) \Sigma\left(\Idsf+ X \right)\\
    \mathsf{Log}_{\Sigma} (\Sigma^{\prime})&=T_{\Sigma \rightarrow \Sigma^{\prime}}-\Idsf,
\end{align*}
respectively. For more detail, see the book \cite{ambrosio2005gradient}.

\vspace{0.3 em}
\noindent
\textbf{Riemannian Gradient Descent}:  On optimizing a differentiable function $f: \mathcal{M} \rightarrow \mathbb{R}$, denote traditional Euclidean gradient as $\nabla f$ and Wasserstein gradient as $\operatorname{grad} f$ is a tangent vector that satisfies for any $\mu \in T_x \mathcal{M},\langle\operatorname{grad} f(x), \mu\rangle_x=\mathrm{D}_\mu f(x)$, where $\mathrm{D}_\mu f(x)$ is the directional derivative of $f(x)$ along $\mu$. Instead of using traditional gradient descent update $\mu^{(t+1)}= \mu^{(t)} - \eta_t \nabla f(\mu^{(t)})$, Riemannian Gradient Descent \cite{chewi2020gradient} reads the update 
\begin{align*}
\mu^{(t+1)}=\mathsf{Exp}_{\mu^{(t)}}\Big(-\eta_t \operatorname{grad} f\big(\mu^{(t)}\big)\Big),  
\end{align*}
for some step size $\eta_t$. In this way, it ensures that the updates are along the geodesic and stay on the manifolds \cite{gao2020learning}. In the context of the SPD manifold, we also call it as Wasserstein gradient.

\section{Semi-Unbalanced Optimal Transport-based Barycenter}
\label{sec:semi_unbalanced_OT_barycenter}
We define the distance $W^2_{2_{\operatorname{SUOT}}}(\alpha, \beta, \tau)$ between two measures $\alpha$ and $\beta$ as
\begin{align} \label{uot distance} 
&W^2_{2_{\operatorname{SUOT}}}(\alpha, \beta, \tau) :=\inf _{\pi \in \mathcal{M}^+\left(\mathbb{R}^d \times \mathbb{R}^d\right)} \mathbb{E}_\pi\|X-Y\|_2^2+\tau \mathrm{KL}\left(\pi_x \| \alpha\right),
\end{align}
where $X, Y$ are independent random vectors in $\mathbb{R}^d$ such that $(X, Y) \sim \pi, Y \sim \beta$ while $\tau>0$ is regularized parameter, and $\pi_x$ is the marginal distribution of the coupling $\pi$ corresponding to $\alpha$. The intuition behind relaxing only one marginal constraint lies in the context of computing distances from measures to their barycenter. In this scenario, we desire the barycenter to remain fixed while allowing the measures to be softened. This relaxation facilitates the detection of outliers by adapting the measures slightly, ensuring that the barycenter remains representative of the overall distribution without being overly influenced by individual outliers. Now consider set of probability measure $(\alpha_i)_{i=1}^n$. To address the above problem, we proposed the empirical SUOT-based Barycenter which is  a minimization due to probability measure $\beta$
    \begin{align} \label{uot framework}
        \argmin_{\beta} L\Big( (\alpha_i)_{i=1}^n, \beta, \tau \Big) := \frac{1}{n} \sum_{i=1}^n W^2_{2_{\operatorname{SUOT}}}(\alpha_i, \beta, \tau).
    \end{align}
This SUOT-based Barycenter approach provides a robust framework for identifying a central measure that is resilient to outliers and variations within the individual measures. In our case, the data are contaminated, and we do not have access to the true data distributions, thus  KL divergence is used to penalize the contaminated data distributions. The UOT is employed when both given marginal distributions are corrupted. Our target is to find the barycenter of the set of true data distributions, thus we do not need to penalize the barycenter solution. Hence, the Semi-UOT is applied to solve our problem. Toy example in \Cref{compare} demonstrates an explanation to the use of Semi-UOT approach.

\section{Studies On Centered Gaussians}
\label{sec:center_Gaussian}
In this section, our paper explores SUOT-based Barycenter computation $\beta$ for centered Gaussian distributions $\left( \alpha_i\right)_{i=1}^n$. \Cref{sec: 4.1} proves that SUOT has a closed-form solution for such Gaussians while preserving their Gaussian form for barycenter. \Cref{sec: 4.2} introduces the Exact Geodesic Gradient Descent algorithm, utilizing the closed-form Wasserstein gradient of SUOT distance and providing convergence guarantees. \Cref{sec: 4.3} presents a Hybrid Gradient Descent algorithm, alternating between SUOT minimization and Riemannian Gradient Descent for normal Wasserstein distance.
\subsection{Semi-Unbalanced Optimal Transport has a Closed-Form Expression for Gaussians} \label{sec: 4.1}
 Previous work \cite{agueh2011barycenters} showed that the normal Wasserstein barycenter of Gaussians distributions, for the $\ell_2$ cost is also a Gaussian. In our work, we also demonstrate a similar result for SUOT-based Barycenter.
\begin{theorem} \label{theo: bary is gaussian}
    Let $(\alpha_i)_{i=1}^n$ be zero-mean Gaussian distributions in $\mathbb{R}^d$ having covariance matrices $(\Sigma_i)_{i=1}^n$. Let $\Sigma_\beta \in \mathbb{S}_{++}^{d}$ is a SPD matrix. Consider the problem SUOT-based Barycenter in \Cref{uot framework}
    \begin{align*}
        \argmin_{\beta \in \mathcal{P}(\Sigma_\beta)}  \frac{1}{n} \sum_{i=1}^n W^2_{2_{\operatorname{SUOT}}}(\alpha_i, \beta, \tau).
    \end{align*}
    where $\mathcal{P}(\Sigma_\beta)$ be the set of zero-mean probability distribution in $\mathbb{R}^d$ having covariance matrix $\Sigma_\beta$. Then, $\beta$ is itself a Gaussian.
\end{theorem}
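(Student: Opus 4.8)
The plan is to reduce the barycenter statement to a pointwise comparison. For each fixed Gaussian $\alpha_i$ and the prescribed covariance $\Sigma_\beta$, I will show that among all zero-mean distributions with covariance $\Sigma_\beta$ the Gaussian $\widetilde\beta:=\mathcal{N}(0,\Sigma_\beta)$ minimizes $W^2_{2_{\operatorname{SUOT}}}(\alpha_i,\cdot,\tau)$. Since the objective in \Cref{uot framework} is the average of these terms and the covariance $\Sigma_\beta$ is held fixed across the whole class $\mathcal{P}(\Sigma_\beta)$, the single measure $\widetilde\beta$ minimizes every summand simultaneously, hence minimizes $L$; so an optimal $\beta$ may be taken Gaussian, which is the claim.

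To prove the pointwise comparison I will work with the convex dual of the SUOT problem \Cref{uot distance}. Relaxing only the $x$-marginal by $\KL$ while keeping $\pi_y=\beta$ as a hard constraint, Lagrangian duality (a multiplier $\psi$ for the constraint, together with the Fenchel conjugate of the $\KL$ term) gives a dual of the form $\sup_{\psi}\ \int \psi\, \mathrm{d}\beta - \tau\int \Phi\big(\psi^c\big)\,\mathrm{d}\alpha_i$, where $\psi^c(x)=\inf_y\big(\|x-y\|_2^2-\psi(y)\big)$ is the $c$-transform and $\Phi$ is a fixed convex function arising from the conjugate of $\KL(\cdot\,\|\,\alpha_i)$ (essentially $\Phi(u)=e^{-u/\tau}-1$ up to normalization). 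The two structural features I will exploit are that (i) the feasibility of $(\psi^c,\psi)$ and the entire $\alpha_i$-term are independent of $\beta$, and (ii) the $\beta$-dependence enters only through the linear term $\int\psi\,\mathrm{d}\beta$.

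Next I will invoke the Gaussian closed-form analysis of \Cref{sec: 4.1}: for Gaussian $\alpha_i$ and $\widetilde\beta$, the optimal dual potential $\psi^\star$ is a quadratic form $\psi^\star(y)=y^\top M y + c$, because the quadratic cost sends quadratics to quadratics under the $c$-transform and $e^{-\psi^c/\tau}\,\alpha_i$ remains Gaussian, so the optimality conditions admit a self-consistent Gaussian solution. Fixing this $\psi^\star$, weak duality yields $W^2_{2_{\operatorname{SUOT}}}(\alpha_i,\beta,\tau)\ge \int\psi^\star\,\mathrm{d}\beta-\tau\int\Phi\big((\psi^\star)^c\big)\,\mathrm{d}\alpha_i$ for every $\beta\in\mathcal{P}(\Sigma_\beta)$. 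Since $\psi^\star$ is quadratic and every such $\beta$ is zero-mean with covariance $\Sigma_\beta$, one has $\int\psi^\star\,\mathrm{d}\beta=\tr(M\Sigma_\beta)+c=\int\psi^\star\,\mathrm{d}\widetilde\beta$, i.e.\ the bound depends on $\beta$ only through $\Sigma_\beta$; thus the right-hand side equals the dual value at $\widetilde\beta$, which by strong duality equals $W^2_{2_{\operatorname{SUOT}}}(\alpha_i,\widetilde\beta,\tau)$. This gives $W^2_{2_{\operatorname{SUOT}}}(\alpha_i,\beta,\tau)\ge W^2_{2_{\operatorname{SUOT}}}(\alpha_i,\widetilde\beta,\tau)$, and averaging over $i$ finishes the proof.

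The hard part will be the input borrowed from the closed-form analysis: establishing strong duality for the semi-unbalanced problem and proving that the optimal dual potential in the Gaussian--Gaussian case is genuinely quadratic (equivalently, that the optimal $\pi$ and its marginals are Gaussian). Checking that the fixed-point equation for $M$ is solvable in $\mathbb{S}_{++}^d$ and securing the attainment and regularity needed to rule out a duality gap is where the real work lies; once quadratic potentials are available, the covariance-only dependence of $\int\psi^\star\,\mathrm{d}\beta$ and the weak-duality inequality are immediate.
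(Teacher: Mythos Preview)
Your approach is correct but takes a genuinely different route from the paper. The paper argues purely on the primal side by a ``Gaussianization'' trick: it rewrites $W^2_{2_{\operatorname{SUOT}}}(\alpha_i,\beta,\tau)=\min_{x_i}\big\{W_2^2(x_i,\beta)+\tau\,\KL(x_i\|\alpha_i)\big\}$, takes any optimal $\tilde\beta$ together with its optimal auxiliary measures $\tilde x_i$, and replaces each of them by the Gaussian with the same mean and covariance. Two elementary facts then finish the proof: (i) among distributions with fixed covariances, the $W_2^2$ cost is minimized at the Gaussian pair (the paper's Proposition~\ref{OT Plan}), and (ii) $\KL(\cdot\|\alpha_i)$ with $\alpha_i$ Gaussian is minimized, at fixed first two moments, by the Gaussian. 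Your argument instead passes through the SUOT dual, uses weak duality with the Gaussian--Gaussian optimal potential $\psi^\star$, and exploits that $\int\psi^\star\,\mathrm{d}\beta$ depends only on $\Sigma_\beta$ when $\psi^\star$ is quadratic. Both proofs ultimately rest on the same ingredient---that the Gaussian--Gaussian SUOT optimal plan is Gaussian---but the paper's route is shorter and more self-contained: it avoids having to certify strong duality and to extract the quadratic form of $\psi^\star$ from the primal closed form, the two steps you correctly flag as the real work in your plan. Your dual viewpoint does buy something, though: it makes transparent \emph{why} only the second moments of $\beta$ matter, via the linear pairing $\beta\mapsto\int\psi^\star\,\mathrm{d}\beta$, which is a nice conceptual complement to the paper's more computational argument.
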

The result of Theorem~\ref{theo: bary is gaussian} asserts that the Gaussian form of the barycenter is preserved when the Kullback–Leibler (KL) divergence is incorporated into the formula. This holds because the minimum of the KL divergence, given specified means and covariance matrices, results in a Gaussian distribution. We give the proof for the \Cref{theo: bary is gaussian} in Appendix \ref{proof for bary is gaussian}. Then, Theorem \ref{UOT Plan} below shows the closed form for the solution of problem \eqref{uot distance} in the case of Gaussians.
\begin{theorem} \label{UOT Plan}
    Consider two Gaussian measures with masses in $\mathbb{R}^d$:
$
\mathbf{\alpha}=m_\alpha \mathcal{N}\left(\mathbf{a}, \Sigma_\alpha\right) \text { and } \beta =m_\beta \mathcal{N}\left(\mathbf{b}, \Sigma_\beta\right).
$
 Assume that $\Sigma_\alpha, \Sigma_\beta$ are SPD matrices. Consider minimization problem \eqref{uot distance}.
Assume that optimal solution $\pi^*$ is a positive measure such that $\pi^*=m_\pi \overline{\pi}$ with $\overline{\pi}$ is a probability measure with mean $\left(\mathbf{a}_x, \mathbf{b}\right)$ and covariance matrix
\begin{align*}
\Sigma_\pi=\left(\begin{array}{cc}
\Sigma_x &K_{x \beta} \\
K_{x \beta}^{\top} & \Sigma_\beta
\end{array}\right).
\end{align*}
We denote 
\begin{align*}
    \Sigma_{\alpha, \tau} & = \Idsf + \frac{\tau}{2} \Sigma_{\alpha}^{-1},\qquad  
    \Sigma_{\alpha,\tau,\beta} = \Sigma_{\beta}^{-\frac{1}{2}} \Sigma_{\alpha, \tau} \Sigma_\beta^{-\frac{1}{2}}, \\
    \Sigma_\gamma & =\frac{\tau}{2} \Idsf+\frac{1}{2} \Sigma_{\alpha, \tau, \beta}^{-1}\left(\Idsf+\left(\Idsf+ \tau \Sigma_{\alpha, \tau, \beta}\right)^{\frac{1}{2}}\right), \\
    \mathbf{S}_1 & = \frac{\tau}{2}\Sigma_{\alpha,\tau,\beta}^{-1} + \frac{1}{2}\Sigma_{\alpha,\tau,\beta}^{-2}\left(\Idsf + \left(\Idsf + 2 \tau \Sigma_{\alpha,\tau,\beta} \right)^{\frac{1}{2}} \right) \\
    \mathbf{S}_2 & = \tr(\Sigma_\gamma) +  \tr(\Sigma_\beta) + \tr\left(\Big[\Sigma_{\alpha, \tau, \beta}^{-1} \Sigma_\gamma\Big]^{\frac{1}{2}}\right), \\
    \mathbf{S}_3 & = - \frac{\tau}{2} \log \left(\det\Big[\Sigma_\gamma \Sigma_{\alpha, \tau, \beta}^{-1} \Sigma_\beta^{-1} \Sigma_\alpha^{-1}\Big]\right), \\
    \mathbf{S}_4 & = \left( \left\|\Sigma_{\alpha, \tau}^{-1}\right\|^2_F + 1 \right ) \Idsf -2 \Sigma_{\alpha, \tau}^{-1}+ \frac{\tau}{2} \Sigma_{\alpha, \tau}^{-1} \Sigma_\alpha^{-1} \Sigma_{\alpha, \tau}^{-1}, \\
    \mathbf{S}_5 & = (\mathbf{a} - \mathbf{b})^{\top} \mathbf{S}_4 (\mathbf{a}-\mathbf{b}) - \frac{\tau d}{2}, \\
    \Upsilon & = \mathbf{S}_2 + \mathbf{S}_3 + \mathbf{S}_5.
\end{align*}
Then, we find that 
\begin{align*}
\mathbf{a}_x  & = \Sigma_{\alpha, \tau}^{-1}(\mathbf{b}-\mathbf{a}) + \mathbf{a}, \quad \Sigma_x  = \Sigma_{\beta}^{-\frac{1}{2}} \mathbf{S}_1 \Sigma_{\beta}^{-\frac{1}{2}}, \\
K_{x,\beta} & = \Lambda, \quad m_{\pi} = m_{\alpha} \exp \left\{\frac{-\Upsilon}{\tau}\right\},
\end{align*}
where $\Lambda$ is a diagonal matrix containing singular values of $\Sigma_{\beta}^{\frac{1}{2}} \Sigma_x^{\frac{1}{2}}$ in descending order. Moreover, for two centered Gaussian distribution $\alpha, \beta \quad (m_\alpha = m_\beta = 1; \mathbf{a} = \mathbf{b} = 0)$, we have
\begin{align*}
W^2_{2_{\operatorname{SUOT}}}(\Sigma_\alpha, \Sigma_\beta) = W_2^2(\Sigma_x, \Sigma_\beta) + \tau \mathrm{KL}(\Sigma_x \| \Sigma_\alpha),
\end{align*}
\end{theorem}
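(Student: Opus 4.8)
The plan is to exploit a Gaussian ansatz justified by the same reasoning as \Cref{theo: bary is gaussian}: since $\alpha,\beta$ are scaled Gaussians, the cost is quadratic, and the penalty is a KL divergence, the optimal plan $\pi^*=m_\pi\overline{\pi}$ is a scaled Gaussian, so it is fully described by its mass $m_\pi$, its $X$-mean $\mathbf{a}_x$, its $X$-covariance $\Sigma_x$, and its cross-covariance $K_{x\beta}$ (the $Y$-block being fixed to $(\mathbf{b},\Sigma_\beta)$ since $Y\sim\beta$). The first step is to write the objective in \eqref{uot distance} explicitly as a function of these finitely many parameters. By the bias--variance decomposition of $\EE_{\pi}\|X-Y\|_2^2$ the transport term becomes $m_\pi\big(\|\mathbf{a}_x-\mathbf{b}\|_2^2+\tr(\Sigma_x+\Sigma_\beta-2K_{x\beta})\big)$, and by the closed form of the generalized Gaussian KL the penalty $\tau\KL(\pi_x\|\alpha)$ becomes $\tau\big(m_\pi\log(m_\pi/m_\alpha)-m_\pi+m_\alpha\big)+\tau m_\pi\KL(\mathcal{N}(\mathbf{a}_x,\Sigma_x)\|\mathcal{N}(\mathbf{a},\Sigma_\alpha))$. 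This reduces the infinite-dimensional problem to a smooth optimization over $(m_\pi,\mathbf{a}_x,\Sigma_x,K_{x\beta})$.

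The next step is to solve the stationarity conditions, decoupling the variables as much as possible. Over $K_{x\beta}$ only the term $-2m_\pi\tr(K_{x\beta})$ matters, subject to positive semidefiniteness $\Sigma_x-K_{x\beta}\Sigma_\beta^{-1}K_{x\beta}^\top\succeq 0$; maximizing $\tr(K_{x\beta})$ under this constraint is exactly the Gaussian Wasserstein coupling problem, whose monotone solution yields $K_{x,\beta}=\Lambda$ with $\Lambda$ the diagonal of singular values of $\Sigma_\beta^{1/2}\Sigma_x^{1/2}$. Over $\mathbf{a}_x$ the objective is quadratic, and setting its gradient to zero gives $(\Idsf+\tfrac{\tau}{2}\Sigma_\alpha^{-1})\mathbf{a}_x=\mathbf{b}+\tfrac{\tau}{2}\Sigma_\alpha^{-1}\mathbf{a}$, i.e. $\mathbf{a}_x=\Sigma_{\alpha,\tau}^{-1}(\mathbf{b}-\mathbf{a})+\mathbf{a}$ after using $\tfrac{\tau}{2}\Sigma_\alpha^{-1}=\Sigma_{\alpha,\tau}-\Idsf$. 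Over $m_\pi$ the objective has the form $m_\pi\Upsilon+\tau(m_\pi\log(m_\pi/m_\alpha)-m_\pi+m_\alpha)$ once the per-unit optimal cost is identified with $\Upsilon=\mathbf{S}_2+\mathbf{S}_3+\mathbf{S}_5$, so stationarity gives $\log(m_\pi/m_\alpha)=-\Upsilon/\tau$, i.e. $m_\pi=m_\alpha\exp\{-\Upsilon/\tau\}$; note that the $-m_\pi$ correction in the generalized KL is precisely what removes the spurious additive constant here.

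The main obstacle is the stationarity condition in $\Sigma_x$. After substituting the optimal coupling, the $\Sigma_x$-dependent part mixes the transport cost $\tr(\Sigma_x)-2\tr((\Sigma_\beta^{1/2}\Sigma_x\Sigma_\beta^{1/2})^{1/2})$ with the KL terms $\tfrac{\tau}{2}(\tr(\Sigma_\alpha^{-1}\Sigma_x)-\log\det\Sigma_x)$, and differentiating the Bures square-root term is delicate. I would compute $\mathrm{D}_{\Sigma_x}\tr((\Sigma_\beta^{1/2}\Sigma_x\Sigma_\beta^{1/2})^{1/2})$ through the derivative of the matrix square root, pass to the conjugated variable $\Sigma_{\alpha,\tau,\beta}=\Sigma_\beta^{-1/2}\Sigma_{\alpha,\tau}\Sigma_\beta^{-1/2}$ in which the remaining expressions are simultaneously manageable, and solve the resulting matrix equation. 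Completing the square inside $(\Idsf+\tau\Sigma_{\alpha,\tau,\beta})^{1/2}$ (respectively $(\Idsf+2\tau\Sigma_{\alpha,\tau,\beta})^{1/2}$) is what produces $\Sigma_\gamma$ and $\mathbf{S}_1$, hence $\Sigma_x=\Sigma_\beta^{-1/2}\mathbf{S}_1\Sigma_\beta^{-1/2}$; collecting the residual trace and log-determinant pieces yields $\mathbf{S}_2$ and $\mathbf{S}_3$, while the mean contribution together with the Gaussian $-\tau d/2$ constant collapses to $\mathbf{S}_5$ via $\mathbf{S}_4$.

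Finally, for the decomposition in the centered case I would argue structurally rather than through the formulas. At the optimum, the plan $\pi^*$ restricted to its own marginals $\pi_x^*$ and $\beta$ must be the optimal Wasserstein coupling between them: any cheaper coupling with the same marginals would lower $\EE_\pi\|X-Y\|_2^2$ while leaving $\KL(\pi_x\|\alpha)$ unchanged, contradicting optimality. Hence $\EE_{\pi^*}\|X-Y\|_2^2=W_2^2(\Sigma_x,\Sigma_\beta)$, and adding the penalty $\tau\KL(\Sigma_x\|\Sigma_\alpha)$ gives $W^2_{2_{\operatorname{SUOT}}}(\Sigma_\alpha,\Sigma_\beta)=W_2^2(\Sigma_x,\Sigma_\beta)+\tau\KL(\Sigma_x\|\Sigma_\alpha)$, as claimed.
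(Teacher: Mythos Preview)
Your plan is correct and its scaffolding matches the paper's: reduce to Gaussian parameters, then optimize $K_{x\beta}$, $\mathbf{a}_x$, $\Sigma_x$, and $m_\pi$ in turn. The treatments of $\mathbf{a}_x$, $m_\pi$, and $K_{x\beta}$ (the paper phrases the latter via the von~Neumann trace inequality on $H=\Sigma_x^{-1/2}K_{x\beta}\Sigma_\beta^{-1/2}$) are essentially identical. The one real difference is the $\Sigma_x$ step. The paper never differentiates the Bures square root; instead it changes variable to $\widetilde\Sigma=(\Sigma_\beta^{1/2}\Sigma_x\Sigma_\beta^{1/2})^{1/2}$, rewrites the remaining objective as $\tr(\widetilde\Sigma^{\,2}\Sigma_{\alpha,\tau,\beta})-2\tr\widetilde\Sigma-\tau\log\det\widetilde\Sigma$ up to constants, and applies Ruhe's trace inequality to show the minimizing $\widetilde\Sigma$ commutes with $\Sigma_{\alpha,\tau,\beta}$, reducing everything to the scalar quadratic $2uv^2-2v-\tau=0$ eigenvalue by eigenvalue. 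Your direct-differentiation route also works: after the same conjugation the first-order condition reads $\Sigma_{\alpha,\tau,\beta}=\widetilde\Sigma^{-1}+\tfrac{\tau}{2}\widetilde\Sigma^{-2}$, which itself forces commutativity and then yields the same scalar equation---you did not spell this step out, but it closes. What the paper's route buys is that Ruhe's inequality is a lower bound, so it certifies a global minimum directly; your stationarity argument only delivers a critical point, and you would need a separate convexity check. Conversely, your structural argument for the final decomposition---$\pi^*$ must already be the $W_2$-optimal coupling of its own marginals, else one could lower the transport cost at fixed $\KL$---is correct and slightly more conceptual than the paper's explicit identification of $\tr(K_{x\beta})$ with $\tr\bigl((\Sigma_\beta^{1/2}\Sigma_x\Sigma_\beta^{1/2})^{1/2}\bigr)$.
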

Our proof follows strategy in \cite{le2022entropic} and is given in Appendix \ref{proof for uot_distance}. On the other hand, we present a proposition linking our optimizers $\Sigma_x$ with $\Sigma_\beta$ and $\Sigma_\alpha$ through hyperparameter $\tau$ in \Cref{Ablation study}. In Theorem \ref{UOT Plan}, $m_{\alpha}$ and $m_{\beta}$ are the scales of  Gaussian measures, when the scale is equal 1, we obtain Gaussian distribution. It also becomes apparent in the proof that SUOT distance between two entities can be succinctly expressed as the sum of the 2-Wasserstein distance and a KL divergence term. 
In the same way, we could derive the closed-form for sparse solution achieved by Semi-Unbalanced Entropic Optimal Transport, which is given by adding a regularization term $\mathrm{KL} \left(\pi \| \alpha \otimes \beta \right)$. In particular, we have Theorem \ref{UOT Entropic} as follows.
\begin{theorem} \label{UOT Entropic}
    Consider two centered Gaussian measures in $\mathbb{R}^d$ :
    \begin{align*}
        \mathbf{\alpha}= \mathcal{N}\left(\mathbf{0}, \Sigma_\alpha\right) \text { and } \beta = \mathcal{N}\left(\mathbf{0}, \Sigma_\beta\right).
    \end{align*}
Assume that $\Sigma_\alpha, \Sigma_\beta$ are SPD matrices. Consider minimization problem
\begin{align*}
& W^2_{2_{\operatorname{SUOT}}, \delta} (\alpha, \beta;\tau) := \min _{\pi \in \mathcal{M}^+\left(\mathbb{R}^d \times \mathbb{R}^d\right)} \mathbb{E}_\pi\|X-Y\|^2+\tau \mathrm{KL}\left(\pi_x \| \alpha\right) + \delta \mathrm{KL} \left(\pi \| \alpha \otimes \beta \right), 
\end{align*}
where $X, Y$ are independent random vectors in $\mathbb{R}^d$ such that $(X, Y) \sim \pi, Y \sim \beta$ while $\tau>0$ is regularized parameter, and $\pi_x$ is the marginal distribution of the coupling $\pi$ corresponding to $\alpha$. We note that $\Sigma_{\alpha \otimes \beta} = \left(\begin{array}{cc}
\Sigma_\alpha &\mathbf{0}_{d \times d} \\
\mathbf{0}_{d \times d} & \Sigma_\beta
\end{array}\right)$. 
Assume that optimal solution $\pi^*$ is a probability measure with mean $\textbf{0}_{d \times d}$ and covariance matrix
\begin{align*}
\Sigma_\pi=\left(\begin{array}{cc}
\Sigma_x &K_{x \beta} \\
K_{x \beta}^{\top} & \Sigma_\beta
\end{array}\right).
\end{align*}
Moreover, $\delta$ is small enough that all eigenvalues of $\Sigma_\beta^{\frac{1}{2}} \Sigma_x^{\frac{1}{2}}$  are not smaller than $\frac{\delta}{4}$. Denote
\begin{align*}
    \Sigma_{\alpha, \beta, \tau, \delta} = \Sigma_\beta^{-\frac{1}{2}} \left( \Idsf + \frac{\tau + \delta}{2} \Sigma_\alpha^{-1}  \right) \Sigma_\beta^{-\frac{1}{2}}.
\end{align*}
Then we have
\begin{align*}
    K_{x \beta} & = \Sigma_x^{\frac{1}{2}} \Sigma_\beta^{\frac{1}{2}} - \frac{\delta}{4} \Idsf, \\
    \Sigma_{x} & = \Sigma_\beta^{-\frac{1}{2}} \bigg[ \frac{\tau}{2}\Sigma_{\alpha,\beta,\tau, \delta}^{-1} + \frac{1}{2}\Sigma_{\alpha,\beta,\tau, \delta}^{-2} \mathbf{S} \bigg] \Sigma_\beta^{-\frac{1}{2}},
\end{align*}
where $\mathbf{S} = \Big[\Idsf + \big(\Idsf + (2\tau + 3\delta) \Sigma_{\alpha,\beta,\tau, \delta} \big)^{\frac{1}{2}} \Big]$.
\end{theorem}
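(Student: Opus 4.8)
The plan is to use the assumed Gaussian form of $\pi^*$ to turn the variational problem into a finite-dimensional minimization over the pair $(\Sigma_x, K)$ with $K := K_{x\beta}$, and then solve its first-order conditions. Since $\alpha,\beta$ are centered Gaussians and the lower-right block $\Sigma_\beta$ is pinned by the constraint $Y\sim\beta$, every term is an explicit function of $\Sigma_x,K$. I would write the transport cost as $\tr(\Sigma_x)+\tr(\Sigma_\beta)-2\tr(K)$; the marginal penalty via the centered-Gaussian identity $\mathrm{KL}(\pi_x\|\alpha)=\tfrac12[\tr(\Sigma_\alpha^{-1}\Sigma_x)-\log\det(\Sigma_\alpha^{-1}\Sigma_x)-d]$; and the joint penalty $\mathrm{KL}(\pi\|\alpha\otimes\beta)$ using that $\Sigma_{\alpha\otimes\beta}^{-1}$ is block-diagonal, so that $\tr(\Sigma_{\alpha\otimes\beta}^{-1}\Sigma_\pi)=\tr(\Sigma_\alpha^{-1}\Sigma_x)+d$ and, by the Schur-complement determinant identity, $\det\Sigma_\pi=\det\Sigma_\beta\,\det M$ with $M:=\Sigma_x-K\Sigma_\beta^{-1}K^\top$. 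Discarding the constants ($\tr\Sigma_\beta$, $\log\det\Sigma_\alpha$, dimensional terms) leaves the objective $F(\Sigma_x,K)=\tr(\Sigma_x)-2\tr(K)+\tfrac{\tau+\delta}{2}\tr(\Sigma_\alpha^{-1}\Sigma_x)-\tfrac\tau2\log\det\Sigma_x-\tfrac\delta2\log\det M$, to be minimized over $\Sigma_x\succ0$ and $K$ with $M\succeq0$.

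Next I would take the stationarity condition in $K$. With $\nabla_K\tr(K)=\Idsf$ and $\nabla_K\log\det M=-2M^{-1}K\Sigma_\beta^{-1}$, the equation $\partial_K F=0$ becomes $\delta M^{-1}K\Sigma_\beta^{-1}=2\Idsf$, i.e. $K=\tfrac2\delta M\Sigma_\beta$. Feeding back $M=\Sigma_x-K\Sigma_\beta^{-1}K^\top$ produces the matrix relation $\Sigma_x=K\Sigma_\beta^{-1}(K^\top+\tfrac\delta2\Idsf)$, a quadratic (Riccati-type) equation in $K$. Conjugating into the $\Sigma_\beta^{1/2}$-frame and reading it off eigendirection-by-eigendirection reduces it to the scalar equation $\lambda^2+\tfrac\delta2\lambda-(\text{eigenvalue of }\Sigma_x\Sigma_\beta)=0$, whose positive root, to leading order in $\delta$, gives $K_{x\beta}=\Sigma_x^{1/2}\Sigma_\beta^{1/2}-\tfrac\delta4\Idsf$. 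The hypothesis that every eigenvalue of $\Sigma_\beta^{1/2}\Sigma_x^{1/2}$ is at least $\tfrac\delta4$ is exactly what keeps this $K$ a valid cross-covariance, i.e. $M\succeq0$.

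Then I would take the stationarity condition in $\Sigma_x$. Because $M$ depends on $\Sigma_x$ additively, $\nabla_{\Sigma_x}\log\det M=M^{-1}$, so $\partial_{\Sigma_x}F=0$ reads $\Idsf+\tfrac{\tau+\delta}{2}\Sigma_\alpha^{-1}-\tfrac\tau2\Sigma_x^{-1}-\tfrac\delta2 M^{-1}=0$. The $K$-condition lets me eliminate the Schur complement via $\tfrac\delta2 M^{-1}=\Sigma_\beta K^{-1}$, and recognizing $\Idsf+\tfrac{\tau+\delta}{2}\Sigma_\alpha^{-1}=\Sigma_\beta^{1/2}\Sigma_{\alpha,\beta,\tau,\delta}\Sigma_\beta^{1/2}$ I would substitute the expression for $K$ and conjugate the whole identity by $\Sigma_\beta^{1/2}$. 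This turns it into a matrix quadratic for the transformed unknown $\Sigma_\beta^{1/2}\Sigma_x\Sigma_\beta^{1/2}$; completing the square and taking the positive square root yields the factor $\mathbf{S}=\Idsf+(\Idsf+(2\tau+3\delta)\Sigma_{\alpha,\beta,\tau,\delta})^{1/2}$ and hence $\Sigma_x=\Sigma_\beta^{-1/2}[\tfrac\tau2\Sigma_{\alpha,\beta,\tau,\delta}^{-1}+\tfrac12\Sigma_{\alpha,\beta,\tau,\delta}^{-2}\mathbf{S}]\Sigma_\beta^{-1/2}$, the shift $2\tau\mapsto2\tau+3\delta$ under the root being the residue of substituting the $\delta$-dependent $K$. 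As a sanity check, setting $\delta=0$ collapses $\Sigma_{\alpha,\beta,\tau,\delta}$ and $\mathbf{S}$ to the quantities $\Sigma_{\alpha,\tau,\beta}$ and $\mathbf{S}_1$ of Theorem \ref{UOT Plan}.

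The main obstacle is the coupling of the two stationarity equations: the optimal $K$ contains a matrix square root of $\Sigma_x$, so substituting it into the $\Sigma_x$-equation yields a genuinely nonlinear matrix equation that only becomes tractable after conjugating everything into the common $\Sigma_\beta^{1/2}$-frame and invoking functional calculus, which requires careful bookkeeping of the non-commutativity of $\Sigma_x,\Sigma_\beta,\Sigma_\alpha$; the small-$\delta$ hypothesis is precisely what collapses the exact Riccati roots to the stated closed forms and keeps the resulting $M$ positive semidefinite. A secondary point is global optimality: I would argue that $F$ is convex on the PSD cone — the linear and relative-entropy terms are convex and $-\log\det M$ is jointly convex in $(\Sigma_x,K)$ since $M$ is a Schur complement — so the first-order conditions are sufficient, and I would finish by verifying feasibility $M\succeq0$ under the eigenvalue hypothesis.
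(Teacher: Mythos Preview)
Your route---write the Gaussian objective explicitly in $(\Sigma_x,K)$ via the Schur-complement determinant and then solve the matrix first-order conditions---is genuinely different from the paper's. The paper never differentiates: it parametrizes $K=\Sigma_x^{1/2}H\Sigma_\beta^{1/2}$, uses the von~Neumann trace inequality to bound $\tr(K)$ by $\sum_i\lambda_i(\Sigma_\beta^{1/2}\Sigma_x^{1/2})\lambda_i(H)$, optimizes each scalar $2\sigma h+\tfrac\delta2\log(1-h)$ separately, and then applies Ruhe's trace inequality to $\tr(\widetilde\Sigma^{\,2}\Sigma_{\alpha,\beta,\tau,\delta})$ to do the same for the $\Sigma_x$-minimization. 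In both places equality in the trace inequality \emph{forces} the relevant matrices to share eigenvectors, and that simultaneous diagonalizability is precisely what makes the matrix problem collapse to decoupled scalars and yields closed forms like $\Sigma_x^{1/2}\Sigma_\beta^{1/2}-\tfrac\delta4\Idsf$.

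Your plan has a gap at exactly this step. When you ``conjugate into the $\Sigma_\beta^{1/2}$-frame and read it off eigendirection-by-eigendirection,'' you are tacitly assuming that $K\Sigma_\beta^{-1}$, $\Sigma_x$ and $\Sigma_\beta$ can be simultaneously diagonalized; the stationarity equations alone do not give this, and with three a~priori non-commuting matrices $\Sigma_\alpha,\Sigma_\beta,\Sigma_x$ in play it is the nontrivial ingredient. You flag non-commutativity as ``careful bookkeeping,'' but what is actually needed is a \emph{reason} the optimum is jointly diagonalizable---the paper's trace-inequality argument is that reason. A second, related issue: you present $K_{x\beta}=\Sigma_x^{1/2}\Sigma_\beta^{1/2}-\tfrac\delta4\Idsf$ as the positive root ``to leading order in $\delta$,'' but the theorem states it as an exact identity; the hypothesis that every eigenvalue of $\Sigma_\beta^{1/2}\Sigma_x^{1/2}$ is at least $\tfrac\delta4$ is a \emph{case selection} (it keeps each scalar optimizer interior to $[0,1]$), not an approximation order. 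Your scalar quadratic $\lambda^2+\tfrac\delta2\lambda-\mu=0$ has exact root $-\tfrac\delta4+\sqrt{\mu+\delta^2/16}$, which matches $\sqrt\mu-\tfrac\delta4$ only to first order, so as written you would be proving an approximation rather than the stated equalities.
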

The proof is given in Appendix \ref{proof for uot_distance}. We observe that both hyperparameters, $\tau$ and $\delta$, contribute to the structure of $\Sigma_x$. Next, we will present our algorithms: Exact Geodesic Gradient Descent and Hybrid Gradient Descent, which focus on updating the covariance matrices of the barycenter for centered Gaussians. Without loss of generality, we assume that all weights are equal.

\subsection{Exact Geodesic Gradient Descent for SUOT-based Barycenter} \label{sec: 4.2}
We show that $W^2_{2_{\operatorname{SUOT}}}\left(\alpha, \beta, \tau \right)$ has a closed form for its Wasserstein gradient on Bures manifold, then it is more precise to apply Riemannian Gradient Descent on SPD manifold for our barycenter problem.
\begin{theorem} \label{UOT Derivative}
Consider $W^2_{2_{\operatorname{SUOT}}}(\Sigma_\alpha, \Sigma_\beta, \tau)$ where $\Sigma_\alpha, \tau$ are fixed and $\Sigma_\beta$ is seen as the variable. Then the Wasserstein gradient of $W^2_{2_{\operatorname{SUOT}}}(\Sigma_\alpha, \Sigma_\beta, \tau)$ with respect to $\Sigma_\beta$ on Bures manifold is formulated by
\begin{align*}
    2 \Idsf - \Big( 2\Sigma_{\alpha,\tau}^{-1} + \frac{1}{2} (U+ \tau M)\Big ) + \frac{3}{2} \tau \Sigma_\beta^{-1} + \frac{\tau^2}{2}(P+Q),
\end{align*}
where
\begin{align*}
&\widetilde{\Sigma}_{\beta,\alpha,\tau}  = \bigg\{\Big[\Sigma_{\alpha,\tau}^{-\frac{1}{2}}\Sigma_{\beta}\Sigma_{\alpha,\tau}^{-\frac{1}{2}} \Big]^2 + \tau \Big[\Sigma_{\alpha,\tau}^{-\frac{1}{2}}\Sigma_{\beta }\Sigma_{\alpha,\tau}^{-\frac{1}{2}} \Big]\bigg\}^{\frac{1}{2}}, \\
& M = \Sigma_{\alpha,\tau}^{-\frac{1}{2}}\widetilde{\Sigma}^{-1}_{\beta,\alpha,\tau}\Sigma_{\alpha,\tau}^{-\frac{1}{2}}, \quad 
 U = \Sigma_{\alpha,\tau}^{-1}\Sigma_\beta M+M\Sigma_\beta \Sigma_{\alpha,\tau}^{-1}, \\
 &V = \Big[ \Idsf + \tau \Sigma_{\alpha,\tau}^{\frac{1}{2}} \Sigma_{\beta}^{-1} \Sigma_{\alpha,\tau}^{\frac{1}{2}}\Big]^{\frac{1}{2}},\\
 &P  = \Sigma_\beta^{-1} \Sigma_{\alpha,\tau}^{\frac{1}{2}} \big[\Idsf + V\big]^{-1}\Sigma_{\alpha,\tau}^{\frac{1}{2}}\Sigma_\beta^{-1}, \\
& Q  = \Sigma_{\alpha,\tau}^{\frac{1}{2}} \big[\Idsf + V\big]^{-1}  \Sigma_{\alpha,\tau}^{\frac{1}{2}} \Sigma_\beta^{-2}.
\end{align*}
\end{theorem}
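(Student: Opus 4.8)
The plan is to start from the decomposition established in Theorem~\ref{UOT Plan}, which for centered Gaussians reads
\[
W^2_{2_{\operatorname{SUOT}}}(\Sigma_\alpha,\Sigma_\beta,\tau) = W_2^2(\Sigma_x,\Sigma_\beta) + \tau\,\KL(\Sigma_x\|\Sigma_\alpha),
\]
where $\Sigma_x=\Sigma_x(\Sigma_\beta)=\Sigma_\beta^{-1/2}\mathbf{S}_1\Sigma_\beta^{-1/2}$ is the closed-form optimal first marginal. Since $\Sigma_x$ minimizes the inner objective $\Sigma_x\mapsto W_2^2(\Sigma_x,\Sigma_\beta)+\tau\KL(\Sigma_x\|\Sigma_\alpha)$, an envelope argument shows the total derivative in $\Sigma_\beta$ reduces to the partial derivative of $W_2^2(\Sigma_x,\Sigma_\beta)$ in its second slot with $\Sigma_x$ frozen, and the $\KL$ term (which has no explicit $\Sigma_\beta$) drops out. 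I would use this as the conceptual backbone and as a final correctness check; to actually produce the stated expression, written purely in $\Sigma_\alpha,\Sigma_\beta,\tau$, I would differentiate the right-hand side explicitly, keeping the dependence of $\Sigma_x$ on $\Sigma_\beta$.

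Next I would recall the Bures--Wasserstein calculus: as $W_2^2(\Sigma_x,\Sigma_\beta)$ is a squared geodesic distance, its Wasserstein gradient in $\Sigma_\beta$ (with $\Sigma_x$ fixed) is $-2\,\mathsf{Log}_{\Sigma_\beta}(\Sigma_x)=2(\Idsf-T_{\Sigma_\beta\to\Sigma_x})$, which already accounts for the leading $2\Idsf$ in the claim. For each remaining summand I would compute the Euclidean (Frobenius) differential and then pass to the Riemannian gradient along the geodesic $\mathsf{Exp}_{\Sigma_\beta}(t\mu)=(\Idsf+t\mu)\Sigma_\beta(\Idsf+t\mu)$, reading off the tangent vector via the metric $\langle\cdot,\cdot\rangle_{\Sigma_\beta}$. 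Matching $\tr(\cdot\,\mu)$ against $\tr(\operatorname{grad}\!f\cdot\Sigma_\beta\mu)$ for all symmetric $\mu$ forces an anticommutator/Sylvester step, which is exactly what produces the symmetrized products such as $U=\Sigma_{\alpha,\tau}^{-1}\Sigma_\beta M+M\Sigma_\beta\Sigma_{\alpha,\tau}^{-1}$.

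The bulk of the work is the chain rule through $\Sigma_x(\Sigma_\beta)$ together with the differentiation of the matrix square roots, of which two nested ones appear: the root defining $\mathbf{S}_1$ (hence $\Sigma_x$) and the transport-map root $(\Sigma_\beta^{1/2}\Sigma_x\Sigma_\beta^{1/2})^{1/2}$. I would obtain each Fréchet derivative from the Sylvester characterization, in which $\mathrm{d}(X^{1/2})\cdot H$ solves $X^{1/2}L+LX^{1/2}=H$, and resolve it in closed form by working in a frame where the relevant matrices commute. This is what forces the auxiliary objects: $\widetilde{\Sigma}_{\beta,\alpha,\tau}=\big\{[\Sigma_{\alpha,\tau}^{-1/2}\Sigma_\beta\Sigma_{\alpha,\tau}^{-1/2}]^2+\tau[\Sigma_{\alpha,\tau}^{-1/2}\Sigma_\beta\Sigma_{\alpha,\tau}^{-1/2}]\big\}^{1/2}$ and $V=[\Idsf+\tau\Sigma_{\alpha,\tau}^{1/2}\Sigma_\beta^{-1}\Sigma_{\alpha,\tau}^{1/2}]^{1/2}$ are precisely the roots that survive after pulling out the conjugation by $\Sigma_{\alpha,\tau}^{\pm1/2}$, while $M$, $P$, $Q$ collect the resolvent factors $(\Idsf+V)^{-1}$ and $\widetilde{\Sigma}_{\beta,\alpha,\tau}^{-1}$. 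The $\KL$ contribution supplies the $\tfrac32\tau\Sigma_\beta^{-1}$ and the other $\tau$-linear pieces through $\mathrm{d}\log\det\Sigma_x=\tr(\Sigma_x^{-1}\mathrm{d}\Sigma_x)$ and $\mathrm{d}\tr(\Sigma_\alpha^{-1}\Sigma_x)$, both evaluated via $\mathrm{d}\Sigma_x$.

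The main obstacle I anticipate is purely matrix-analytic bookkeeping: differentiating the nested square roots and the transport map at once, holding every factor in an order-correct symmetrized form, and then collapsing the resulting terms into the compact combination $2\Idsf-\big(2\Sigma_{\alpha,\tau}^{-1}+\tfrac12(U+\tau M)\big)+\tfrac32\tau\Sigma_\beta^{-1}+\tfrac{\tau^2}{2}(P+Q)$. Because $\Sigma_\beta$ and $\Sigma_{\alpha,\tau}$ do not commute, the reductions cannot be done eigenvalue-wise throughout, so the key device is to carry out all manipulations inside the fixed conjugating frame $\Sigma_{\alpha,\tau}^{\pm1/2}(\cdot)\Sigma_{\alpha,\tau}^{\pm1/2}$, where $\widetilde{\Sigma}_{\beta,\alpha,\tau}$ and $V$ become functions of the single matrix $\Sigma_{\alpha,\tau}^{-1/2}\Sigma_\beta\Sigma_{\alpha,\tau}^{-1/2}$ and the anticommutator identities close. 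As a last step I would verify the assembled expression against the envelope value $2(\Idsf-T_{\Sigma_\beta\to\Sigma_x})$ to confirm that no chain-rule contribution was dropped.
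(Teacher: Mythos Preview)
Your plan is sound and lands on the same technical machinery the paper uses—differentiation along the geodesic $t\mapsto(\Idsf+tT)\Sigma_\beta(\Idsf+tT)$, the Lyapunov/Sylvester characterization for derivatives of matrix square roots, and the conjugation trick passing between $\Sigma_{\alpha,\tau,\beta}^{-1}=\Sigma_\beta^{1/2}\Sigma_{\alpha,\tau}^{-1}\Sigma_\beta^{1/2}$ and $\Sigma_{\alpha,\beta,\tau}^{-1}=\Sigma_{\alpha,\tau}^{-1/2}\Sigma_\beta\Sigma_{\alpha,\tau}^{-1/2}$ so that the roots become functions of a single matrix. The main difference is the starting point: the paper does not differentiate $W_2^2(\Sigma_x,\Sigma_\beta)+\tau\KL(\Sigma_x\|\Sigma_\alpha)$ as written. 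Instead it first substitutes the optimality relation for $\Sigma_x$ (the eigenvalue identity $2u v^2-2v-\tau=0$ from the proof of Theorem~\ref{UOT Plan}) to collapse the value function to the three-term expression
\[
\tr(\Sigma_\beta)\;-\;\tr\!\big([\Sigma_\beta^{1/2}\Sigma_x\Sigma_\beta^{1/2}]^{1/2}\big)\;-\;\tfrac{\tau}{4}\log\det(\Sigma_x)\;+\;\text{const},
\]
and only then differentiates term by term. This pre-simplification is what keeps the bookkeeping manageable and makes $M,U$ (from the second term) and $P,Q$ (from the log-det term) fall out cleanly; starting from the unsimplified $W_2^2+\tau\KL$ would force you to track several additional pieces that cancel.

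Your envelope observation—that the Wasserstein gradient must equal $2(\Idsf-T_{\Sigma_\beta\to\Sigma_x})$ with $\Sigma_x$ frozen at its optimum—is correct and is in fact a cleaner conceptual route than either computation, but the paper does not invoke it. If you pursue it, note that expanding $T_{\Sigma_\beta\to\Sigma_x}=\Sigma_\beta^{-1/2}[\Sigma_\beta^{1/2}\Sigma_x\Sigma_\beta^{1/2}]^{1/2}\Sigma_\beta^{-1/2}$ with the closed form of $\Sigma_x$ does not immediately produce the stated combination in $M,U,V,P,Q$; matching the two expressions still requires the same square-root and resolvent identities, so the envelope route is best used exactly as you propose, as a sanity check rather than the primary derivation.
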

This directly leads to the subsequent formula for a condition of barycenter of $\left(\Sigma_{\alpha_i} \right)_{i=1}^n$ by summing the first derivatives of the components. The proof is given in Appendix \ref{proof for uot derivative}. Consequently, thanks to the closed form of the Wasserstein gradient, we derive the details of the Exact Geodesic Bures-Wasserstein Gradient Descent used to solve the SUOT-based Barycenter problem, as presented in Algorithm $\mathbf{1}$ with the initial point $\Sigma_{\beta^{(0)}}$ and learning rate $\eta$.
\begin{algorithm}[t!] \label{algo: exact}
\caption{Exact Geodesic Bures-Wasserstein Gradient Descent}
\begin{algorithmic}
    \Require $\mathcal{P} = \left(\mathcal{N}(\mathbf{0},\Sigma_{\alpha_i})\right)_1^n, \mathcal{N}(\mathbf{0},\Sigma_{\beta}^{(0)}), \eta, T, \epsilon$
    \For{$k = 1, \ldots, T$}
            \State 
            $\mathbf{G}^{(k)}_1 = 2 \Idsf + \frac{3}{2} \tau \Sigma_\beta^{(k-1)}$
            \State $M^{(k)}_i = \Sigma_{\alpha_i,\tau}^{-\frac{1}{2}}[\widetilde{\Sigma}^{(k-1)}_{\beta,\alpha_i,\tau}]^{-1}\Sigma_{\alpha_i,\tau}^{-\frac{1}{2}}$ 
            \State $U^{(k)}_i = \Sigma_{\alpha_i,\tau}^{-1}\Sigma_\beta^{(k-1)} M^{(k)}_i+M^{(k)}_i\Sigma_\beta^{(k-1)} \Sigma_{\alpha_i,\tau}^{-1}$
            \State $\mathbf{G}^{(k)}_2 = \nicefrac{1}{n} \sum_{i=1}^n  \big[2\Sigma_{\alpha_i,\tau}^{-1} + \frac{1}{2}(U^{(k)}_i+ \tau M^{(k)}_i) \big]$
            \State $V^{(k)}_i = \Big[ \Idsf + \tau \Sigma_{\alpha_i,\tau}^{\frac{1}{2}} [\Sigma_{\beta}^{(k-1)}]^{-1} \Sigma_{\alpha_i,\tau}^{\frac{1}{2}}\Big]^{\frac{1}{2}}$
            \State $P^{(k)}_i \!=\![\Sigma_{\beta}^{(k-1)}]^{-1} \Sigma_{\alpha_i,\tau}^{\frac{1}{2}} \big[\Idsf + V^{(k)}_i\big]^{-1}\Sigma_{\alpha_i,\tau}^{\frac{1}{2}}[\Sigma_{\beta}^{(k-1)}]^{-1}$
            \State $Q^{(k)}_i = \Sigma_{\alpha_i,\tau}^{\frac{1}{2}} \big[\Idsf + V^{(k)}_i\big]^{-1}  \Sigma_{\alpha_i,\tau}^{\frac{1}{2}} [\Sigma_{\beta}^{(k-1)}]^{-2}$
            \State $\mathbf{G}^{(k)}_3 =  \nicefrac{1}{n} \sum_{i=1}^n (P^{(k)}_i+Q^{(k)}_i)$
            \State $\mathbf{G}^{(k)} = \eta \left(\mathbf{G}^{(k)}_1 - \mathbf{G}^{(k)}_2 + \frac{\tau^2}{2} \mathbf{G}^{(k)}_3 \right)$
            \State $\Sigma_{\beta}^{(k)} = \mathbf{G}^{(k)} \Sigma_{\beta}^{(k-1)} \mathbf{G}^{(k)}$
    \If{$\left\| W^2_{2_{\operatorname{SUOT}}} 
    \left( \mathcal{P}, \Sigma_{\beta}^{(k-1)} \right)
    \!-\! W^2_{2_{\operatorname{SUOT}}} 
    \left( \mathcal{P}, \Sigma_{\beta}^{(k)} \right)
    \right\| \! \leq \epsilon$}
        \State \textbf{Output:} $\Sigma_{\beta} = \Sigma_{\beta}^{(k)}$ which is the solution of the barycenter problem.
    \EndIf
    \EndFor
    \State \textbf{Output:} $\Sigma_{\beta} = \Sigma_{\beta^{(T)}}$ which is the solution of the barycenter problem.
\end{algorithmic}
\end{algorithm}

\textbf{Remark 1:} Our consideration focusing on centered Gaussians does not lose the generality. In fact, the update equation for the descent step decomposes into two parts: one for the mean and one for the covariance matrix. However, the updated equation for the mean is straightforwardly inferred from Theorem \ref{UOT Plan}. Specifically, by denoting
\begin{align*}
   M_i = \left( \left\|\Sigma_{\alpha_i, \tau}^{-1}\right\|_F^2 + 1 \right ) \Idsf -2 \Sigma_{\alpha_i, \tau}^{-1}+ \frac{\tau}{2} \Sigma_{\alpha_i, \tau}^{-1} \Sigma_{\alpha_i}^{-1} \Sigma_{\alpha_i, \tau}^{-1},
\end{align*}
then taking the first derivative of $W^2_{2_{\operatorname{SUOT}}}$ with respect to $\mathbf{b}$ is equivalent to taking the first derivative of $(\mathbf{a}_i - \mathbf{b})^T M_i (\mathbf{a}_i - \mathbf{b})$ with respect to $\mathbf{b}$, which is $-2 M_i (\mathbf{a} - \mathbf{b})$. Summing over $\left(\alpha_i \right)_{i=1}^n$ yields
\begin{align*}
    \mathbf{b} = \Big( \sum_{i=1}^n M_i\Big)^{-1} \Big(\sum_{i=1}^n M_i \mathbf{a}_i \Big).
\end{align*} 

\begin{theorem} \label{theo: exact converges}
    Suppose we apply Exact Geodesic Bures-Wasserstein Gradient Descent Algorithm with starting points $\Sigma_\beta^{(0)} \in \mathcal{K}_{[1 / \rho, \rho]} := \{\Sigma \in \mathbb{S}_{++}^d | \frac{1}{\rho} \leq \lambda_i(\Sigma) \leq \rho \quad \forall i = 1, \ldots,d \}$  for fixed $\rho$ with learning rate $\eta$ and all the updated matrices lie in $\mathcal{K}_{[1 / \rho, \rho]}$ 
    , then the algorithm converges to an optimal solution $\Sigma_\beta^{\star}$. Moreover, we have convergence guarantees at k-th iteration
    \begin{align*}
    \mathcal{D}\left(\Sigma_\beta^{(k)}\right) \leq \left(1- \frac{8\tau^2 \eta(1 - \frac{\eta}{2})}{\rho(\rho^2 + 2 \tau \rho)^{\frac{3}{2}}}\right)^k \mathcal{D}\left(\Sigma_\beta^{(0)}\right),
\end{align*}
where $\mathcal{D}\left(\Sigma_\beta^{(k)}\right) = L\left(\Sigma_\beta^{(k)}\right)-L\left(\Sigma_\beta^{\star}\right)$ is distance from objective function at k-th iteration to the optimal value.
\end{theorem}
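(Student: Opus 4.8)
The plan is to follow the standard template for linear convergence of Riemannian gradient descent, combining a smoothness-based descent inequality with a geodesic strong-convexity estimate, both specialized to the Bures metric and to the closed-form Wasserstein gradient of $W^2_{2_{\operatorname{SUOT}}}$ obtained in \Cref{UOT Derivative}. The first observation is that the update $\Sigma_{\beta}^{(k)} = \mathbf{G}^{(k)}\Sigma_{\beta}^{(k-1)}\mathbf{G}^{(k)}$ implements the geodesic (exponential-map) step $\mathsf{Exp}_{\Sigma_{\beta}^{(k-1)}}\!\big(-\eta\,\operatorname{grad}L(\Sigma_{\beta}^{(k-1)})\big)$, with $\mathbf{G}^{(k)}$ playing the role of $\Idsf - \eta\,\operatorname{grad}L(\Sigma_{\beta}^{(k-1)})$ in the formula $\mathsf{Exp}_\Sigma(X)=(\Idsf+X)\Sigma(\Idsf+X)$; hence the whole argument can be phrased intrinsically on $\mathbb{S}_{++}^d$. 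Throughout I use the hypothesis that every iterate stays in $\mathcal{K}_{[1/\rho,\rho]}$, i.e. $\tfrac1\rho \Idsf \preceq \Sigma_{\beta}^{(k)} \preceq \rho\,\Idsf$, so that all eigenvalue bounds below hold uniformly.

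Step one is a descent inequality. I would show that $L$ is geodesically smooth with constant $1$ on $\mathcal{K}_{[1/\rho,\rho]}$ (the $W_2^2$ part is $1$-smooth in the Bures geometry, and one checks the extra $\tau\,\mathrm{KL}$ contribution does not worsen this constant after the normalization built into the gradient of \Cref{UOT Derivative}). The descent lemma for a geodesically $1$-smooth function with step $\eta\le 1$ then gives
\[
L\big(\Sigma_{\beta}^{(k)}\big) \le L\big(\Sigma_{\beta}^{(k-1)}\big) - \eta\Big(1-\tfrac{\eta}{2}\Big)\,\big\|\operatorname{grad}L(\Sigma_{\beta}^{(k-1)})\big\|^2_{\Sigma_{\beta}^{(k-1)}},
\]
which produces exactly the factor $\eta(1-\tfrac{\eta}{2})$ appearing in the claimed rate.

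Step two, the crux, is to establish that each summand $W^2_{2_{\operatorname{SUOT}}}(\Sigma_{\alpha_i},\cdot)$ is geodesically $\mu$-strongly convex on $\mathcal{K}_{[1/\rho,\rho]}$ with $\mu = 4\tau^2/\big(\rho(\rho^2+2\tau\rho)^{3/2}\big)$; since strong convexity is preserved by averaging, $L$ inherits the same constant. To obtain $\mu$ I would differentiate the Wasserstein gradient of \Cref{UOT Derivative} once more along a geodesic, i.e. lower-bound the Riemannian Hessian as a symmetric operator, using $\Sigma_{\alpha,\tau} = \Idsf + \tfrac{\tau}{2}\Sigma_{\alpha}^{-1}\succeq \Idsf$ and $\tfrac1\rho\Idsf\preceq \Sigma_{\beta}\preceq\rho\,\Idsf$. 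The dominant strongly-convex contribution comes from the term $\tfrac{\tau^2}{2}(P+Q)$ — this is why $\tau^2$ appears in the numerator — while the denominator $\rho(\rho^2+2\tau\rho)^{3/2}$ arises from upper-bounding the eigenvalues of $\widetilde{\Sigma}_{\beta,\alpha,\tau}$ by $(\rho^2+2\tau\rho)^{1/2}$ together with the extra factor $\Sigma_{\beta}^{-1}\succeq\tfrac1\rho\Idsf$ entering through $V,P,Q$. Geodesic $\mu$-strong convexity then yields the Polyak--\L{}ojasiewicz (PL) inequality $\big\|\operatorname{grad}L(\Sigma_{\beta}^{(k-1)})\big\|^2_{\Sigma_{\beta}^{(k-1)}} \ge 2\mu\,\mathcal{D}(\Sigma_{\beta}^{(k-1)})$, and guarantees a unique minimizer $\Sigma_{\beta}^{\star}$ on the compact set $\mathcal{K}_{[1/\rho,\rho]}$.

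Finally I would combine the two steps: substituting the PL inequality into the descent inequality gives
\[
\mathcal{D}\big(\Sigma_{\beta}^{(k)}\big) \le \Big(1 - 2\mu\,\eta\big(1-\tfrac{\eta}{2}\big)\Big)\,\mathcal{D}\big(\Sigma_{\beta}^{(k-1)}\big),
\]
and inserting $2\mu = 8\tau^2/\big(\rho(\rho^2+2\tau\rho)^{3/2}\big)$ reproduces precisely the stated contraction factor; iterating over $k$ and using $\mathcal{D}\ge 0$ yields the bound, with convergence to $\Sigma_{\beta}^{\star}$ following from the contraction. The main obstacle is Step two: because the Bures manifold has positive curvature, Hadamard-type comparison inequalities are unavailable, so strong convexity cannot be had for free — it must be extracted from the explicit (and rather intricate) Hessian of the SUOT distance, and this is precisely where confinement to $\mathcal{K}_{[1/\rho,\rho]}$ and the $\tau$-dependence of $\Sigma_{\alpha,\tau}$, $\widetilde{\Sigma}_{\beta,\alpha,\tau}$, $V$, $P$, and $Q$ must be tracked carefully to land on the clean constant $\mu$.
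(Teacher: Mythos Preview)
Your high-level template (geodesic descent inequality + PL inequality $\Rightarrow$ linear contraction) matches the paper's, but the two substantive steps are carried out differently, and in both places the paper's route is the one that actually closes.

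\textbf{Smoothness.} Your justification that ``the $W_2^2$ part is $1$-smooth and the extra $\tau\,\mathrm{KL}$ contribution does not worsen this constant'' misreads the structure of $W^2_{2_{\operatorname{SUOT}}}$. There is no $\mathrm{KL}$ term depending on $\Sigma_\beta$ to control. The paper instead writes $W^2_{2_{\operatorname{SUOT}}}(\Sigma_\alpha,\Sigma_\beta,\tau)=\min_{\Sigma_x}\mathcal{F}(\Sigma_\beta,\Sigma_x)$ with $\mathcal{F}(\Sigma_\beta,\Sigma_x)=W_2^2(\Sigma_x,\Sigma_\beta)+\tau\,\mathrm{KL}(\Sigma_x\|\Sigma_\alpha)$, observes that for each fixed $\Sigma_x$ the map $\Sigma_\beta\mapsto\mathcal{F}(\Sigma_\beta,\Sigma_x)$ is geodesically $1$-smooth (this is the known fact for $W_2^2$), and then shows directly that a pointwise infimum of $1$-smooth functions remains $1$-smooth. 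This is a cleaner and more robust way to get the descent inequality than trying to bound a KL contribution that is not there.

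\textbf{Strong convexity / PL.} Here your approach diverges most sharply. You propose to lower-bound the \emph{Riemannian} Hessian of $L$ along geodesics, extracting the constant from the $\tfrac{\tau^2}{2}(P+Q)$ piece of the Wasserstein gradient. The paper does not do this. It proves \emph{Euclidean} strong convexity of $\Sigma_\beta\mapsto W^2_{2_{\operatorname{SUOT}}}(\Sigma_\alpha,\Sigma_\beta,\tau)$ with constant $\tau^2/(\rho^2+2\tau\rho)^{3/2}$, and the strongly convex term is not $P+Q$ but rather $-\tr(\Sigma_{x,\beta})$, reduced to showing $f(\Sigma)=-\tr\big([\Sigma^2+2\tau\Sigma]^{1/2}\big)$ is strongly convex via an explicit Taylor-series computation of its Euclidean second derivative. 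The PL inequality in the Bures norm is then obtained by a conversion step: use $\langle\nabla L(\Sigma_\beta),\Sigma_\beta-\Sigma_\beta^\star\rangle=\tfrac12\langle\operatorname{grad}L(\Sigma_\beta),\Sigma_\beta-\Sigma_\beta^\star\rangle_{\Sigma_\beta}$, apply Cauchy--Schwarz and Young with a well-chosen parameter, and absorb $\|\cdot\|_{\Sigma_\beta^{-1}}^2\le\rho\|\cdot\|_F^2$; this is where the extra factor $\rho$ in the denominator enters. This Euclidean-then-convert strategy sidesteps the positive-curvature obstacle you correctly flag, whereas your direct Riemannian-Hessian route is left as a sketch and would require a considerably more involved computation to land on the same constant.
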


We give the full proof in Appendix \ref{proof for exact converges}. The condition on the set $\mathcal{K}_{[1 / \rho, \rho]}$ is natural in the context of finite distributions and is also employed in \cite{altschuler2021averaging}. This condition is critical for ensuring a closed-form expression for the strong convexity parameter, which is necessary for estimating the convergence rate. Our \Cref{theo: exact converges} establishes the algorithm's geometric convergence, with the added novelty that the convergence rate is dimension-free, a conclusion also reached in \cite{altschuler2021averaging}.

\subsection{Hybrid Algorithm for Barycenter UOT} \label{sec: 4.3}
\begin{figure}{}
    \centering
    \includegraphics[width=0.6\linewidth]{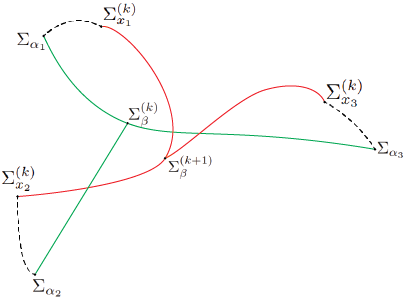}
    \caption{Overview of hybrid updated iteration}
    \label{hybrid idea}
\end{figure}
This part is motivated by \cite{chewi2020gradient} within the context of detecting robust terms. Our algorithm has two alternated steps: one step is to find the minimizer under the regularization of KL divergence, the other step is to apply the Riemannian Gradient Descent as in the work of \cite{chewi2020gradient}. Figure \ref{hybrid idea} demonstrates the step in Algorithm $\mathbf{2}$. Given set of SPD matrices $\left(\Sigma_{\alpha_i}\right)_{i=1}^n$ and starting point $\Sigma_\beta^{(0)}$; at $k$-th update $\Sigma_\beta^{(k)}$, we first find $\Sigma_{x_i}^{(k)}$ which are minimizer of $W_2^2(\Sigma_{x_i}^{(k)},\Sigma_{\beta}^{(k)}) + \tau \text{KL}(\Sigma_{x_i}^{(k)}\|\Sigma_{\alpha_i})$ through \Cref{UOT Plan}. Then given $\left(\Sigma_{x_i}^{(k)}\right)_{i=1}^n$, we find $\Sigma_{\beta}^{(k+1)}$ as Wasserstein Barycenter of them. In every iteration, the total objective function decreases, so we will arrive at a solution. 

\begin{algorithm}[t!] 
\centering
\caption{Hybrid Bures-Wasserstein Gradient Descent}
\begin{algorithmic}
    \Require $\mathcal{P} = \left(\mathcal{N}(\mathbf{0},\Sigma_{\alpha_i})\right)_1^n, \mathcal{N}(\mathbf{0},\Sigma_{\beta}^{(0)}), T, \epsilon$
    \For{$k = 1, \ldots, T$}
        \For{$i = 1, \ldots, n$}
            \State $\Sigma_{\alpha_i,\tau,\beta}^{(k-1)} = \left(\Sigma_{\beta}^{(k-1)}\right)^{-\frac{1}{2}} \left(\Idsf + \frac{\tau}{2} \Sigma_{\alpha_i}^{-1}\right)\left(\Sigma_{\beta}^{(k-1)}\right)^{-\frac{1}{2}}$
            \State $\Sigma_{\gamma_i}^{(k)} \! = \! \frac{\tau}{2} \Idsf \! + \! \frac{1}{2} \left(\Sigma_{\alpha_i, \tau, \beta}^{(k-1)}\right)^{-1}\Big[\Idsf + \left(\Idsf + 2 \tau \Sigma_{\alpha_i,\tau,\beta}^{(k-1)}\right)^{\frac{1}{2}}\Big]$
            \State $\Sigma_{x_i}^{(k)} = \left(\Sigma_{\beta}^{(k-1)}\right)^{-\frac{1}{2}} \left(\Sigma_{\alpha_i, \tau, \beta}^{(k-1)}\right)^{-1} \Sigma_{\gamma_i}^{(k)} \left(\Sigma_{\beta}^{(k-1)}\right)^{-\frac{1}{2}}$
        \EndFor
        \State $\Sigma_{\beta, x_i}^{(k-1)} =  \Big[ \left(\Sigma_{\beta}^{(k-1)}\right)^{\frac{1}{2}} \Sigma_{x_i}^{(k)} \left(\Sigma_{\beta}^{(k-1)}\right)^{\frac{1}{2}}\Big] \forall i = 1, \dots, n$
        \State $\mathbf{S}^{(k)} = \frac{1}{n} \sum_{i=1}^n \left(\Sigma_{\beta}^{(k-1)}\right)^{-\frac{1}{2}} \left(\Sigma_{\beta, x_i}^{(k-1)}\right)^{\frac{1}{2}} \left(\Sigma_{\beta}^{(k-1)}\right)^{-\frac{1}{2}}$
        \State $\Sigma_{\beta}^{(k)} = \mathbf{S}^{(k)} \Sigma_{\beta}^{(k-1)} \mathbf{S}^{(k)}$
    \If{$\left\| W^2_{2_{\operatorname{SUOT}}} 
    \left( \mathcal{P}, \Sigma_{\beta}^{(k-1)} \right)
    \!-\! W^2_{2_{\operatorname{SUOT}}} 
    \left( \mathcal{P}, \Sigma_{\beta}^{(k)} \right)
    \right\| \! \leq \epsilon$}
        \State \textbf{Output:} $\Sigma_{\beta} = \Sigma_{\beta}^{(k)}$ which is the solution of the barycenter problem.
    \EndIf
    \EndFor
    \State \textbf{Output:} $\Sigma_{\beta} = \Sigma_{\beta^{(T)}}$ which is the solution of the barycenter problem.
\end{algorithmic}
\end{algorithm}

\begin{theorem} \label{theo: hybrid converges}
    Suppose we apply Hybrid Bures-Wasserstein Gradient Descent Algorithm with starting points $\Sigma_\beta^{(0)} \in \mathcal{K}_{[1 / \rho, \rho]} := \{\Sigma \in \mathbb{S}_{++}^d | \frac{1}{\rho} \leq \lambda_i(\Sigma) \leq \rho \quad \forall i = 1, \ldots,d \}$  for fixed $\rho$ and all the updated matrices lie in $\mathcal{K}_{[1 / \rho, \rho]}$, then the algorithm converges to an optimal solution.
\end{theorem}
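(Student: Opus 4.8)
The plan is to recognize Algorithm~$\mathbf{2}$ as an alternating (block-coordinate) minimization of a single joint objective, and to prove that it drives $L$ monotonically downward; convergence to an optimizer then follows from compactness of $\mathcal{K}_{[1/\rho,\rho]}$ together with the geodesic convexity already exploited in the analysis of the exact method. Concretely, Theorem~\ref{UOT Plan} tells us that for centered Gaussians the SUOT cost splits as $W^2_{2_{\operatorname{SUOT}}}(\Sigma_{\alpha_i},\Sigma_\beta) = W_2^2(\Sigma_{x_i},\Sigma_\beta) + \tau\,\KL(\Sigma_{x_i}\|\Sigma_{\alpha_i})$ at the optimal relaxed marginal $\Sigma_{x_i}$. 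Introducing the joint functional
\begin{align*}
F\big((\Sigma_{x_i})_{i=1}^n,\Sigma_\beta\big) := \frac{1}{n}\sum_{i=1}^n \Big[ W_2^2(\Sigma_{x_i},\Sigma_\beta) + \tau\,\KL(\Sigma_{x_i}\|\Sigma_{\alpha_i})\Big],
\end{align*}
Theorem~\ref{UOT Plan} shows $L(\Sigma_\beta) = \min_{(\Sigma_{x_i})} F((\Sigma_{x_i}),\Sigma_\beta)$, and that the $\Sigma_{x_i}$-update in the inner loop returns exactly the minimizer of $F(\cdot,\Sigma_\beta^{(k-1)})$ over the first block.

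Next I would verify that each sweep of the algorithm does not increase $F$. The first sub-step is a full block minimization, so $F((\Sigma_{x_i}^{(k)}),\Sigma_\beta^{(k-1)}) = L(\Sigma_\beta^{(k-1)})$. The second sub-step is the Bures barycenter fixed-point map $\Sigma_\beta^{(k)} = \mathbf{S}^{(k)}\Sigma_\beta^{(k-1)}\mathbf{S}^{(k)}$ with $\mathbf{S}^{(k)} = \tfrac1n\sum_i T_{\Sigma_\beta^{(k-1)}\to\Sigma_{x_i}^{(k)}}$, which is precisely the iteration used in \cite{chewi2020gradient}; its defining property is that it does not increase the barycenter functional $\Sigma_\beta \mapsto \tfrac1n\sum_i W_2^2(\Sigma_{x_i}^{(k)},\Sigma_\beta)$, decreasing it strictly unless $\Sigma_\beta^{(k-1)}$ is already the barycenter of $(\Sigma_{x_i}^{(k)})$. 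Since the $\KL$ terms do not depend on $\Sigma_\beta$, this gives $F((\Sigma_{x_i}^{(k)}),\Sigma_\beta^{(k)}) \le F((\Sigma_{x_i}^{(k)}),\Sigma_\beta^{(k-1)})$. Chaining the two sub-steps and using $L(\Sigma_\beta^{(k)}) \le F((\Sigma_{x_i}^{(k)}),\Sigma_\beta^{(k)})$ yields the monotonicity
\begin{align*}
L(\Sigma_\beta^{(k)}) \le F\big((\Sigma_{x_i}^{(k)}),\Sigma_\beta^{(k)}\big) \le F\big((\Sigma_{x_i}^{(k)}),\Sigma_\beta^{(k-1)}\big) = L(\Sigma_\beta^{(k-1)}).
\end{align*}

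With the iterates confined to $\mathcal{K}_{[1/\rho,\rho]}$, all eigenvalues lie in $[1/\rho,\rho]$, so $L$ is bounded below on this set; being monotone and bounded, $L(\Sigma_\beta^{(k)})$ converges. To upgrade from convergent objective values to convergence of the iterates to an optimizer, I would use compactness of $\mathcal{K}_{[1/\rho,\rho]}$ to extract a convergent subsequence $\Sigma_\beta^{(k_j)}\to\Sigma_\beta^\infty$, show $\Sigma_\beta^\infty$ is a fixed point of the alternating scheme (equivalently, it is the Bures barycenter of the $\Sigma_{x_i}$ it induces, so $\tfrac1n\sum_i T_{\Sigma_\beta^\infty\to\Sigma_{x_i}}=\Idsf$), and conclude via the first-order optimality relation that $\Sigma_\beta^\infty$ is a stationary point of $L$. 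Finally I would invoke the (strong) geodesic convexity of $L$ on $\mathcal{K}_{[1/\rho,\rho]}$ underlying Theorem~\ref{theo: exact converges}, so that the only stationary point is the unique global minimizer $\Sigma_\beta^\star$; hence every subsequential limit equals $\Sigma_\beta^\star$ and the whole sequence converges to the optimum.

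The main obstacle is the second sub-step: asserting that a single application of $\Sigma_\beta\mapsto \mathbf{S}\Sigma_\beta\mathbf{S}$ genuinely decreases the barycenter functional, and that its fixed points coincide with barycenters, requires the monotonicity theory of the Bures fixed-point iteration rather than an elementary descent estimate, and this is where the positive curvature of the manifold enters. A secondary difficulty is closing the gap between \emph{objective converges} and \emph{iterates converge to an optimizer}, which is why geodesic convexity (the same ingredient that powers Theorem~\ref{theo: exact converges}) is needed to rule out spurious stationary points and to certify optimality of the limit.
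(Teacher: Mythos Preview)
Your proposal is correct and follows essentially the same strategy as the paper: interpret Algorithm~2 as block-coordinate descent on the joint functional $F((\Sigma_{x_i}),\Sigma_\beta)$, establish monotone decrease (citing \cite{chewi2020gradient} for the $\Sigma_\beta$-step), extract a convergent subsequence by compactness of $\mathcal{K}_{[1/\rho,\rho]}$, and use convexity to identify the limit as the unique optimum. The paper differs only in that it proves \emph{joint Euclidean} convexity of $F$ in all $n+1$ variables (including a short coupling argument for joint convexity of $(\Sigma,\Sigma')\mapsto W_2^2(\Sigma,\Sigma')$), whereas you work with $L$ alone via an envelope/Danskin step.

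One caution on terminology: you invoke ``(strong) geodesic convexity of $L$ underlying Theorem~\ref{theo: exact converges}'', but the paper never establishes geodesic convexity---Proposition~\ref{convexity} gives only \emph{Euclidean} (strong) convexity, and Theorem~\ref{theo: exact converges} proceeds via a PL-type inequality derived from it. Your argument still closes with Euclidean convexity (a vanishing Riemannian gradient at the fixed point together with the PL bound forces $L(\Sigma_\beta^\infty)=L(\Sigma_\beta^\star)$), but the distinction is worth keeping straight since the Bures--Wasserstein barycenter functional is famously \emph{not} geodesically convex.
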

The full proof is given in Appendix \ref{proof for hybrid converges}. This is meaningful when considering a scenario where we have contaminated Gaussian data, denoted as $\Sigma_\alpha$s, and assume the true underlying data follows a Gaussian distribution, $\Sigma_x$s. Intuitively, it is preferable to recover the true data before performing calculations on it.

\section{Numerical Experiments}
\label{sec:numerical_experiment}
In this section, we provide numerical evidence regarding our presented SUOT-based Barycenter with normal Wasserstein Barycenter, as well as some ablation studies on optimization methods. To generate a sample of the SPD matrix, we suggest the strategy in \cite{chewi2020gradient}. Let $A_i$ be a matrix with entries are i.i.d. samples from $\mathcal{N}(0, \sigma^2)$. Our random sample on the Bures manifold is then given by taking a symmetric version of $A_i$ as $\frac{A_i+A_i^{\top}}{2}$, then applying matrix exponential function $\operatorname{\textbf{expm}}(A)$ : $\mathbb{S}^d \rightarrow \mathbb{S}_{++}^d$ to this symmetric version. Particularly, if $P \diag(\lambda_i) P^{\top}$ is the SVD decomposition of $A$, then
\begin{align*}
    \operatorname{\textbf{expm}}(A)=P \diag (e^{\lambda_i}) P^{\top}.
\end{align*}
Figure \ref{compare} shows the normal Wasserstein barycenter and our SUOT-based Barycenter in the presence of an outlier. We depict two 2D contour Gaussians and add noise to the left one by computing the weighted sum of the two PDFs with an outlier Gaussian. We then calculate and plot both the normal Wasserstein and SUOT-based Barycenters. The SUOT-based Barycenter is less affected by noise and more closely resembles the barycenter of the two original Gaussians compared to the normal Wasserstein barycenter. Specifically, the first subfigure presents two clean Gaussian distributions and their barycenter. In the next subfigure, noise (represented by an additional Gaussian) is added to the left Gaussian, demonstrating the challenge of recovering the uncontaminated barycenter. The third subfigure shows the Wasserstein Barycenter, which deviates from the true barycenter seen in the first subfigure. In contrast, the fourth subfigure displays the SUOT-based Barycenter, which more accurately reflects the true barycenter despite the added noise. To quantify this, we compute the standard Wasserstein distance between the true barycenter and both the Wasserstein and SUOT-based Barycenters, obtaining values of 0.2673 and 0.06, respectively. 

\begin{figure*}[!htp]
    \centering
    \subfloat{\includegraphics[width=0.25\linewidth]{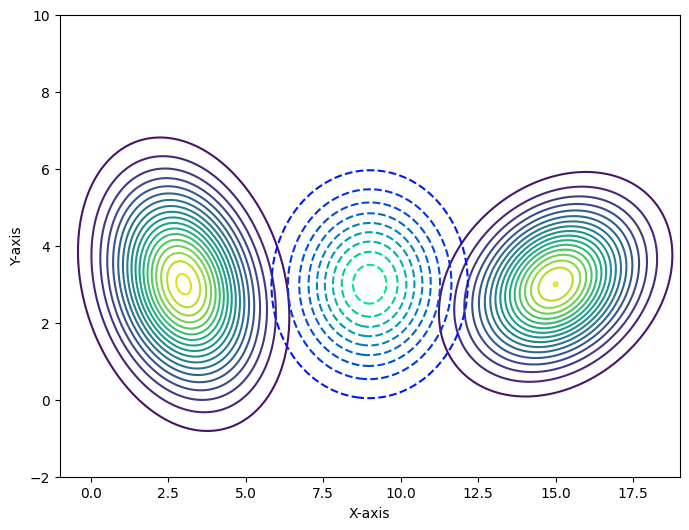}\label{compare}}
    \subfloat{\includegraphics[width=0.25\linewidth]{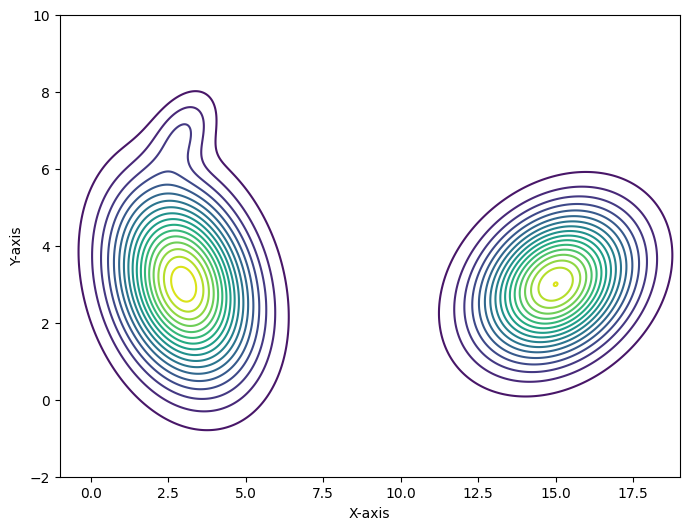}}
    \subfloat{\includegraphics[width=0.25\linewidth]{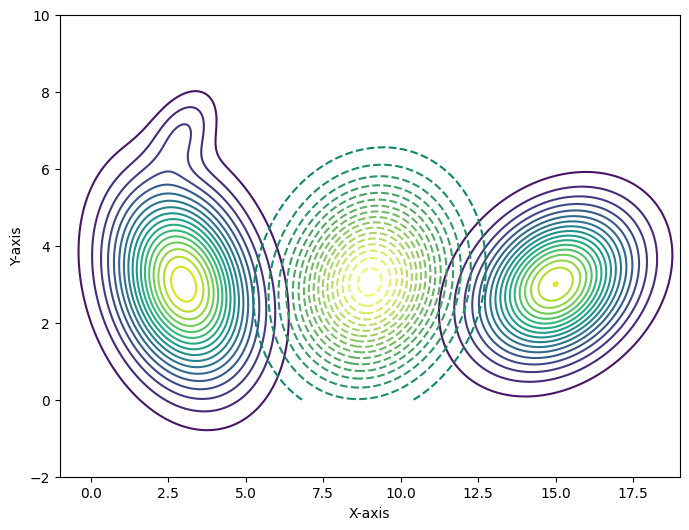}}
    \subfloat{\includegraphics[width=0.25\linewidth]{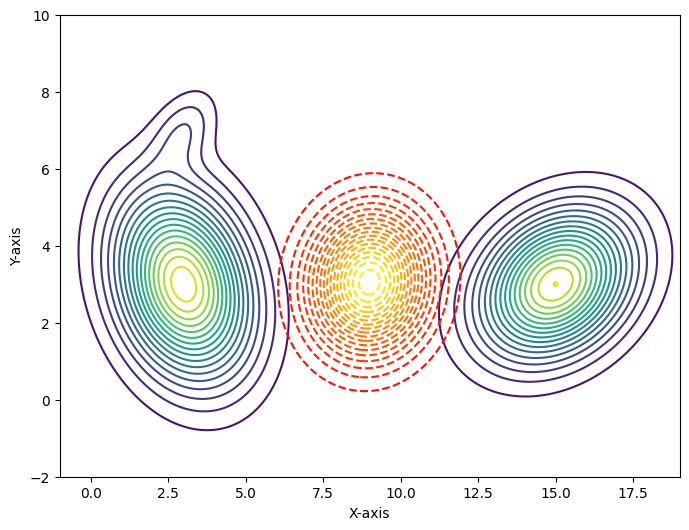}}
    \caption{2D Contour Plot Gaussian Mixture Distribution. From left to right: two Gaussians with their barycenter (\textcolor{blue}{blue}); noise is added to one Gaussian on the left; normal Wasserstein Barycenter (\textcolor{green}{green}); SUOT-based Barycenter (\textcolor{red}{red}).}
\end{figure*}
For the optimization study, we run Gradient Descent (GD) and Stochastic Gradient Descent (SGD) to observe the loss objective's behaviour over iterations. We generate a dataset of
 20 centered Gaussians in $\mathbb{R}^5$, with covariance matrices as diagonal SPD matrices for efficiency and tractability. SGD uses each sample matrix only once. Figure \ref{rate} shows the convergence of GD and SGD for distributions on the Bures manifold. 
\begin{figure*}[!t]
    \centering
    \subfloat{\includegraphics[width=0.35\linewidth]{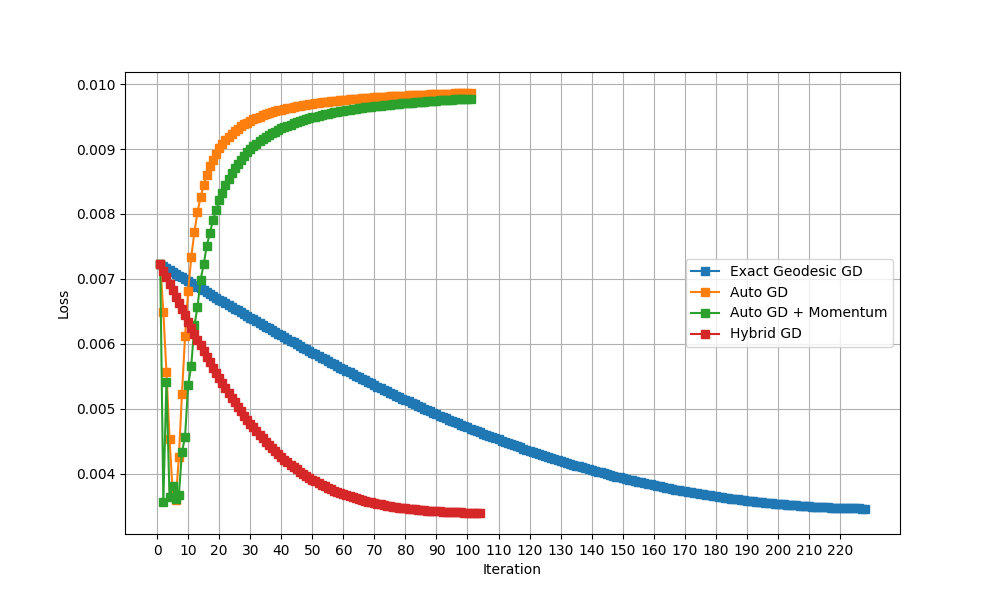}\label{rate}}
    \subfloat{ \includegraphics[width=0.35\linewidth]{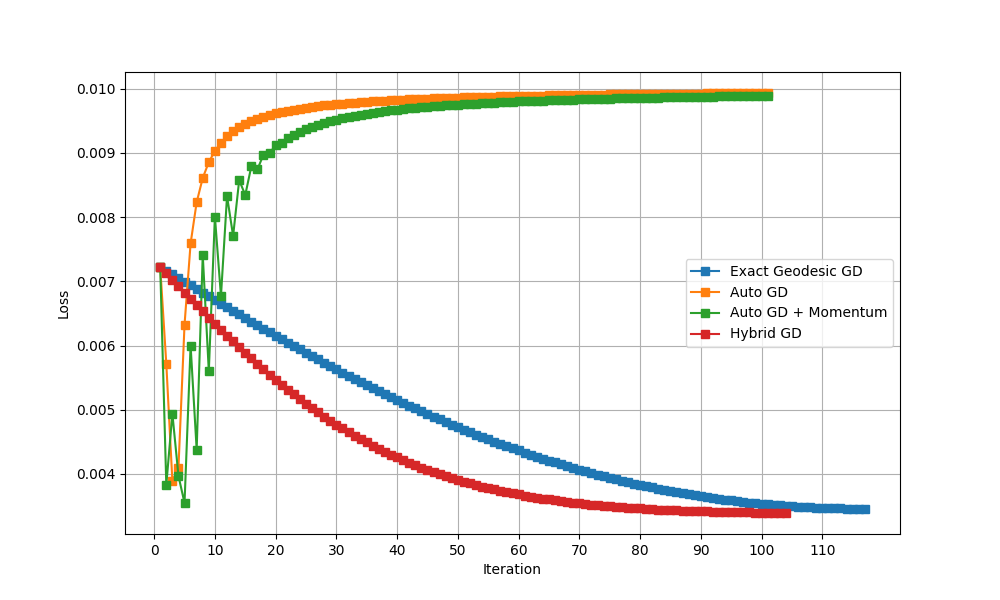}} 
    \subfloat{\includegraphics[width=0.35\linewidth]{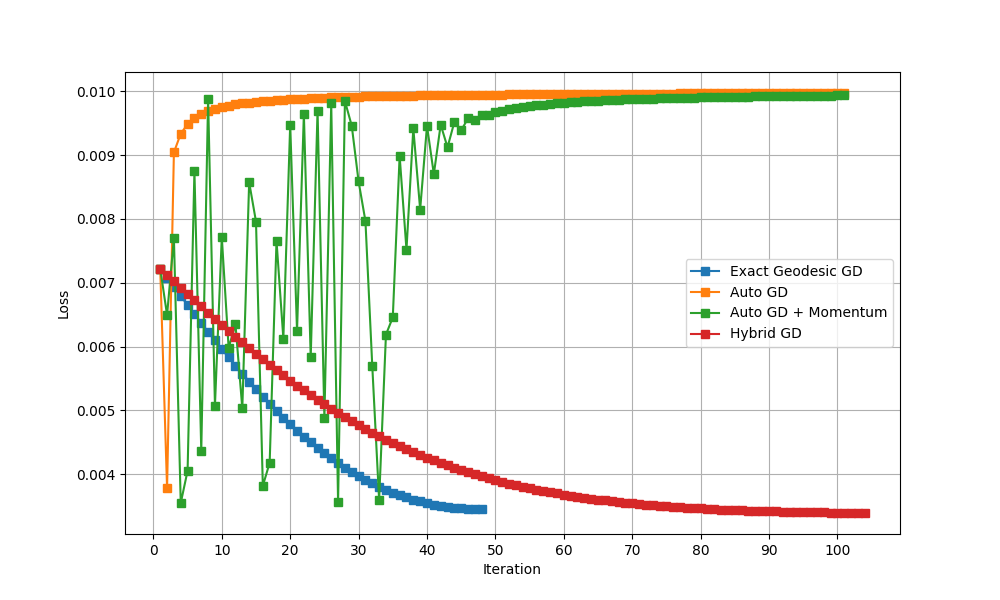}} \\
    \subfloat{\includegraphics[width=0.35\linewidth]{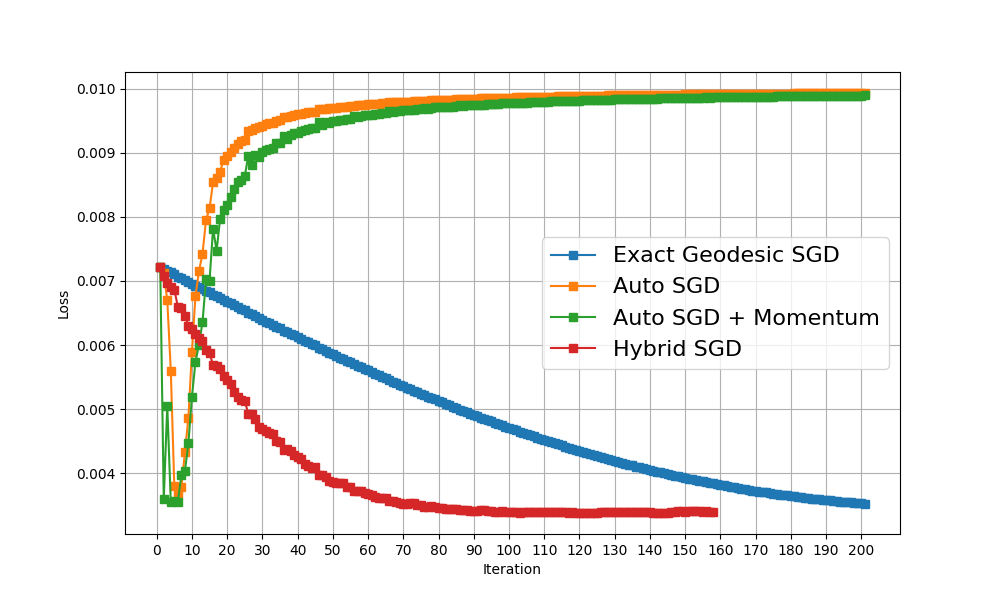}}
    \subfloat{\includegraphics[width=0.35\linewidth]{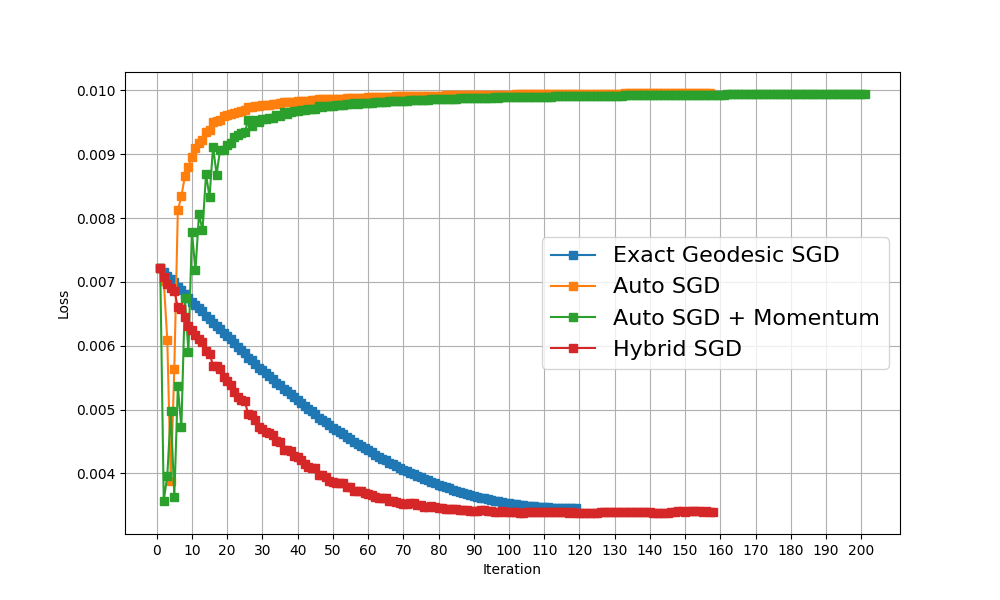}}
    \subfloat{\includegraphics[width=0.35\linewidth]{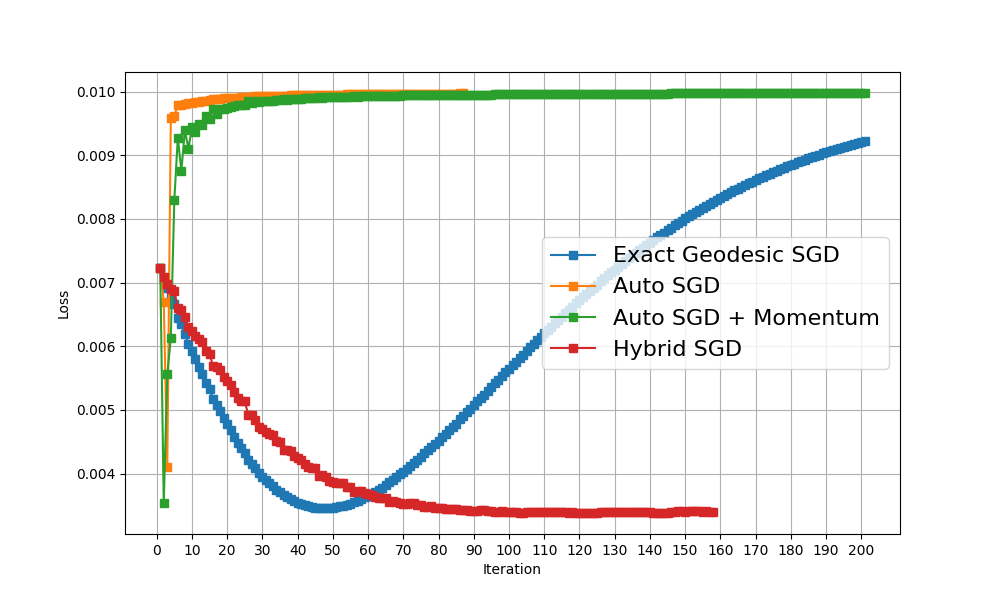}}  
    \caption{Loss on $L(\Sigma_\beta)$ through iterations with different step sizes. (Top): GD. \textcolor{red}{red}: Hybrid Gradient Descent, \textcolor{blue}{blue}: Exact Geodesic Gradient Descent, \textcolor{orange}{orange}: Auto Geodesic Gradient Descent, \textcolor{green}{green}: Auto Geodesic Gradient Descent with Momentum. (Bottom): SGD. \textcolor{red}{red}: Hybrid Stochastic Gradient Descent, \textcolor{blue}{blue}: Exact Geodesic Stochastic Gradient Descent, \textcolor{orange}{orange}: Auto Stochastic Geodesic Gradient Descent, \textcolor{green}{green}: Auto Stochastic Geodesic Gradient Descent with Momentum.}
\end{figure*}
In this experiment, we compare four optimization methods: Hybrid Gradient Descent (our), Exact Geodesic Gradient Descent (our), Gradient Descent (combined with momentum) where the Euclidean gradient $\nabla F(\Sigma)$ is auto-derived by \textbf{Pytorch} library, then the Riemannian gradient is approximated by $ 2\left(\nabla L(\Sigma) \Sigma + \left(\nabla L(\Sigma) \Sigma\right)^T \right)$ \cite{han2021riemannian}. The step sizes are chosen to be 0.1 , 0.2 and 0.5 for exact method and 1 for hybird method. For each run, we start with a fixed $\Sigma_{\beta^{(0)}} \in \mathbb{S}_{++}^d$ and use the same stopping condition. All the optimal solutions in this part are computed using the \textbf{Pytorch} library. All the experiments are conducted on a server with 2 GPU Tesla v100-sxm2-32GB RAM. For both GD and SGD with various step sizes, the objective values of our methods consistently find a descent direction and quickly converge to a solution. This behaviour is not observed when using normal Riemannian gradient descent with auto-approximated Wasserstein gradient. Additionally, we provide an ablation study of the dependence between the SUOT-based Barycenter and parameter $\tau$ in the Appendix \ref{Ablation study}.

\section{Related work} \label{compare to other work}
This section is to clarify the difference between this paper and other works which derive closed-form expressions for OT/EOT/unbalanced EOT between Gaussian measures \cite{janati2020entropic}, \cite{mallasto2022entropy}. First of all, \cite{janati2020entropic} obtained the closed-form expression for unbalanced OT solutions, but their approach needs to utilize the dual-form of the objective function. Our approach attacks directly to the objective function and obtains a shorter proof. Similarly, \cite{mallasto2022entropy} also used the dual-form of the objective function to obtain the solutions by solving a system of equation of derivatives. We acknowledge that \cite{chewi2020gradient} also worked in barycenter problem of Wasserstein distance, but their gradient descent algorithm is easy to obtain in that particular setting. To be more precisely, they only need to find the derivative of $\frac{1}{2}W_2^2(A + t C,B)$ with respect to $t$ at $t = 0$ where $A,B$ and $C$ are points on Bures manifold. This is similar to find out the derivative of $\frac{1}{2}\|x + t a\|^2$, where $x$ and $a$ are vector in the Euclidean space.  In our problem, we need to work out  the derivative of minimum of a function defined on SPD matrices in which some operations, like square root of a matrix, are  obviously not simple. Thus, we need to employ additional tools such as Lyapunov's equation.

\section{Conclusion}
\label{sec:conclusion}
In this paper, we accomplish a direction in robust barycenter computation, based on the solution of Semi-Unbalanced Optimal Transport when distributions are Gaussians with symmetric positive define covariance matrices. We state a closed-form expression for SUOT distance between two unbalanced Gaussians with different masses. Building upon this result, we propose two methods for computing robust barycenter of Gaussian measures, including Hybrid Gradient Descent and Exact Geodesic Gradient Descent Algorithm on SPD manifold. We demonstrate through both theoretical results and practice that our methods consistently converge. Our work introduces a new closed-form distance and its derivative on the SPD manifold, which has been central to several impactful applications. This contribution provides manifold-based tools for broader research, extending beyond just barycenter computation. Our work still has limitations, such as the absence of practical applications. Additionally, the experiments are conducted only on small synthetic datasets. We plan to address these issues in future work. 

\newpage
\appendix 
\begin{center}
\textbf{\Large{Supplementary Materials for
``On Barycenter Computation: Semi-Unbalanced Optimal Transport-based Method on Gaussians''}}
\end{center}

\setcounter{tocdepth}{1}  
\tableofcontents  
\addtocontents{toc}{\protect\setcounter{tocdepth}{1}}

\vspace{1 em}
\noindent
Overall, \Cref{proof for bary is gaussian} gives the proof for Theorem \ref{theo: bary is gaussian} about the Gaussian form of the SUOT-based Barycenter. \Cref{proof for closed-form} contains the proofs for key theoretical results about closed form in Theorem \ref{UOT Plan} and Theorem \ref{UOT Entropic}, alongside supporting propositions and lemmas. The proof for the closed-form of the Wasserstein gradient (Theorem \ref{UOT Derivative}) is given in \Cref{proof for uot derivative}.
\Cref{proof for convergence} presents convergence guarantees under specific conditions, as \Cref{proof for exact converges} provides a detailed derivation of convergence for the Exact Geodesic Gradient Descent algorithm and \Cref{proof for hybrid converges} discusses the Hybrid Bures-Wasserstein, which are central to the numerical methods proposed in the paper on the Bures-Wasserstein manifold. \Cref{Ablation study} includes the ablation study on the impact of the parameter $\tau$ on our barycenters, as well as comparisons with standard Wasserstein Barycenters.
\section{Proof for Theorem \ref{theo: bary is gaussian}} \label{proof for bary is gaussian}
First, we have the following Proposition. 
\begin{proposition} \label{OT Plan}
    Let $\Sigma_1$ and $\Sigma_2$ be two positive definite matrices in $\mathbb{R}^d$. Consider the problem 
    \begin{align*}
        \inf_{\mu \in  \mathcal{P}(\Sigma_1), \nu \in \mathcal{P}(\Sigma_2)} W_2^2(\mu,\nu),
    \end{align*}
    where $\mathcal{P}(\Sigma_1),\mathcal{P}(\Sigma_2)$ be the set of zero-mean probability distribution in $\mathbb{R}^d$ having covariance matrix $\Sigma_1,\Sigma_2$, respectively. Then, the result of this optimization of this problem is $\tr\left(\Sigma_1+\Sigma_2 - (\Sigma_1^{1/2}\Sigma_2\Sigma_1^{1/2})^{1/2}\right)$, and the problem admits optimal solution. 
\end{proposition}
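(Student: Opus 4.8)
The plan is to collapse the double infimum over measures into a single finite-dimensional optimization over cross-covariance matrices, solve the resulting semidefinite program by a Schur-complement argument, and then check that the optimum is attained by centered Gaussians.

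First I would rewrite the objective. Since $W_2^2(\mu,\nu)=\inf_{\pi\in\Pi(\mu,\nu)}\mathbb{E}_\pi\|X-Y\|_2^2$, the two nested infima collapse into one infimum over the set $\mathcal{J}$ of all joint laws $\pi$ on $\mathbb{R}^d\times\mathbb{R}^d$ whose $X$-marginal is centered with covariance $\Sigma_1$ and whose $Y$-marginal is centered with covariance $\Sigma_2$; indeed a coupling lies in some $\Pi(\mu,\nu)$ with $\mu\in\mathcal{P}(\Sigma_1)$, $\nu\in\mathcal{P}(\Sigma_2)$ precisely when $\pi\in\mathcal{J}$. For any such $\pi$, expanding the quadratic cost and using the moment constraints gives $\mathbb{E}_\pi\|X-Y\|_2^2=\tr(\Sigma_1)+\tr(\Sigma_2)-2\tr(C)$, where $C=\mathbb{E}_\pi[XY^\top]$ is the cross-covariance, so the task reduces to \emph{maximizing} $\tr(C)$ over admissible $C$. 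The admissible $C$ are exactly those making the block matrix $\left(\begin{smallmatrix}\Sigma_1 & C \\ C^\top & \Sigma_2\end{smallmatrix}\right)\succeq 0$: necessity is immediate since this is the covariance of $(X,Y)$, and sufficiency follows by exhibiting the centered Gaussian on $\mathbb{R}^{2d}$ with that covariance, whose marginals lie in $\mathcal{P}(\Sigma_1)$ and $\mathcal{P}(\Sigma_2)$.

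Next I would solve the program $\max\{\tr(C): \left(\begin{smallmatrix}\Sigma_1 & C \\ C^\top & \Sigma_2\end{smallmatrix}\right)\succeq 0\}$. Using $\Sigma_1\succ 0$, the Schur complement makes the constraint equivalent to $C^\top\Sigma_1^{-1}C\preceq\Sigma_2$. The substitution $D=\Sigma_1^{-1/2}C\Sigma_2^{-1/2}$ turns this into $D^\top D\preceq\Idsf$, i.e. $\|D\|_{\mathrm{op}}\le 1$, while the objective becomes $\tr(C)=\tr(\Sigma_2^{1/2}\Sigma_1^{1/2}D)$. Maximizing the linear functional $\tr(MD)$ over the operator-norm ball returns the nuclear norm of $M=\Sigma_2^{1/2}\Sigma_1^{1/2}$ (von Neumann's trace inequality, with the optimum attained by aligning $D$ with the singular vectors of $M$), and since $M^\top M=\Sigma_1^{1/2}\Sigma_2\Sigma_1^{1/2}$ this equals $\tr\big((\Sigma_1^{1/2}\Sigma_2\Sigma_1^{1/2})^{1/2}\big)$. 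Substituting back gives the optimal value $\tr\big(\Sigma_1+\Sigma_2-2(\Sigma_1^{1/2}\Sigma_2\Sigma_1^{1/2})^{1/2}\big)$, i.e. the squared Bures--Wasserstein distance between the two centered Gaussians (matching the closed form recorded earlier in the paper).

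Finally, for attainment I would observe that the maximizing $C^\star$ turns the block matrix into the covariance of an explicit centered Gaussian $\pi^\star$ on $\mathbb{R}^{2d}$ lying in $\mathcal{J}$ and attaining the inner minimum; its marginals are $\mathcal{N}(\mathbf{0},\Sigma_1)$ and $\mathcal{N}(\mathbf{0},\Sigma_2)$, so this centered-Gaussian pair is an optimal solution of the original problem, proving the infimum is a minimum. The main obstacle is the middle step: correctly reducing the block positive-semidefinite constraint via Schur complement and recognizing the trace maximization as a nuclear-norm problem; the measure-theoretic reductions at either end are routine once the moment bookkeeping is in place.
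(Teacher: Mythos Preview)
Your proposal is correct. Both you and the paper reduce the problem to maximizing the cross-covariance term $\tr(C)$ with $C=\mathbb{E}_\pi[XY^\top]$, but the executions differ. The paper whitens: writing $U\Lambda V^\top$ for the SVD of $\Sigma_1^{1/2}\Sigma_2^{1/2}$, it sets $\tilde X=U^\top\Sigma_1^{-1/2}X$ and $\tilde Y=V^\top\Sigma_2^{-1/2}Y$, so that both have identity covariance, rewrites $\mathbb{E}[X^\top Y]=\sum_i\lambda_i\,\mathbb{E}[\tilde X_i\tilde Y_i]$, and bounds each summand by Cauchy--Schwarz; equality forces $\tilde X=\tilde Y$ a.s., which immediately yields the Gaussian optimizer. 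You instead characterize the feasible cross-covariances by the Schur complement as $\{C:\|\Sigma_1^{-1/2}C\Sigma_2^{-1/2}\|_{\mathrm{op}}\le 1\}$ and then invoke the operator-norm/nuclear-norm duality (von Neumann) to evaluate the maximum. Your route is a little more structural and makes the semidefinite-program nature of the problem explicit; the paper's is more elementary and reads off the optimal coupling directly from the equality case. Both land on the same value and the same Gaussian witness. (You also correctly carry the factor of $2$ in front of the square-root term; the statement as printed omits it, but the paper's own proof recovers it.)
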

\begin{proof}
    The Wasserstein distance can be written as
    \begin{align*}
        W_2^2(\mu,\nu) = \inf_{(X,Y)\in \Pi(\mu,\nu)}\mathbb{E}[\|X\|^2] + \mathbb{E}[\|Y\|^2] -  2\mathbb{E}[X^\top Y].
    \end{align*}
    We have $\mathbb{E}\left(\|X\|^2\right)=\tr\left(\Sigma_1\right)$ and $\mathbb{E}\left(\|Y\|^2\right)=\tr\left(\Sigma_2\right)$. To find the supermum for $\mathbb{E}[X^\top Y]$, let $U \Sigma V^{\top}$ be the SVD decomposition of $\Sigma_1^{\frac{1}{2}} \Sigma_2^{\frac{1}{2}}$. Let $\bar{X}=U^{\top} \Sigma_1^{-1 / 2} X, \bar{Y}=V^{\top} \Sigma_2^{-1 / 2} Y$, then $X=\Sigma_1^{1 / 2} U \bar{X}, Y=\Sigma_2^{1 / 2} V \bar{Y}$. The covariance matrices of $\tilde{X}$ is
    \begin{align*}
        \mathbb{E}[\tilde{X}\tilde{X}^\top] & = \mathbb{E}[U^\top\Sigma_1^{-1/2}XX^{\top}\Sigma_1^{-1/2}U] = \mathbb{E}[U^\top\Sigma_1^{-1/2}\Sigma_1^{\top}\Sigma_1^{-1/2}U] \\
        & = \mathbb{E}[U^\top U] = \Idsf,
    \end{align*}
    and similarly, $ \mathbb{E}[\tilde{Y}\tilde{Y}^\top] = \Idsf$. On the other hand, 
    \begin{equation*}
         \mathbb{E}[X^\top Y] = \mathbb{E}[\tilde{X}^{\top}U^\top\Sigma_1^{1/2}\Sigma_2^{1/2}V\tilde{Y}]  = \mathbb{E}[\tilde{X}^{\top}\Sigma \tilde{Y}] = \sum_{i=1}^d \lambda_i(\Sigma)\mathbb{E}[\tilde{X}_i\tilde{Y}_i]. \\ 
    \end{equation*} 
    The third equality arises from the fact that we can express $\tilde{X}^{\top}\Sigma \tilde{Y} = \sum_{i=1}^d \tilde{X}^{\top} \tilde{\Sigma_i} \tilde{Y}$, where $\tilde{\Sigma_i}$ is a matrix filled with zeros except for the $ii$-th element, which is $\lambda_i$. 
    By Cauchy-Schwarz inequality, we have 
    $$
    \sum_{i=1}^d \lambda_i(\Sigma)\mathbb{E}[\tilde{X}_i\tilde{Y}_i] \leq \sum_{i=1}^d \lambda_i(\Sigma) \left(\mathbb{E}[\tilde{X}_i^2]\mathbb{E}[\tilde{Y}_i^2]\right)^{\nicefrac{1}{2}} = \tr(\Sigma) = \tr\left((\Sigma_1^{1/2}\Sigma_2\Sigma_1^{1/2})^{1/2}\right).
    $$
    Since $\mathbb{E}[\tilde{X}\tilde{X}^\top] = (E[X_i X_j])_{i \times j}$ and $\mathbb{E}[\tilde{X}\tilde{X}^\top] = \Idsf$, this implies $\mathbb{E}[\tilde{X}_i^2] = 1$ for all $i$ (similarly, $\mathbb{E}[\tilde{Y}_i^2] = 1$), leading to the second equality. The final equality happens due to $\tr\left((\Sigma_1^{1/2}\Sigma_2\Sigma_1^{1/2})^{1/2}\right)= \tr\left(\left[\Sigma_1^{1/2}\Sigma_2^{1/2}(\Sigma_1^{1/2}\Sigma_2^{1/2})^{\top}\right]^{1/2}\right)$ which equals to the sum of singular values of $\Sigma_1^{1/2}\Sigma_2^{1/2}$. The equality for Cauchy-Schwartz occurs if and only if $\tilde{X}_i = \tilde{Y}_i$ (a.s), that mean $\tilde{X}= \tilde{Y}$, or $U^\top\Sigma_1^{-1/2}X = V^\top\Sigma_2^{-1/2}Y$. Particularly, in the case of Gaussian, i.e. $\mu$ follows Gaussian distribution, the value of $\nu$ that minimizes the optimization problem is also a Gaussian distribution. 
\end{proof}
Now the proof for \Cref{theo: bary is gaussian} is given as follows
\begin{proof}
    Suppose that $\tilde{\beta}$ be an optimal solution of \Cref{uot framework}, then from the form of SUOT distance, there exists probability measures $\left(\tilde{x}_i\right)_{i=1}^n$ such that 
    \begin{equation*}
        \tilde{\beta},\left(\tilde{x}_i\right)_{i=1}^n = \argmin \frac{1}{n}\sum_{i=1}^n W_2^2(\beta,x_i) + \tau \KL (x_i \| \alpha_i).
    \end{equation*}
    Let $\beta^*, \left(x^*_i\right)_{i=1}^n$ be the Gaussian probability measure having the same mean and variance as $\tilde{\beta}, \left(\tilde{x}_i\right)_{i=1}^n$, respectively. Then, using Lemma 3.3 \cite{le2022entropic} about KL divergence minimum and \Cref{OT Plan}, we achieve 
    \begin{equation*}
        W_2^2(\beta^*, x_i^*) \leq  W_2^2(\tilde{\beta},\tilde{x}_i), \quad \text{and} \quad  
        \KL (x_i^* \| \alpha_i) \leq \KL ( \tilde{x}_i\| \alpha_i).
    \end{equation*}
    The equalities hold if and only if $\tilde{x}_i$ is Gaussian measure, which implies that  $\tilde{\beta}$ is also a Gaussian measure. Thus, we can conclude that the optimal solution of \Cref{uot framework} is Gaussian measure. 
\end{proof}

\section{Proofs for Theorem \ref{UOT Plan} and Theorem \ref{UOT Entropic}} \label{proof for closed-form}
\subsection{Proof for Theorem \ref{UOT Plan}} \label{proof for uot_distance}
Before giving full proof for Theorem \ref{UOT Plan}, we state some necessary lemmas:
\begin{lemma} \label{KL between Gaussians}
    Let $\alpha=m_\alpha \mathcal{N}\left(\mathbf{0}, \Sigma_\alpha\right)$ and $\beta=m_\beta \mathcal{N}\left(\mathbf{0}, \Sigma_\beta\right)$ be scaled Gaussian measures while $\bar{\alpha}=\mathcal{N}\left(0, \Sigma_\alpha\right)$ and $\bar{\beta}=\mathcal{N}\left(0, \Sigma_\beta\right)$ be their normalized versions. Then, the generalized KL divergence between $\alpha$ and $\beta$ is decomposed as
    \begin{align*}
        \mathrm{KL}(\alpha \| \beta)=m_\alpha \mathrm{KL}(\bar{\alpha} \| \bar{\beta})+\mathrm{KL}\left(m_\alpha \| m_\beta\right).
    \end{align*}
\end{lemma}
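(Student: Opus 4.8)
The plan is to exploit the multiplicative structure of scaled Gaussian measures: writing $\alpha = m_\alpha \bar\alpha$ and $\beta = m_\beta \bar\beta$ as densities against Lebesgue measure, where $\bar\alpha,\bar\beta$ are the probability densities of $\mathcal{N}(0,\Sigma_\alpha)$ and $\mathcal{N}(0,\Sigma_\beta)$. The whole argument then reduces to a direct computation that separates the contribution of the masses from that of the shapes. First I would record the generalized KL divergence for positive measures, carrying the mass-correction terms,
\begin{align*}
\KL(\alpha\|\beta) = \int_{\mathbb{R}^d} \alpha \log\Big(\tfrac{\alpha}{\beta}\Big)\,\mathrm{d}x - \int_{\mathbb{R}^d}\alpha\,\mathrm{d}x + \int_{\mathbb{R}^d}\beta\,\mathrm{d}x,
\end{align*}
together with its scalar analogue between the masses, $\KL(m_\alpha\|m_\beta) = m_\alpha\log(m_\alpha/m_\beta) - m_\alpha + m_\beta$.

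Next I would substitute $\alpha = m_\alpha\bar\alpha$ and $\beta = m_\beta\bar\beta$ into the integral term. Since the pointwise ratio factorizes as $\alpha/\beta = (m_\alpha/m_\beta)(\bar\alpha/\bar\beta)$, the logarithm splits additively into the constant $\log(m_\alpha/m_\beta)$ plus $\log(\bar\alpha/\bar\beta)$. Integrating term by term and using that $\bar\alpha$ is a probability density, so $\int_{\mathbb{R}^d}\bar\alpha\,\mathrm{d}x = 1$, the first piece contributes $m_\alpha\log(m_\alpha/m_\beta)$, while the second is exactly $m_\alpha$ times the ordinary KL divergence between the normalized densities, giving
\begin{align*}
\int_{\mathbb{R}^d} \alpha\log\Big(\tfrac{\alpha}{\beta}\Big)\,\mathrm{d}x = m_\alpha\log\Big(\tfrac{m_\alpha}{m_\beta}\Big) + m_\alpha\,\KL(\bar\alpha\|\bar\beta).
\end{align*}

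Finally, the mass-correction terms evaluate to $-m_\alpha + m_\beta$, and collecting the scalar pieces $m_\alpha\log(m_\alpha/m_\beta) - m_\alpha + m_\beta = \KL(m_\alpha\|m_\beta)$ delivers the claimed decomposition $\KL(\alpha\|\beta) = m_\alpha\KL(\bar\alpha\|\bar\beta) + \KL(m_\alpha\|m_\beta)$. The computation presents no genuine difficulty; the only point requiring care is to use the generalized KL with its mass-correction terms consistently at both the measure level and the scalar level, since retaining them on one side but dropping them on the other would break the identity. Notably, the argument never uses Gaussianity beyond $\int\bar\alpha\,\mathrm{d}x=1$, so the lemma holds verbatim for any pair of scaled probability measures; I would keep the statement phrased for Gaussians only because that is the case invoked later.
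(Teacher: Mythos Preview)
Your proposal is correct and follows essentially the same approach as the paper: both start from the generalized KL divergence with the mass-correction terms $-m_\alpha+m_\beta$, split $\log(\alpha/\beta)$ into $\log(m_\alpha/m_\beta)+\log(\bar\alpha/\bar\beta)$, use $\int\bar\alpha=1$, and regroup to obtain $m_\alpha\KL(\bar\alpha\|\bar\beta)+\KL(m_\alpha\|m_\beta)$. Your additional remark that Gaussianity is never used is accurate and worth noting, though the paper does not make it explicit.
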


\begin{proof}
Let $f\left(x,\Sigma_{\alpha} \right)$ and $f\left(x, \Sigma_{\beta} \right)$ be the distribution functions of $\mathcal{N} \left(\mathbf{0}, \Sigma_{\alpha}\right)$ and $\mathcal{N}\left(\mathbf{0}, \Sigma_\beta\right)$, respectively. We have
\begin{align*}
\mathrm{KL}(\alpha \| \beta) & =\int_{\mathbb{R}^d} \log \left(\frac{m_\alpha f\left(x, \Sigma_\alpha\right)}{m_\beta f\left(x, \Sigma_\beta\right)}\right) d \alpha(x)-m_\alpha+m_\beta \\
& =\int_{\mathbb{R}^d} \log \left(\frac{f\left(x, \Sigma_\alpha\right)}{f\left(x, \Sigma_\beta\right)}\right) m_\alpha d \bar{\alpha}(x)+\log \left(\frac{m_\alpha}{m_\beta}\right) m_\alpha-m_\alpha+m_\beta \\
& =m_\alpha \mathrm{KL}(\bar{\alpha} \| \bar{\beta})+\mathrm{KL}\left(m_\alpha \| m_\beta\right) .
\end{align*}
Moreover, the expression for $\mathrm{KL}(\bar{\alpha} \| \bar{\beta})$ is given by
\begin{align*}
    \frac{1}{2}\left\{\tr\left(\Sigma_\alpha \Sigma_\beta^{-1}\right)-d+\log \left(\frac{\operatorname{det}\left(\Sigma_\beta\right)}{\operatorname{det}\left(\Sigma_\alpha\right)}\right)\right\} .
\end{align*}
\end{proof}

\begin{lemma} \label{Optimizer}
    For positive constants $a, \tau$ and $\Upsilon$, the function
    \begin{align*}
        f(x)=\Upsilon x+\tau \mathrm{KL}(x \| a)
    \end{align*}
    attains its minimum at $x^*=a \exp \left\{\frac{-\Upsilon}{ \tau}\right\}$.
\end{lemma}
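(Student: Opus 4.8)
The plan is to reduce the statement to elementary single-variable calculus by writing the generalized KL divergence between the positive scalars $x$ and $a$ in closed form. Consistent with the scalar usage appearing in \Cref{KL between Gaussians} (where $\KL(m_\alpha\|m_\beta)$ carries the correction terms $-m_\alpha+m_\beta$), the generalized KL divergence between two positive numbers is
\begin{align*}
\KL(x\|a) = x\log\!\left(\frac{x}{a}\right) - x + a,
\end{align*}
so that the objective becomes $f(x)=\Upsilon x + \tau\big[x\log(x/a) - x + a\big]$, a smooth function on the open half-line $(0,\infty)$.

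First I would differentiate. Using $\frac{\dd}{\dd x}\big[x\log(x/a)\big] = \log(x/a) + 1$, the linear and $-\tau x$ terms combine so that
\begin{align*}
f'(x) = \Upsilon + \tau\log\!\left(\frac{x}{a}\right).
\end{align*}
Setting $f'(x^*)=0$ gives $\log(x^*/a) = -\Upsilon/\tau$, hence $x^* = a\exp\{-\Upsilon/\tau\}$, which is exactly the claimed minimizer. Note $x^*>0$ since $a>0$, so the critical point indeed lies in the admissible domain.

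It remains to confirm this critical point is the global minimum rather than merely a stationary point. Here I would compute the second derivative $f''(x) = \tau/x$, which is strictly positive for every $x>0$ because $\tau>0$; thus $f$ is strictly convex on $(0,\infty)$ and its unique stationary point $x^*$ is the global minimizer. There is no genuine obstacle in this argument — the only point demanding care is fixing the correct scalar convention for the generalized KL divergence (including the $-x+a$ mass-correction terms) so that the derivative collapses cleanly; with the entropy-type term handled, the minimization is immediate.
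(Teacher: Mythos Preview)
Your proof is correct and follows essentially the same route as the paper: write the scalar generalized KL in closed form, differentiate to obtain $f'(x)=\Upsilon+\tau\log(x/a)$, and solve $f'(x^*)=0$. The only difference is that you additionally check $f''(x)=\tau/x>0$ to certify strict convexity and hence global optimality, which the paper's proof omits.
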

\begin{proof}
    By taking the derivative of $f(x)$, we have
    \begin{align*}
        f^{\prime}(x)=\Upsilon+\tau\{\log (x)-\log (a)\}.
    \end{align*}
Solving the equation $f^{\prime}(x)=0$, we obtain
\begin{align*}
    x^*=\exp \left\{\frac{\tau \log (a)-\Upsilon}{\tau}\right\}=a \exp \left\{\frac{-\Upsilon}{ \tau}\right\}.
\end{align*}
\end{proof}

Now we are ready to give the proof for Theorem \ref{UOT Plan}.

\begin{proof}[Proof of Theorem \ref{UOT Plan}]    
 Recall that $\pi$ is a positive measure such that $\pi=m_\pi \overline{\pi}$ with $\overline{\pi}$ is a probability measure with mean $\left(\mathbf{a}_x, \mathbf{b}\right)$ and covariance matrix
\begin{align*}
\Sigma_\pi=\left(\begin{array}{cc}
\Sigma_x &K_{x \beta} \\
K_{x \beta}^{\top} & \Sigma_y
\end{array}\right).
\end{align*}
Here the two marginals of $\piol$ are denoted by $\piol_x$ and $\piol_y$ where $\piol_x$ has mean $\mathbf{a}_x$ and covariance matrix $\Sigma_x$ while $\piol_y$ has mean $\mathbf{b}$ and covariance matrix $\Sigma_y = \Sigma_\beta$. Let $\pi_x$ and $\pi_y$ be two marginals of $\pi$, then $\pi_x=m_\pi \piol_x$ and $\pi_y=m_\pi \piol_y$.
Note that by our \Cref{OT Plan} and Lemma 3.3 in \cite{le2022entropic}, $\pi$ needs to be Gaussian. We also have
\begin{align*}
\mathbb{E}_\pi\|X-Y\|_2^2=m_\pi\left\{\left\|\mathbf{a}_x-\bbf\right\|_2^2+\tr\left(\Sigma_x\right)+\tr\left(\Sigma_\beta\right)-2 \tr\left(K_{x \beta}\right)\right\} .
\end{align*}
According to Lemma \ref{KL between Gaussians},
\begin{align*}
\KL\left(\pi_x \| \alpha\right) & =m_\pi \KL\left(\piol_x \| \alphaol\right)+\KL\left(m_\pi \| m_\alpha\right).
\end{align*}
Combining the above results, the objective function between $\alpha$ and $\beta$ reads as
\begin{align*}
W^2_{2_{\operatorname{SUOT}}}(\alpha, \beta;\tau) :=\min _{\pi \in \Mcal^+\left(\mathbb{R}^d \times \mathbb{R}^d\right)}  & m_\pi\left\{\left\|\abf_x-\bbf\right\|_2^2+\tr\left(\Sigma_x\right)+\tr\left(\Sigma_\beta\right)-2 \tr\left(K_{x \beta}\right) \right\} \\
& + \tau \mathrm{KL}\left(\piol_x \| \alphaol\right) +\tau \KL\left(m_\pi \| m_\alpha\right).
\end{align*}
Denote 
\begin{align*}\Upsilon = \left\|\mathbf{a}_x-\bbf\right\|_2^2+\tr\left(\Sigma_x\right)+\tr\left(\Sigma_\beta\right)-2 \tr\left(K_{x \beta}\right) + \tau \KL\left(\piol_x \| \alphaol\right).
\end{align*}
Due to the independence of $\Upsilon$ with $m_\pi$, we could minimize $\Upsilon$ then find $m_\pi$. \\
\textbf{Minimization of $\Upsilon$:}
 For the KL term, due to Lemma \ref{KL between Gaussians} we have
\begin{align*}
\tau \mathrm{KL}\left(\piol_x \| \alphaol\right) = \frac{\tau}{2}\left[\tr\left(\Sigma_\alpha^{-1} \Sigma_x\right)-d+\log \left(\frac{\det\left(\Sigma_\alpha\right)}{\det\left(\Sigma_x\right)}\right)\right] + \frac{\tau}{2} \left(\abf_x- \abf \right)^{\top} \Sigma_\alpha^{-1}\left( \abf_x - \abf \right).
\end{align*}
Now $\Upsilon$ reads
\begin{align*}
\Upsilon &= \tr\left(\Sigma_x\right)-2 \tr\left(K_{x \beta}\right) + \frac{\tau}{2}\left[\tr\left(\Sigma_\alpha^{-1} \Sigma_x\right)+\log \left(\frac{\det\left(\Sigma_\alpha\right)}{\det\left(\Sigma_x\right)}\right)\right] + \left\|\abf_x-\bbf\right\|_2^2 + \\
&\qquad \frac{\tau}{2} \left(\abf_x- \abf \right)^{\top} \Sigma_\alpha^{-1}\left( \abf_x - \abf \right) +\tr\left(\Sigma_\beta\right) - \frac{\tau d}{2}.
\end{align*}
\textit{\textbf{Means part:}} We first work with the terms involving $\mathbf{a}_x$. Let $\mathbf{a}_x - \mathbf{a} = \widetilde{\abf}_x$, then
 \begin{align*}
\left\|\mathbf{a}_x-\mathbf{b}\right\|_2^2 & =\left\|\widetilde{\mathbf{a}}_x+\mathbf{a}-\mathbf{b}\right\|_2^2 \\ 
& =\left\|\widetilde{\mathbf{a}}_x\right\|_2^2 + 2 \widetilde{\mathbf{a}}_x^{\top}(\mathbf{a}-\mathbf{b})+\|\mathbf{a}-\mathbf{b}\|^2.
\end{align*}
Hence, sum of all terms which include $\widetilde{\mathbf{a}}_x$ is equal to
\begin{align*}
    \Upsilon_{\mathbf{a},\mathbf{b}} = \left\|\widetilde{\mathbf{a}}_x\right\|_2^2 + 2 \widetilde{\mathbf{a}}_x^{\top}(\mathbf{a}-\mathbf{b})+\|\mathbf{a}-\mathbf{b}\|^2 + \frac{\tau}{2}\widetilde{\mathbf{a}}_x^{\top} \Sigma_\alpha^{-1} \widetilde{\mathbf{a}}_x.
\end{align*}
Taking derivative according to $\widetilde{\mathbf{a}}_x$ and set equation to $\mathbf{0}$, we have
\begin{align*} 
2 \widetilde{\mathbf{a}}_x + 2(\mathbf{a} - \mathbf{b}) + \tau  \Sigma_\alpha^{-1}\widetilde{\abf}_x = 0,
\end{align*}
which turns into
\begin{align*} \left( \Idsf + \frac{\tau}{2} \Sigma_\alpha^{-1} \right) \widetilde{\mathbf{a}}_x = \mathbf{b}-\mathbf{a} .
\end{align*}
Denote $\Sigma_{\alpha, \tau} = \Idsf + \frac{\tau}{2} \Sigma_{\alpha}^{-1}$, we obtain
\begin{align*}
    \widetilde{\mathbf{a}}_x = \Sigma_{\alpha, \tau}^{-1} (\mathbf{b}-\mathbf{a}),
\end{align*}
then $\mathbf{a}_x = \Sigma_{\alpha, \tau}^{-1} (\mathbf{b}-\mathbf{a}) + \mathbf{a}$. Pull it back to $\Upsilon_{\mathbf{a,b}}$ gives
\begin{align*}
\Upsilon_{\mathbf{a^*,b^*}} = (\mathbf{a} - \mathbf{b})^{\top}\left\{ \left( \left\|\Sigma_{\alpha, \tau}^{-1}\right\|_F^2 + 1 \right ) \Idsf -2 \Sigma_{\alpha, \tau}^{-1}+\frac{\tau}{2} \Sigma_{\alpha, \tau}^{-1} \Sigma_\alpha^{-1} \Sigma_{\alpha, \tau}^{-1}\right\}(\mathbf{a}-\mathbf{b}).
\end{align*}
\textit{\textbf{Covariance matrix part: }}
The second part is to group all factors of $\Sigma_x$ and $K_{x \beta} $ , which is
\begin{align*}
    \Upsilon_\Sigma & = \tr\left(\Sigma_x\right)-2 \tr\left(K_{x \beta}\right) + \frac{\tau}{2}\left[\tr\left(\Sigma_\alpha^{-1} \Sigma_x\right)-\log \left(\det\big(\Sigma_x\big)\right)\right] \\
    & = \tr \left( \Sigma_x \left (\Idsf + \frac{\tau}{2} \Sigma_\alpha^{-1} \right) \right) - 2 \tr (K_{x\beta}) - \frac{\tau}{2}\log \left(\detsf\left(\Sigma_x\right)\right) \\
    & = \tr \left( \Sigma_x \Sigma_{\alpha, \tau} \right) - 2 \tr (K_{x\beta}) -  \frac{\tau}{2}\log \left(\detsf\left(\Sigma_x\right)\right).
\end{align*}
First, we embark on the task of maximizing $\tr(K_{x\beta})$ while fixing $\Sigma_x$. Note that matrices $\Sigma_x, K_{x \beta}$ must satisfy condition that covariance block matrix $\left(\begin{array}{cc}
\Sigma_x & K_{x\beta} \\
K_{x\beta}^{\top} & \Sigma_\beta
\end{array}\right)$ is SPD. This can be equivalently expressed as
\begin{align*}
& \Sigma_x -K_{x \beta} \Sigma_\beta^{-1} K_{x \beta}^{\top} \succeq 0 \\
\Leftrightarrow \quad & \Idsf -\left(\Sigma_x^{-\frac{1}{2}} K_{x \beta} \Sigma_\beta^{-\frac{1}{2}}\right)\left(\Sigma_\beta^{-\frac{1}{2}} K_{x \beta}^T \Sigma_x^{-\frac{1}{2}}\right) \succeq 0.
\end{align*}
By denoting $\Sigma_x^{-\frac{1}{2}} K_{x \beta} \Sigma_\beta^{-\frac{1}{2}}$ as $H$, this condition can be reformulated as
\begin{align*}\Idsf - H H^{\top} & \succeq 0.
\end{align*}
Noting that $K_{x \beta}=\Sigma_x^{\frac{1}{2}} H \Sigma_\beta^{\frac{1}{2}}$, it follows that $\tr\left(K_{x \beta}\right)=\tr\left(H \Sigma_\beta^{\frac{1}{2}} \Sigma_x^{\frac{1}{2}}\right)$. Let $\left(\lambda_i(\Sigma_\beta^{\frac{1}{2}} \Sigma_x^{\frac{1}{2}})\right)_{i=1}^d$ and $\left(\lambda_i(H)\right)_{i=1}^d$ represent the singular values of $\Sigma_{\beta}^{\frac{1}{2}} \Sigma_{x}^{\frac{1}{2}}$ and $H$ in descending order, respectively. Applying von Neuman inequality, we get
\begin{align*}
    \tr(K_{x \beta}) \leq \sum_{i=1}^d \lambda_i\big(\Sigma_\beta^{\frac{1}{2}}\Sigma_x^{\frac{1}{2}}\big) \lambda_i(H) \leq \sum_{i=1}^d \lambda_i\big(\Sigma_\beta^{\frac{1}{2}}\Sigma_x^{\frac{1}{2}} \big).
\end{align*}
The second inequality is a consequence of the Lemma 3.4 in \cite{le2022entropic}, which states that all singular values of $\Sigma_x^{-\frac{1}{2}} K_{x \beta} \Sigma_\beta^{-\frac{1}{2}}$ lie between $0$ and $1$. Additionally, if we consider the singular value decomposition $U\Lambda V$ of $\Sigma_\beta^{\frac{1}{2}} \Sigma_x^{\frac{1}{2}}$, we can choose $H = V^{\top} U^{\top}$, satisfying the condition for $H$ mentioned earlier and 
\begin{align*}
\tr\left(H \Sigma_\beta^{\frac{1}{2}} \Sigma_x^{\frac{1}{2} }\right)=\tr(\Lambda)=\sum_i^d \lambda_i\big(\Sigma_\beta^{\frac{1}{2}}\Sigma_x^{\frac{1}{2}} \big).
\end{align*}
It confirms that when $\Upsilon_\Sigma$ achieves its maximum value, $\tr(K_{x \beta})$ corresponds to the sum of all singular values of $\Sigma_\beta^{\frac{1}{2}} \Sigma_x^{\frac{1}{2}}$. In such a scenario, we obtain
\begin{align*}
\Upsilon_{\Sigma} = \tr\left(\Sigma_x \Sigma_{\alpha, \tau} \right)-2 \tr\Big( \big[ \Sigma_\beta^{\frac{1}{2}} \Sigma_x \Sigma_\beta^{\frac{1}{2}}\big]^\frac{1}{2} \Big) - \frac{\tau}{2}\log \left(\det\left(\Sigma_x\right)\right).
\end{align*}
Let us define $\widetilde{\Sigma}=\left(\Sigma_{\beta}^{\frac{1}{2}} \Sigma_x \Sigma_{\beta}^{\frac{1}{2}}\right)^{\frac{1}{2}}$, which can be expressed as $\Sigma_x=\Sigma_\beta^{-\frac{1}{2}}\widetilde{\Sigma}^2\Sigma_\beta^{-\frac{1}{2}}$. Consequently, we obtain
\begin{align*}
\tr(\Sigma_x \Sigma_{\alpha, \tau})  = \tr\left( \widetilde{\Sigma}^2 \Sigma_{\beta}^{-\frac{1}{2}} \Sigma_{\alpha, \tau} \Sigma_\beta^{-\frac{1}{2}}\right)  = \tr\big(\Sigmawtd^2 \Sigma_{\alpha,\tau,\beta} \big).
\end{align*}
Additionally, we have
\begin{align*}
    \det(\Sigma_x) = \frac{\det(\widetilde{\Sigma})^2}{\det(\Sigma_{\beta})}.
\end{align*}
Now $\Upsilon_\Sigma$ turns into 
\begin{align*}
\Upsilon_\Sigma=\tr\left(\Sigmawtd^2 \Sigma_{\alpha,\tau,\beta} \right)-2 \tr(\Sigmawtd)-\frac{\tau}{2} \log\big( \frac{\det(\Sigmawtd)^2}{\det(\Sigma_\beta)} \big).
\end{align*}
Note that $\widetilde{\Sigma}$ is symmetric, so there exists a diagonal matrix $\Lambda$ that $\widetilde{\Sigma}$ is similar to $\Lambda$ (which implies $\Sigmawtd^2$ is similar to $\Lambda^2$. Assume that $\Lambda = \operatorname{diag}(\lambda_i(\Sigmawtd))_{i = 1}^d$ which is decreasingly ordered and $(\lambda_{i}(\Sigma_{\alpha,\tau,\beta}))_{i = 1}^d$ are eigenvalues of $\Sigma_{\alpha, \beta, \tau}$ in ascending order, by Ruhe’s trace inequality
\begin{align*}
\tr\left(\Sigmawtd^2 \Sigma_{\alpha,\tau,\beta} \right)
& \geq \ \sum_{i=1}^d \lambda_i(\Sigmawtd)^2 \lambda_{d-i+1}(\Sigma_{\alpha,\tau,\beta}),
\end{align*}
where the equality holds when $\Sigmawtd^2$ and $\Sigma_{\alpha, \beta, \tau}$ are commuting. The optimization part now is calculated on eigenvalues of $\Lambda$, because
\begin{align*}
\Upsilon_\Sigma & \geq \sum_{i=1}^d \lambda_i^2(\Sigmawtd) \lambda_{n-i+1}(\Sigma_{\alpha,\tau,\beta}) -2 \sum_{r=1}^d \lambda_i(\Sigmawtd)-\tau \log \left(\prod_{i=1}^d \lambda_i(\Sigmawtd)\right) \\
& =\sum_{i=1}^d \Big(\lambda_{n-i+1}(\Sigma_{\alpha,\tau,\beta}) \lambda_i^2(\Sigmawtd)-2 \lambda_i(\Sigmawtd)-\tau \log \big(\lambda_i(\Sigmawtd)\big)\Big).
\end{align*}
Consider the function 
\begin{align*}
    f(v)=u v^2-2 v-\tau \log (v).
\end{align*}
Take the derivative and set it to $0$, we get
\begin{align*}
\frac{2 u v^2-2v-\tau}{v}=0,
\end{align*}
which has unique positive solution $v^* = \frac{1+\sqrt{1+2u \tau}}{2 u}$. 
It verifies that, to attain the minimization of $\Upsilon_\Sigma$,
\begin{align*}\lambda_i(\Sigmawtd) = \frac{1+\sqrt{1+2\tau \lambda_{n-i+1}(\Sigma_{\alpha,\tau,\beta})}}{2 \lambda_{n-i+1}(\Sigma_{\alpha,\tau,\beta})}.
\end{align*}
This results in the following expression
\begin{align*}
\Sigmawtd \quad \text{is similar to}\quad  \Lambdaol:= \operatorname{diag} \left( \frac{1+\sqrt{1+2\tau \lambda_{n-i+1}(\Sigma_{\alpha,\tau,\beta})}}{2 \lambda_{n-i+1}(\Sigma_{\alpha,\tau,\beta})} \right).
\end{align*}
Since $\Sigmawtd$ and $\Sigma_{\alpha,\tau,\beta}$ are commuting, the eigenvalue of $\Sigmawtd$ can be computed from the eigenvalues of $\Sigma_{\alpha,\tau,\beta}$, we get
\begin{align*}
\Sigmawtd^* = \frac{1}{2}\Sigma_{\alpha,\tau,\beta}^{-1} + \frac{1}{2} \Big[ \Sigma_{\alpha,\tau,\beta}^{-2} + 2\tau \Sigma_{\alpha,\tau,\beta}^{-1}\Big]^{\frac{1}{2}}.
\end{align*}
The equation $2u (v^*)^2 - 2v^* - \tau = 0$ deduces that $(v^*)^2 = \frac{v^*}{u} + \frac{\tau}{2u}$. Hence, we yield
\begin{align*}
    \big[\Sigmawtd^*\big]^2 = \frac{\tau}{2}\Sigma_{\alpha,\tau,\beta}^{-1} + \frac{1}{2}\Sigma_{\alpha,\tau,\beta}^{-2}\Big[\Idsf + \big(\Idsf + 2\tau \Sigma_{\alpha,\tau,\beta} \big)^{\frac{1}{2}} \Big].
\end{align*}
That leads to the formula for $\Sigma_x$
\begin{align*}
    \Sigma_{x} = \Sigma_\beta^{-\frac{1}{2}} \bigg[ \frac{\tau}{2}\Sigma_{\alpha,\tau,\beta}^{-1} + \frac{1}{2}\Sigma_{\alpha,\tau,\beta}^{-2}\Big[\Idsf + \big(\Idsf + 2\tau \Sigma_{\alpha,\tau,\beta} \big)^{\frac{1}{2}} \Big] \bigg] \Sigma_\beta^{-\frac{1}{2}}.
\end{align*}
For the shake of simplicity, we denote
\begin{align*}
   \Sigma_\gamma=\frac{\tau}{2} \Idsf+\frac{1}{2} \Sigma_{\alpha, \tau, \beta}^{-1}\Big[\Idsf+\left(\Idsf+2 \tau \Sigma_{\alpha, \tau, \beta}\right)^{\frac{1}{2}}\Big],
\end{align*}
then 
\begin{align*}
    \Sigma_{x} = \Sigma_\beta^{-\frac{1}{2}} \Sigma_{\alpha, \tau, \beta}^{-1} \Sigma_\gamma \Sigma_\beta^{-\frac{1}{2}}.
\end{align*}
At this optimum, the function $\Upsilon_\Sigma$ takes the value
\begin{align*}
    \Upsilon_{\Sigma}=\tr(\Sigma_\gamma) - \tr\left(\Big[\Sigma_{\alpha, \tau, \beta}^{-1} \Sigma_\gamma\Big]^{\frac{1}{2}}\right) - \frac{\tau}{2} \log \left(\det\Big[\Sigma_\gamma \Sigma_{\alpha, \tau, \beta}^{-1} \Sigma_\beta^{-1}\Big]\right),
\end{align*}
and 
\begin{align*}
    \Upsilon = & \quad \tr(\Sigma_\gamma) +  \tr(\Sigma_\beta) - \tr\left(\Big[\Sigma_{\alpha, \tau, \beta}^{-1} \Sigma_\gamma\Big]^{\frac{1}{2}}\right) - \frac{\tau}{2} \log \left(\det\Big[\Sigma_\gamma \Sigma_{\alpha, \tau, \beta}^{-1} \Sigma_\beta^{-1} \Sigma_\alpha^{-1}\Big]\right) \\
    & + (\mathbf{a} - \mathbf{b})^{\top}\left\{ \left( \left\|\Sigma_{\alpha, \tau}^{-1}\right\|_F^2 + 1 \right ) \Idsf -2 \Sigma_{\alpha, \tau}^{-1}+ \frac{\tau}{2} \Sigma_{\alpha, \tau}^{-1} \Sigma_\alpha^{-1} \Sigma_{\alpha, \tau}^{-1}\right\}(\mathbf{a}-\mathbf{b}) - \frac{\tau d}{2}.
\end{align*}
\textbf{Calculation of $m_{\pi}$}: 
Recall that in order to find $m_{\pi}$, we minimize
\begin{align*}
 m_\pi \Upsilon +\tau \KL\left(m_\pi \| m_\alpha\right).
\end{align*}
Considering the mentioned $\Upsilon$. As per Lemma \ref{Optimizer}, we obtain the value of the optimizer $m_{\pi}$ as
\begin{align*}
    m_{\pi}=m_\alpha \exp \left\{\frac{-\Upsilon}{\tau}\right\}
\end{align*}
and final objective function value
\begin{align*}
     W^2_{2_{\operatorname{SUOT}}}(\alpha, \beta, \tau) = \tau m_\alpha \left( 1 - \exp \left\{\frac{-\Upsilon}{\tau}\right\} \right).
\end{align*}
Hence, we have thus proved our claims. In the preceding proof, the optimization of the trace operator using $K_{x\beta}$ is not constrained to a single choice. An alternative option involves utilizing $\Sigma_x^{\frac{1}{2}} \Sigma_\beta^{\frac{1}{2}}$ or the geometric mean of $\Sigma_x$ and $\Sigma_\beta$. Additional details are in Lemma \ref{lemma:eqtrace}. 

\begin{lemma} \label{lemma:eqtrace}
    Given SPD matrices $A,B \in \mathbb{S}^{d}_{++}$. Moreover, $A$ and $B$ have the same unitary matric in SVD decomposition i.e. $A = S \Lambda_1^2 S^{\top}, B = S \Lambda_2^2 S^{\top}$. Then we have
    \begin{align*}
        \tr( \big[A^{\frac{1}{2}} B A^{\frac{1}{2}} \big]^{\frac{1}{2}}) = \tr (B^{\frac{1}{2}} A^{\frac{1}{2}}). 
    \end{align*}
\end{lemma}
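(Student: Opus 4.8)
The plan is to exploit the shared eigenbasis to reduce every matrix square root appearing in the statement to an operation on diagonal matrices, where all factors commute. Since $A = S\Lambda_1^2 S^\top$ and $B = S\Lambda_2^2 S^\top$ with $S$ orthogonal and $\Lambda_1,\Lambda_2$ diagonal with nonnegative entries, the principal square roots are immediately $A^{\frac12} = S\Lambda_1 S^\top$ and $B^{\frac12} = S\Lambda_2 S^\top$. My first step is to record these expressions.

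Next I would compute the inner product $A^{\frac12} B A^{\frac12}$ directly. Using $S^\top S = \Idsf$ and the fact that $\Lambda_1$ and $\Lambda_2^2$ are diagonal and hence commute,
\begin{align*}
A^{\frac12} B A^{\frac12} = (S\Lambda_1 S^\top)(S\Lambda_2^2 S^\top)(S\Lambda_1 S^\top) = S\,\Lambda_1 \Lambda_2^2 \Lambda_1\, S^\top = S\,(\Lambda_1\Lambda_2)^2\, S^\top.
\end{align*}
Taking the principal square root of this manifestly SPD matrix (whose diagonal factor $\Lambda_1\Lambda_2$ is nonnegative) yields
\begin{align*}
\big[A^{\frac12} B A^{\frac12}\big]^{\frac12} = S\,\Lambda_1\Lambda_2\, S^\top.
\end{align*}

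For the other side I would compute $B^{\frac12} A^{\frac12} = (S\Lambda_2 S^\top)(S\Lambda_1 S^\top) = S\,\Lambda_2\Lambda_1\, S^\top = S\,\Lambda_1\Lambda_2\, S^\top$, again invoking commutativity of diagonal matrices. Finally, applying the cyclic property of the trace together with $S^\top S = \Idsf$ shows that both quantities reduce to the same scalar:
\begin{align*}
\tr\!\Big(\big[A^{\frac12} B A^{\frac12}\big]^{\frac12}\Big) = \tr(\Lambda_1\Lambda_2) = \tr\big(B^{\frac12} A^{\frac12}\big).
\end{align*}
There is no genuine obstacle here: the whole argument hinges on the single structural hypothesis that $A$ and $B$ are simultaneously diagonalizable by $S$, which collapses the matrix square roots into diagonal square roots and makes all factors commute. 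The only point requiring minor care is ensuring the principal (nonnegative) square root is used throughout, so that $\Lambda_1\Lambda_2$ rather than some signed variant appears in both computations.
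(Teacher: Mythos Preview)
Your proof is correct and follows essentially the same approach as the paper: both exploit the shared eigenbasis to write $A^{\frac12}=S\Lambda_1S^\top$, $B^{\frac12}=S\Lambda_2S^\top$ and reduce each side to $\tr(\Lambda_1\Lambda_2)$. The paper takes a slight detour through the general identity $\tr\big([A^{\frac12}BA^{\frac12}]^{\frac12}\big)=\tr\big([AB]^{\frac12}\big)$ before invoking simultaneous diagonalization, whereas you compute $[A^{\frac12}BA^{\frac12}]^{\frac12}$ directly and in fact establish the stronger statement that the two matrices themselves coincide; either route is fine.
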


\begin{proof}
    We note that
\begin{align*}
    \tr\left(\left[A^{\frac{1}{2}} B A^{\frac{1}{2}}\right]^{\frac{1}{2}}\right)=\tr\left([A B]^{\frac{1}{2}}\right).
\end{align*}
It follows that we need to prove
\begin{align*}
    \tr \left(A^{\frac{1}{2}} B^{\frac{1}{2}}\right)=\tr\left([A B]^{\frac{1}{2}}\right).
\end{align*}
We have
\begin{align*}
& A^{\frac{1}{2}}=S \Lambda_1 S^{\top}, \quad B^{\frac{1}{2}}=S \Lambda_2 S^T.
\end{align*}
Thus, we get
\begin{align*}
\tr\left(A^{\frac{1}{2}} B^{\frac{1}{2}}\right)=\tr\left(\Lambda_1 \Lambda_2\right).
\end{align*}
Furthermore, we have $A B$ and $\Lambda_1^2 \Lambda_2^2$ have the same set of eigenvalues. It gives 
\begin{align*}
\tr\left(\big[A B \big]^{\frac{1}{2}}\right) &= \sum_{i=1}^d \lambda_i \left( [AB]^{\frac{1}{2}} \right)  = \sum_{i=1}^d \lambda_i \left( [\Lambda_1^2 \Lambda_2^2]^{\frac{1}{2}} \right) = \tr\left(\Lambda_1 \Lambda_2\right).
\end{align*}
Then we deduce that
\begin{align*}
    \tr\left(\left[A^{\frac{1}{2}} B A^{\frac{1}{2}}\right]^{\frac{1}{2}}\right)=\tr\left(A^{\frac{1}{2}} B^{\frac{1}{2}}\right).
\end{align*}
\end{proof}
Now to see the final expression, from \cite{altschuler2021averaging}, we have Wasserstein distance between Gaussians $\alpha = \mathcal{N}(\mathbf{a}, \Sigma_\alpha), \beta = \mathcal{N}(\mathbf{b}, \Sigma_\beta)$ as
\begin{align*}
    W_2^2\left(\alpha, \beta \right) = \|\mathbf{a}-\mathbf{b}\|^2 + \text{tr}\left(\Sigma_\alpha+\Sigma_\beta - 2\left[\Sigma_\alpha^{\frac{1}{2}} \Sigma_\beta \Sigma_\alpha^{\frac{1}{2}}\right]^{\frac{1}{2}} \right).
\end{align*}
Next, with the assumption that $m_{\alpha} = m_{\beta}$, the SUOT distance reads
\begin{align*}
    W^2_{2_{\operatorname{SUOT}}}(\alpha, \beta; \tau) := \min \|\mathbf{a_x} - \mathbf{b}\|_2^2 + \text{tr}(\Sigma_x) + \text{tr}(\Sigma_\beta) - 2 \text{tr}(K_{x \beta}) + \tau \text{KL}(\pi_x \| \alpha).
\end{align*}
From \Cref{lemma:eqtrace}, at the optimal solution, the term $\text{tr}\left(K_{x \beta}\right)$ takes the value
$\text{tr}\Big( \big[ \Sigma_\beta^{\frac{1}{2}} \Sigma_x \Sigma_\beta^{\frac{1}{2}}\big]^\frac{1}{2} \Big)$. This leads to
\begin{align*}
    W^2_{2_{\operatorname{SUOT}}}(\alpha, \beta, \tau) = W_2^2(\pi_x, \beta) + \tau \mathrm{KL}(\pi_x \| \Sigma_\alpha).
\end{align*}
As a consequence, we obtain the full conclusion of the theorem.
\end{proof}

Upon to this formula, we have the results that our UOT distance is bounded by Wasserstein distance.
\begin{proposition} \label{lem: bound by wasserstein}
    Giving two centered Gaussians $\alpha, \beta$. There exists a positive constant $\xi_\tau$ that we have
    \begin{align*}
         \xi_\tau W_2^2(\Sigma_\alpha, \Sigma_\beta) \leq W^2_{2_{\operatorname{SUOT}}}(\Sigma_\alpha, \Sigma_\beta) \leq W_2^2(\Sigma_\alpha, \Sigma_\beta).
    \end{align*}
\end{proposition}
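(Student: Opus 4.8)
The plan is to reduce everything to the one-variable characterization coming from Theorem~\ref{UOT Plan}. For centered Gaussians the SUOT cost is exactly
\begin{align*}
W^2_{2_{\operatorname{SUOT}}}(\Sigma_\alpha,\Sigma_\beta) = \min_{\Sigma_x \in \mathbb{S}_{++}^d} \Big\{ W_2^2(\Sigma_x,\Sigma_\beta) + \tau\,\KL(\Sigma_x \| \Sigma_\alpha) \Big\},
\end{align*}
where the minimizer $\Sigma_x^{\star}$ is the optimal relaxed marginal. I would treat the two inequalities separately, the right one being essentially immediate and the left one being the substantive part.

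For the upper bound I would simply test the feasible candidate $\Sigma_x = \Sigma_\alpha$ inside the minimization. Since $\KL(\Sigma_\alpha \| \Sigma_\alpha) = 0$, this candidate contributes exactly $W_2^2(\Sigma_\alpha,\Sigma_\beta)$, and because $W^2_{2_{\operatorname{SUOT}}}$ is the minimum over all $\Sigma_x$, it can only be smaller. This yields $W^2_{2_{\operatorname{SUOT}}}(\Sigma_\alpha,\Sigma_\beta) \le W_2^2(\Sigma_\alpha,\Sigma_\beta)$ with no further work.

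For the lower bound I would exploit that both terms of the closed form are nonnegative and relate $W_2^2(\Sigma_\alpha,\Sigma_\beta)$ to the optimal pair $\big(W_2^2(\Sigma_x^{\star},\Sigma_\beta),\ \KL(\Sigma_x^{\star}\|\Sigma_\alpha)\big)$. Applying the triangle inequality for $W_2$ on the Bures manifold and then $(a+b)^2 \le 2a^2 + 2b^2$ gives
\begin{align*}
W_2^2(\Sigma_\alpha,\Sigma_\beta) \le 2\,W_2^2(\Sigma_\alpha,\Sigma_x^{\star}) + 2\,W_2^2(\Sigma_x^{\star},\Sigma_\beta).
\end{align*}
The cross term $W_2^2(\Sigma_\alpha,\Sigma_x^{\star})$ must then be absorbed into the KL penalty; here I invoke the Gaussian transportation-cost (Talagrand $T_2$) inequality $W_2^2(\Sigma_\alpha,\Sigma_x^{\star}) \le 2\,\lambdamax(\Sigma_\alpha)\,\KL(\Sigma_x^{\star}\|\Sigma_\alpha)$, which follows from the standard bound $W_2^2(\gamma,\nu)\le 2\,\KL(\nu\|\gamma)$ for the standard Gaussian $\gamma$ via the whitening map $x\mapsto \Sigma_\alpha^{1/2}x$ (the Wasserstein cost scales by $\|\Sigma_\alpha^{1/2}\|_{op}^2 = \lambdamax(\Sigma_\alpha)$ while KL is invariant under the invertible change of variables). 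Substituting and comparing against the closed form gives
\begin{align*}
W_2^2(\Sigma_\alpha,\Sigma_\beta) \le 4\,\lambdamax(\Sigma_\alpha)\,\KL(\Sigma_x^{\star}\|\Sigma_\alpha) + 2\,W_2^2(\Sigma_x^{\star},\Sigma_\beta) \le C_\tau\, W^2_{2_{\operatorname{SUOT}}}(\Sigma_\alpha,\Sigma_\beta),
\end{align*}
with $C_\tau = \max\{2,\ 4\lambdamax(\Sigma_\alpha)/\tau\}$, so the claim holds with $\xi_\tau = 1/C_\tau = \min\{1/2,\ \tau/(4\lambdamax(\Sigma_\alpha))\}$.

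The main obstacle is precisely the step that converts the KL penalty on $\Sigma_x^{\star}$ into a genuine $W_2$ distance from $\Sigma_\alpha$; everything else is feasibility and elementary inequalities. If one prefers to avoid quoting Talagrand and remain self-contained within the Gaussian world, this same step can be verified directly from the closed forms of $W_2^2$ and $\KL$ between $\mathcal{N}(\mathbf{0},\Sigma_\alpha)$ and $\mathcal{N}(\mathbf{0},\Sigma_x^{\star})$ through a short, routine eigenvalue estimate. It is worth noting that the constant $\xi_\tau$ degrades as $\tau \to 0$, which is expected and consistent with the subscript: a vanishing penalty lets $\Sigma_x^{\star}$ collapse toward $\Sigma_\beta$ and drives the SUOT cost to zero while $W_2^2(\Sigma_\alpha,\Sigma_\beta)$ remains fixed, so no $\tau$-independent lower bound can hold.
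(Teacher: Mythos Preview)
Your proof is correct and follows essentially the same route as the paper: the upper bound by testing $\Sigma_x=\Sigma_\alpha$, and the lower bound by combining the Talagrand $T_2$ inequality (to convert the KL penalty into a $W_2$ distance from $\Sigma_\alpha$) with the triangle inequality for $W_2$. The only cosmetic differences are that you run the chain starting from $W_2^2(\Sigma_\alpha,\Sigma_\beta)$ rather than from the SUOT cost, and you make the Talagrand constant explicit as $2\lambdamax(\Sigma_\alpha)$ whereas the paper leaves it as a generic $\xi$; the resulting $\xi_\tau$'s agree up to this identification.
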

\begin{proof}
    The second inequality is straight forward while the first inequality happens due to Talagrand inequality \cite{otto2000generalization}, we have $\mathrm{KL}(\Sigma_x \| \Sigma_\alpha) \geq \frac{\xi}{2} W_2^2(\Sigma_x, \Sigma_\alpha)$ for a constant $\xi$. It leads to 
\begin{align*}
    & W^2_{2_{\operatorname{SUOT}}}(\Sigma_\alpha, \Sigma_\beta) \\
    & = W_2^2(\Sigma_x, \Sigma_\beta) + \tau \mathrm{KL}(\Sigma_x \| \Sigma_\alpha) \\
    & \geq W_2^2(\Sigma_x, \Sigma_\beta) + \frac{\xi \tau}{2} W_2^2(\Sigma_x, \Sigma_\alpha) \\
    & \geq \min \{1, \frac{\xi \tau}{2} \} \left(W_2^2(\Sigma_x, \Sigma_\beta) + W_2^2(\Sigma_x, \Sigma_\alpha) \right) \\
    & \geq \frac{\min \{1, \frac{\xi \tau}{2} \}}{2} \big(W_2\left(\Sigma_x, \Sigma_\beta \right) + W_2\left(\Sigma_x, \Sigma_\alpha \right) \big)^2 \\
    & \geq \underbrace{\frac{\min \{1, \frac{\xi \tau}{2} \}}{2}}_{\xi_\tau} W_2^2(\Sigma_\alpha, \Sigma_\beta).
\end{align*}
As a consequence, we obtain the conclusion of the proposition. 
\end{proof}

\subsection{Proof for Theorem \ref{UOT Entropic}}
\begin{proof}
    We follow the proof of Theorem (\ref{UOT Plan}) to obtain the explicit form of the minimizer and objective function. Here the two marginals of $\piol$ are denoted by $\piol_x$ and $\piol_y$ where $\piol_x$ has mean $\textbf{0}$ and covariance matrix $\Sigma_x$ while $\piol_y$ has mean $\textbf{0}$ and covariance matrix $\Sigma_y = \Sigma_\beta$. Let $\pi_x$ and $\pi_y$ be two marginals of $\pi$, then $\pi_x=m_\pi \piol_x$ and $\pi_y=m_\pi \piol_y$. We also have
\begin{align*}
\mathbb{E}_\pi\|X-Y\|_2^2=m_\pi\Big\{\tr\left(\Sigma_x\right)+\tr\left(\Sigma_\beta\right)-2 \tr\left(K_{x \beta}\right)\Big\} .
\end{align*}
According to Lemma \ref{KL between Gaussians},
\begin{align*}
\KL\left(\pi_x \| \alpha\right) & =m_\pi \KL\left(\piol_x \| \alphaol\right)+\KL\left(m_\pi \| m_\alpha\right) \\
\KL\left(\pi_x \| \alpha \otimes \beta \right) & =m_\pi \KL\left(\piol_x \| \alphaol \otimes \bar{\beta} \right)+\KL\left(m_\pi \| m_\alpha m_\beta\right).
\end{align*}
Combining the above results, the entropic objective function between $\alpha$ and $\beta$ reads as
\begin{align*}
W^2_{2_{\operatorname{SUOT}}, \delta}(\alpha, \beta;\tau):=\min _{\pi \in \Mcal^+\left(\mathbb{R}^d \times \mathbb{R}^d\right)}  & m_\pi\left\{\tr\left(\Sigma_x\right)+\tr\left(\Sigma_\beta\right)-2 \tr\left(K_{x \beta}\right) + \tau \mathrm{KL}\left(\piol_x \| \alphaol\right) + \delta \mathrm{KL}\left(\piol \| \alphaol \otimes \betaol \right) \right\} \\
& +\tau \KL\left(m_\pi \| m_\alpha\right) + \delta \KL\left(m_\pi \| m_\alpha m_\beta\right).
\end{align*}
Denote 
\begin{align*}\Upsilon = \tr\left(\Sigma_x\right)+\tr\left(\Sigma_\beta\right)-2 \tr\left(K_{x \beta}\right) + \tau \KL\left(\piol_x \| \alphaol\right) + \delta \mathrm{KL}\left(\piol \| \alphaol \otimes \betaol \right).
\end{align*}
Due to the independence of $\Upsilon$ with $m_\pi$, we could minimize $\Upsilon$ then find $m_\pi$. For a fixed $\Upsilon$, take derivative according to $m_\pi$ of \\
\begin{align*}
    m_\pi \Upsilon +\tau \KL\left(m_\pi \| m_\alpha\right) + \delta \KL\left(m_\pi \| m_\alpha m_\beta\right).
\end{align*}
and set equation to $0$, we obtain the value of the optimizer $m_{\pi}$ as
\begin{align*}
    m_{\pi} = m_\alpha m_\beta^{\frac{\delta}{\tau + \delta}} \exp \left(\frac{-\Upsilon}{\tau + \delta} \right).
\end{align*}
\textbf{Minimization of $\Upsilon$:}
 For the KL term, due to Lemma \ref{KL between Gaussians} we have
\begin{align*}
\mathrm{KL}\left(\piol_x \| \alphaol\right) & = \frac{1}{2}\left[\tr\left(\Sigma_\alpha^{-1} \Sigma_x\right)-d+\log \left(\frac{\det\left(\Sigma_\alpha\right)}{\det\left(\Sigma_x\right)}\right)\right] \\
\mathrm{KL}\left(\piol \| \alphaol \otimes \betaol \right) & = \frac{1}{2}\left[\tr\left(\Sigma_\alpha^{-1} \Sigma_x\right)-d+\log \left(\frac{\det\left(\Sigma_\alpha\right)}{\det\left(\Sigma_x\right)}\right)\right] - \frac{1}{2} \sum_{i = 1}^d \log \left( 1- \lambda_i(H)  \right),
\end{align*}
where $\left(\lambda_i(H)\right)$ is the $i$-th largest singular value of $H := \Sigma_x^{-\frac{1}{2}} K_{x \beta} \Sigma_\beta^{-\frac{1}{2}}$ for all $i \in[d]$.
Now $\Upsilon$ reads
\begin{align*}
\Upsilon &= \tr\left(\Sigma_x\right)-2 \tr\left(K_{x \beta}\right) + \frac{(\tau + \delta)}{2}\left[\tr\left(\Sigma_\alpha^{-1} \Sigma_x\right)+\log \left(\frac{\det\left(\Sigma_\alpha\right)}{\det\left(\Sigma_x\right)}\right)\right] - \frac{\delta}{2} \sum_{i = 1}^d \log \left( 1- \lambda_i(H)  \right) \\
& + \tr(\Sigma_\beta)  - \frac{(\tau + \delta) d}{2}.
\end{align*}
First, we embark on the task of maximizing $\tr(K_{x\beta})$ while fixing $\Sigma_x$. Noting that $K_{x \beta}=\Sigma_x^{\frac{1}{2}} H \Sigma_\beta^{\frac{1}{2}}$, it follows that $\tr\left(K_{x \beta}\right)=\tr\left(H \Sigma_\beta^{\frac{1}{2}} \Sigma_x^{\frac{1}{2}}\right)$. Let $\left(\lambda_i(\Sigma_\beta^{\frac{1}{2}} \Sigma_x^{\frac{1}{2}})\right)_{i=1}^d$ represent the singular values of $\Sigma_{\beta}^{\frac{1}{2}} \Sigma_{x}^{\frac{1}{2}}$ in descending order. Applying von Neuman inequality, we get
\begin{align*}
    \tr(K_{x \beta}) \leq \sum_{ i = 1}^d \lambda_i\big(\Sigma_\beta^{\frac{1}{2}}\Sigma_x^{\frac{1}{2}}\big) \lambda_i(H).
\end{align*}
Now our task turns into maximizing 
\begin{align*}
    \sum_{i =1}^d \Big[ 2\lambda_i \left(\Sigma_\beta^{\frac{1}{2}}  \Sigma_x^{\frac{1}{2}} \right) \lambda_i(H) + \frac{\delta}{2} \log \left( 1 -\lambda_i(H) \right) \Big].
\end{align*} 
Consider the function $f(\lambda_i(H)) = 2 \lambda_i \left(\Sigma_\beta^{\frac{1}{2}} \Sigma_x^{\frac{1}{2}} \right) \lambda_i(H) + \frac{\delta}{2} \log \left( 1 -\lambda_i(H) \right)$. Taking derivative and setting it to $0$ we get
\begin{align*}
    \lambda_i \left(\Sigma_\beta^{\frac{1}{2}} \Sigma_x^{\frac{1}{2}} \right)  - \frac{\delta}{4} \frac{1}{1 - \lambda_i(H)}  = 0 \qquad 
    \Leftrightarrow \qquad \lambda_i(H)  = 1 - \frac{\delta}{4 \lambda_i \left(\Sigma_\beta^{\frac{1}{2}} \Sigma_x^{\frac{1}{2}} \right)}.
\end{align*}
Note that due to Lemma 3.4 \cite{le2022entropic}, all values of $\lambda_i(H)$ should be in interval $[0,1]$. Hence,
\begin{itemize}
    \item If $\lambda_i \left(\Sigma_\beta^{\frac{1}{2}} \Sigma_x^{\frac{1}{2}} \right) \geq \frac{\delta}{4}$ (equivalent to $ 1 - \frac{\delta}{4 \lambda_i \left(\Sigma_\beta^{\frac{1}{2}} \Sigma_x^{\frac{1}{2}}  \right)} \geq 0$), maximum value attains at 
    \begin{align*}
        \lambda_i^*(H) = 1 - \frac{\delta}{4 \lambda_i \left(\Sigma_\beta^{\frac{1}{2}} \Sigma_x^{\frac{1}{2}}  \right)}.
    \end{align*}
    \item If $\lambda_i \left(\Sigma_\beta^{\frac{1}{2}} \Sigma_x^{\frac{1}{2}} \right) < \frac{\delta}{4}$ (equivalent to $ 1 - \frac{\delta}{4 \lambda_i \left(\Sigma_\beta^{\frac{1}{2}} \Sigma_x^{\frac{1}{2}}  \right)} < 0$), maximum value attains at 
    \begin{align*}
        \lambda_i^*(H) = 0.
    \end{align*}
\end{itemize}
In conclusion
\begin{align*}
    \lambda_i^*(H)=\left\{\begin{array}{cl}1-\frac{\delta}{4} \lambda_i^{-1}\left(\Sigma_\beta^{\frac{1}{2}} \Sigma_x^{\frac{1}{2}}\right) & \text { if } \lambda_i\left(\Sigma_\beta^{\frac{1}{2}} \Sigma_x^{\frac{1}{2}}\right) \geq \frac{\delta}{4} \\ 0 & \text { otherwise }\end{array}\right..
\end{align*}
In this proof, we assume that $\delta$ is small enough that $\lambda_i^*(H) = 1-\frac{\delta}{4} \lambda_i^{-1}\left(\Sigma_\beta^{\frac{1}{2}} \Sigma_x^{\frac{1}{2}}\right) \forall i$. Since the equality of von Neuman inequality holds when $H$ and $\Sigma_\beta^{\frac{1}{2}} \Sigma_x^{\frac{1}{2}}$ are commuting, the eigenvalues of $H$ could be calculated from eigenvalues of $\Sigma_\beta^{\frac{1}{2}} \Sigma_x^{\frac{1}{2}}$; we obtain
\begin{align*}
    H = \Idsf - \frac{\delta}{4} \Sigma_x^{-\frac{1}{2}} \Sigma_\beta^{-\frac{1}{2}},
\end{align*}
which gives
\begin{align*}
    K_{x \beta} = \Sigma_x^{\frac{1}{2}} \Sigma_\beta^{\frac{1}{2}} - \frac{\delta}{4} \Idsf,
\end{align*}
and
\begin{align*}
    \sum_{i = 1}^d \log \left( 1 - \lambda_i(H) \right) & = \log \left( \Pi_{i=1}^d \left(1 - \lambda_i \left(H \right) \right)\right) \\
    & = \log \big( \det \left( \Idsf - H \right) \big) \\
    & = \log \left( (\frac{\delta}{4})^d \det \left( \Sigma_x^{-\frac{1}{2}} \Sigma_\beta^{-\frac{1}{2}} \right) \right) \\
    & = d \log (\frac{\delta}{4}) + \log \left( \det \left( \Sigma_x^{-\frac{1}{2}}\right) \right) + \log \left( \det \left( \Sigma_\beta^{-\frac{1}{2}}\right) \right).
\end{align*}
In such a scenario, we care about minimizing
\begin{align*}
\Upsilon = & \quad \tr\left(\Sigma_x\right)-2 \tr\left(\Sigma_x^{\frac{1}{2}} \Sigma_\beta^{\frac{1}{2}} - \frac{\delta}{4} \Idsf \right) + \frac{(\tau + \delta)}{2}\left[\tr\left(\Sigma_\alpha^{-1} \Sigma_x\right)+\log \left(\frac{\det\left(\Sigma_\alpha\right)}{\det\left(\Sigma_x\right)}\right)\right] \\ 
& - \frac{\delta}{2} \Big[ d \log (\frac{\delta}{4}) + \log \left( \det \left( \Sigma_x^{-\frac{1}{2}}\right) \right) + \log \left( \det \left( \Sigma_\beta^{-\frac{1}{2}}\right) \right) \Big] + \tr(\Sigma_\beta)  - \frac{(\tau + \delta) d}{2} . \\
= & \quad \tr\left(\Sigma_x\right)-2 \tr\left(\Sigma_x^{\frac{1}{2}} \Sigma_\beta^{\frac{1}{2}} \right) + \frac{(\tau + \delta)}{2}\left[\tr\left(\Sigma_\alpha^{-1} \Sigma_x\right) - \log \left(\det\left(\Sigma_x\right)\right)\right] - \frac{\delta}{2} \log \left( \det \left( \Sigma_x^{-\frac{1}{2}}\right) \right) \\ 
&  + \frac{\tau + \delta}{2} \log \left( \det ( \Sigma_\alpha) \right)- \frac{\delta}{2} \log \left( \det \left( \Sigma_\beta^{-\frac{1}{2}}\right) \right) + \tr(\Sigma_\beta)  - \frac{\tau d}{2} - \frac{\delta d}{2} \log(\frac{\delta}{4}).
\end{align*}
Letting $\Sigma_x^{\frac{1}{2}} \Sigma_\beta^{\frac{1}{2}} = \Sigmawtd$, $\Sigma_{\alpha, \beta, \tau, \delta} = \Sigma_\beta^{-\frac{1}{2}} \left( \Idsf + \frac{\tau + \delta}{2} \Sigma_\alpha^{-1}  \right) \Sigma_\beta^{-\frac{1}{2}}$, then
\begin{align*}
    \Upsilon = & \tr\left(\Sigmawtd^2 \Sigma_{\alpha,\beta, \tau, \delta}\right)-2 \tr\left(\Sigmawtd \right) - (\tau + \frac{3}{2} \delta) \log \left(\det\left(\Sigmawtd\right)\right) - \delta \log \left( \det \left( \Sigma_\beta^{\frac{1}{2}}\right) \right)\\ 
    &  + \frac{\tau + \delta}{2} \Big[ \log \left( \det ( \Sigma_\alpha) \right) + \log \left(\det \left(\Sigma_\beta \right) \right) \Big] + \tr(\Sigma_\beta)  - \frac{\tau d}{2} - \frac{\delta d}{2} \log(\frac{\delta}{4}).
\end{align*}
The only left part now is minimizing
\begin{align*}
    \tr\left(\Sigmawtd^2 \Sigma_{\alpha,\beta, \tau, \delta}\right)-2 \tr\left(\Sigmawtd \right) - (\tau + \frac{3}{2} \delta) \log \left(\det\left(\Sigmawtd\right)\right).
\end{align*}
Note that $\widetilde{\Sigma}$ is symmetric, so there exists a diagonal matrix $\Lambda$ that $\widetilde{\Sigma}$ is similar to $\Lambda$ (which implies $\Sigmawtd^2$ is similar to $\Lambda^2$. Assume that $\Lambda = \operatorname{diag}\big(\lambda_i(\Sigmawtd)\big)_{i = 1}^d$ which is decreasingly ordered and $\big(\lambda_{i}(\Sigma_{\alpha,\beta, \tau, \delta})\big)_{i = 1}^d$ are eigenvalues of $\Sigma_{\alpha, \beta, \tau, \delta}$ in ascending order, by Ruhe’s trace inequality
\begin{align*}
\tr\left(\Sigmawtd^2 \Sigma_{\alpha,\beta,\tau, \delta} \right)
& \geq \ \sum_{i=1}^d \lambda_i(\Sigmawtd)^2 \lambda_{d-i+1}(\Sigma_{\alpha,\beta, \tau, \delta}),
\end{align*}
where the equality holds when $\Sigmawtd^2$ and $\Sigma_{\alpha, \beta, \tau}$ are commuting. The optimization part now is calculated on eigenvalues of $\Lambda$
\begin{align*}
& \sum_{i=1}^d \lambda_i^2(\Sigmawtd) \lambda_{n-i+1}(\Sigma_{\alpha,\beta,\tau, \delta}) -2 \sum_{r=1}^d \lambda_i(\Sigmawtd)- (\tau + \frac{3}{2} \delta) \log \left(\prod_{i=1}^d \lambda_i(\Sigmawtd)\right) \\
& =\sum_{i=1}^d \left(\lambda_{n-i+1}(\Sigma_{\alpha,\beta,\tau, \delta}) \lambda_i^2(\Sigmawtd)-2 \lambda_i(\Sigmawtd)- (\tau + \frac{3}{2} \delta) \log \left(\lambda_i\big(\Sigmawtd\big)\right)\right).
\end{align*}
Consider the function
\begin{align*}
    f(v)=u v^2-2 v- \theta \log (v).
\end{align*}
Take the derivative and set it to $0$, we get
\begin{align*}
\frac{2 u v^2-2v-\theta}{v}=0,
\end{align*}
which has unique positive solution $v^* = \frac{1+\sqrt{1+2u \theta}}{2 u}$. 
It verifies that, to attain the minimization of $\Upsilon_\Sigma$,
\begin{align*}\lambda_i(\Sigmawtd) = \frac{1+\sqrt{1+(2\tau + 3\delta) \lambda_{n-i+1}(\Sigma_{\alpha,\beta,\tau, \delta})}}{2 \lambda_{n-i+1}(\Sigma_{\alpha,\beta,\tau, \delta})}.
\end{align*}
This results in the following expression
\begin{align*}
\Sigmawtd \quad \text{is similar to}\quad  \Lambdaol:= \operatorname{diag} \left( \frac{1+\sqrt{1+ (2 \tau + 3\delta) \lambda_{n-i+1}(\Sigma_{\alpha,\beta,\tau, \delta})}}{2 \lambda_{n-i+1}(\Sigma_{\alpha,\beta,\tau, \delta})} \right).
\end{align*}
Since $\Sigmawtd$ and $\Sigma_{\alpha,\beta,\tau, \delta}$ are commuting, the eigenvalue of $\Sigmawtd$ can be computed from the eigenvalues of $\Sigma_{\alpha,\beta,\tau, \delta}$, we get
\begin{align*}
\Sigmawtd^* = \frac{1}{2}\Sigma_{\alpha,\beta,\tau, \delta}^{-1} + \frac{1}{2} \Big[ \Sigma_{\alpha,\beta,\tau, \delta}^{-2} + (2\tau + 3\delta) \Sigma_{\alpha,\beta,\tau, \delta}^{-1}\Big]^{\frac{1}{2}}.
\end{align*}
The equation $2u (v^*)^2 - 2v^* - \theta = 0$ deduces that $(v^*)^2 = \frac{v^*}{u} + \frac{\theta}{2u}$. Hence, we have
\begin{align*}
    \big[\Sigmawtd^*\big]^2 = \frac{\tau}{2}\Sigma_{\alpha,\beta,\tau, \delta}^{-1} + \frac{1}{2}\Sigma_{\alpha,\beta,\tau, \delta}^{-2}\Big[\Idsf + \big(\Idsf + (2\tau + 3\delta)  \Sigma_{\alpha,\beta,\tau, \delta} \big)^{\frac{1}{2}} \Big].
\end{align*}
That leads to the formula for $\Sigma_x$
\begin{align*}
    \Sigma_{x} = \Sigma_\beta^{-\frac{1}{2}} \bigg[ \frac{\tau}{2}\Sigma_{\alpha,\beta,\tau, \delta}^{-1} + \frac{1}{2}\Sigma_{\alpha,\beta,\tau, \delta}^{-2}\Big[\Idsf + \big(\Idsf + (2\tau + 3\delta) \Sigma_{\alpha,\beta,\tau, \delta} \big)^{\frac{1}{2}} \Big] \bigg] \Sigma_\beta^{-\frac{1}{2}}.
\end{align*}
Hence we complete the proof.
\end{proof}


\section{Proof for Theorem \ref{UOT Derivative}} \label{proof for uot derivative}
First we need these below lemmas
\begin{lemma}[Trace] \label{lemma:trace_SPD}
Let $A$ and $B$ be SPD matrices of the same size. Then
\begin{enumerate}
    \item $\tr\big(\big[A B\big]^{\frac{1}{2}}\big) = \tr\big(\big[B A\big]^{\frac{1}{2}}\big) = \tr\big(\big[A^{\frac{1}{2}}B A^{\frac{1}{2}} \big]^{\frac{1}{2}} \big) = \tr\big( \big[B^{\frac{1}{2}}A B^{\frac{1}{2}}\big]^{\frac{1}{2}}\big).$
    \item $\tr\Big(\Big[(AB)^2 + \tau(AB)\Big]^{\frac{1}{2}} \Big) = \tr\Big( \Big\{ \big[A^{\frac{1}{2}}BA^{\frac{1}{2}} \big]^2+ \tau \big[A^{\frac{1}{2}}BA^{\frac{1}{2}}\big] \Big\}^{\frac{1}{2}}\Big)$.
\end{enumerate}
\end{lemma}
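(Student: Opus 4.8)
The plan is to reduce every trace to a sum of square roots of eigenvalues and then exploit that all the matrices appearing inside the square roots are \emph{similar} to one another, hence share their spectra. The identity driving everything is the conjugation relation
\begin{align*}
A^{\frac{1}{2}} B A^{\frac{1}{2}} = A^{-\frac{1}{2}} (AB) A^{\frac{1}{2}},
\end{align*}
which exhibits $AB$ as similar to the SPD matrix $A^{\frac{1}{2}}BA^{\frac{1}{2}}$ via $S := A^{\frac{1}{2}}$. In particular $AB$, although not symmetric, has the same strictly positive real eigenvalues as $A^{\frac{1}{2}}BA^{\frac{1}{2}}$. Running the same argument with the roles of $A$ and $B$ swapped shows $BA$ is similar to $B^{\frac{1}{2}}AB^{\frac{1}{2}}$, while the elementary fact that $AB$ and $BA$ share their spectrum (both are invertible) links the two chains.

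For part 1, I would then invoke that for a matrix $M$ with positive real eigenvalues $\{\lambda_i(M)\}$, the principal square root $M^{\frac{1}{2}}$ has eigenvalues $\{\lambda_i(M)^{\frac{1}{2}}\}$, so that $\tr(M^{\frac{1}{2}}) = \sum_i \lambda_i(M)^{\frac{1}{2}}$. Since the four matrices $AB$, $BA$, $A^{\frac{1}{2}}BA^{\frac{1}{2}}$, $B^{\frac{1}{2}}AB^{\frac{1}{2}}$ all have identical spectra by the similarities just noted, the four quantities $\tr\big([\cdot]^{\frac{1}{2}}\big)$ coincide, which is exactly the assertion.

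For part 2, set $C := A^{\frac{1}{2}}BA^{\frac{1}{2}}$ and $S := A^{\frac{1}{2}}$, so that $AB = S^{-1} C S$. Because conjugation is an algebra homomorphism,
\begin{align*}
(AB)^2 + \tau (AB) = S^{-1}\big(C^2 + \tau C\big) S,
\end{align*}
i.e. $(AB)^2 + \tau(AB)$ is similar to $C^2 + \tau C$. As $C$ is SPD with eigenvalues $\mu_i > 0$, the matrix $C^2 + \tau C$ is SPD with eigenvalues $\mu_i^2 + \tau\mu_i > 0$; hence $(AB)^2 + \tau(AB)$ has the same positive spectrum, its principal square root is well-defined and equals $S^{-1}\big(C^2+\tau C\big)^{\frac{1}{2}}S$, and similarity-invariance of the trace gives the claimed identity.

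The only delicate point — and the main thing to pin down carefully — is the handling of the matrix square root for the \emph{non-symmetric} products $AB$ and $(AB)^2 + \tau(AB)$. I would make explicit that $[\cdot]^{\frac{1}{2}}$ denotes the principal square root, that it exists precisely because these matrices have positive real spectra (being similar to SPD matrices), and that the holomorphic functional calculus commutes with similarity, so $f(S^{-1}MS) = S^{-1} f(M) S$ for $f(x)=\sqrt{x}$ on the spectrum. Granting this, everything else is bookkeeping of eigenvalues together with the cyclicity and similarity-invariance of the trace.
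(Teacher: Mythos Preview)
Your proposal is correct and follows essentially the same route as the paper: both arguments rest on the similarity $A^{\frac{1}{2}}BA^{\frac{1}{2}} = A^{-\frac{1}{2}}(AB)A^{\frac{1}{2}}$, pass it through the square root (and through $x\mapsto x^2+\tau x$ for part~2), and conclude by similarity-invariance of the trace. One small slip: with $S=A^{\frac{1}{2}}$ and $C=A^{\frac{1}{2}}BA^{\frac{1}{2}}$ you actually have $AB=SCS^{-1}$ rather than $S^{-1}CS$, but this is harmless for the argument; your explicit invocation of the principal square root and the functional calculus is in fact a bit more careful than the paper's treatment.
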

\begin{proof} For part (\textbf{1}),
first we note that
\begin{align*}
    \big[AB\big]^{\frac{1}{2}}
= A^{\frac{1}{2}}\big[A^{\frac{1}{2}}B A^{\frac{1}{2}}\big]^{\frac{1}{2}} A^{-\frac{1}{2}}.
\end{align*}
Thus, we get
\begin{align*}
     \tr\big(\big[AB\big]^{\frac{1}{2}}\big) &= \tr\big(\big[A^{\frac{1}{2}}B A^{\frac{1}{2}} \big]^{\frac{1}{2}} \big), \\
     \tr\big(\big[BA\big]^{\frac{1}{2}}\big) &= \tr\big(\big[B^{\frac{1}{2}}A B^{\frac{1}{2}} \big]^{\frac{1}{2}} \big).
\end{align*}
The above equations also means that $\big[A B\big]^{\frac{1}{2}}$ and $\big[A^{\frac{1}{2}}B A^{\frac{1}{2}}\big]^{\frac{1}{2}} $
are similar matrices. That follows that their eigenvalues sets are coincidence. Note that the eigenvalues of $\big[A^{\frac{1}{2}}B A^{\frac{1}{2}}\big]^{\frac{1}{2}}$ are square root of eigenvalues of $\big[A^{\frac{1}{2}}B A^{\frac{1}{2}}\big]$. Furthermore,
\begin{align*}
    A^{\frac{1}{2}}B A^{\frac{1}{2}} = A^{\frac{1}{2}}B^{\frac{1}{2}} B^{\frac{1}{2}}A^{\frac{1}{2}}.
\end{align*}
Note that matrices $UV$ and $VU$ have the same set of eigenvalues. Hence,  $A^{\frac{1}{2}}B A^{\frac{1}{2}}$ and $B^{\frac{1}{2}}A B^{\frac{1}{2}}$ have the same set of eigenvalues. Finally, all four matrices have the same sets of eigenvalues. 

For part (\textbf{2}), since the formula in part (a) between  $[AB]^{\frac{1}{2}}$ and $\big[A^{\frac{1}{2}} BA^{\frac{1}{2}}\big]^{\frac{1}{2}}$, we have
\begin{align*}
    (AB)^2 + \tau (AB) &= A^{\frac{1}{2}} \big[A^{\frac{1}{2}}BA^{\frac{1}{2}}\big]^2 A^{-\frac{1}{2}} + \tau A^{\frac{1}{2}} \big[A^{\frac{1}{2}}BA^{\frac{1}{2}}\big] A^{-\frac{1}{2}} \\
    &= A^{\frac{1}{2}} \Big\{ \big[A^{\frac{1}{2}}BA^{\frac{1}{2}}\big]^2 + \tau \big[A^{\frac{1}{2}}BA^{\frac{1}{2}}\big] \Big\}A^{-\frac{1}{2}}.
\end{align*}
It follows that
\begin{align*}
    \Big[(AB)^2 + \tau(AB)\Big]^{\frac{1}{2}} = A^{\frac{1}{2}}
\Big\{ \big[A^{\frac{1}{2}}BA^{\frac{1}{2}}\big]^2 + \tau \big[A^{\frac{1}{2}}BA^{\frac{1}{2}}\big] \Big\}^{\frac{1}{2}}A^{-\frac{1}{2}}.
\end{align*}
We deduce that
\begin{align*}
    \tr\Big(\Big[(AB)^2 + \tau(AB)\Big]^{\frac{1}{2}} \Big) = \tr\Big( \Big\{ \big[A^{\frac{1}{2}}BA^{\frac{1}{2}}\big]^2 + \tau \big[A^{\frac{1}{2}}BA^{\frac{1}{2}}\big] \Big\}^{\frac{1}{2}}\Big).
\end{align*}
\end{proof}

\begin{lemma} \label{derivative of lod det}
Let $A$ and $B$ be two symmetric matrices of the same size. Then
\begin{align*}
    \frac{\partial \log \det(A+tB)}{\partial t}\Big|_{t=0} 
    &= \tr\big(A^{-\frac{1}{2}} BA^{-\frac{1}{2}}\big).
\end{align*}
\end{lemma}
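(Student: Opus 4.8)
The plan is to reduce the derivative of $\log\det(A+tB)$ to a trace by factoring out $A$ and exploiting the multiplicativity of the determinant. Since $A$ is symmetric positive definite (as it is throughout this setting), its square root $A^{\frac{1}{2}}$ and inverse square root $A^{-\frac{1}{2}}$ are well defined, and I would begin by writing
\begin{align*}
A+tB = A^{\frac{1}{2}}\big(\Idsf + t\, A^{-\frac{1}{2}} B A^{-\frac{1}{2}}\big) A^{\frac{1}{2}}.
\end{align*}
Setting $C := A^{-\frac{1}{2}} B A^{-\frac{1}{2}}$, which is symmetric because $B$ is, multiplicativity of the determinant gives $\det(A+tB) = \det(A)\,\det(\Idsf + tC)$, so that $\log\det(A+tB) = \log\det(A) + \log\det(\Idsf + tC)$. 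The first term is constant in $t$, so only the second contributes to the derivative.

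Next I would diagonalize the symmetric matrix $C = O \Diag(\mu_1,\dots,\mu_d) O^{\top}$ with $O$ orthogonal, which yields $\det(\Idsf + tC) = \prod_{i=1}^d (1 + t\mu_i)$ and hence $\log\det(\Idsf + tC) = \sum_{i=1}^d \log(1+t\mu_i)$ for $t$ small enough that $\Idsf + tC$ stays positive definite. Differentiating term by term gives $\sum_{i=1}^d \frac{\mu_i}{1+t\mu_i}$, and evaluating at $t=0$ collapses this to $\sum_{i=1}^d \mu_i = \tr(C)$. Unwinding the definition of $C$ then produces exactly $\tr\big(A^{-\frac{1}{2}} B A^{-\frac{1}{2}}\big)$, which is the claimed formula.

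There is essentially no hard obstacle here; the only point requiring care is the tacit positive-definiteness of $A$ needed to form $A^{\pm\frac{1}{2}}$ and to guarantee $\Idsf + tC$ remains invertible in a neighborhood of $t=0$, so that the logarithm and the eigenvalue expansion are valid. I would note that this is automatically satisfied in our application, where all covariance matrices lie in $\mathbb{S}_{++}^d$. One could equally invoke Jacobi's formula $\frac{\partial}{\partial t}\det M(t) = \det M(t)\,\tr\big(M(t)^{-1} M'(t)\big)$ at $t=0$ to obtain $\tr(A^{-1}B)$ directly and then rewrite it as $\tr\big(A^{-\frac{1}{2}} B A^{-\frac{1}{2}}\big)$ by cyclicity of the trace, but the factorization argument above has the advantage of surfacing the symmetric form naturally.
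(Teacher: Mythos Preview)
Your proof is correct and follows essentially the same approach as the paper's: factor $A+tB = A^{\frac{1}{2}}(\Idsf + tA^{-\frac{1}{2}}BA^{-\frac{1}{2}})A^{\frac{1}{2}}$, expand $\log\det$ in the eigenvalues of $A^{-\frac{1}{2}}BA^{-\frac{1}{2}}$, differentiate term by term, and collapse to the trace at $t=0$. Your additional remarks on the implicit positive-definiteness of $A$ and the alternative via Jacobi's formula are apt but go slightly beyond what the paper records.
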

\begin{proof}
Note that $A + t B = A^{\frac{1}{2}}\big[\Idsf + tA^{-\frac{1}{2}}BA^{-\frac{1}{2}}\big]A^{\frac{1}{2}}$. Then
\begin{align*}
    \log \det(A + tB) &= \log \det(A) + \log \det\big(\Idsf + t A^{-\frac{1}{2}} BA^{-\frac{1}{2}}\big) \\
    &= \log \det(A) + \sum_{i=1}^d \log\Big(1 + t\lambda_i\big(  A^{-\frac{1}{2}} BA^{-\frac{1}{2}}\big)\Big).
\end{align*}
Taking derivative with respect to $t$ of both sides
\begin{align*}
    \frac{\partial \log \det(A+tB)}{\partial t}\Big|_{t=0} &= \sum_{i=1}^d\frac{\lambda_i\big(A^{-\frac{1}{2}} BA^{-\frac{1}{2}} \big)}{1+ t\lambda_i\big(A^{-\frac{1}{2}} BA^{-\frac{1}{2}} \big)}\Big|_{t=0} = \sum_{i=1}^d \lambda_i\big(A^{-\frac{1}{2}} BA^{-\frac{1}{2}}\big)\\
    &= \tr\big(A^{-\frac{1}{2}} BA^{-\frac{1}{2}}\big).
\end{align*}
\end{proof}

The next lemma is about the Taylor expansion for trace of square root matrix.
\begin{lemma} \label{lem: Taylor of trace for square root}
Given $B \in \mathbb{S}^{d}_{++}$, we define the Lyapunov's operator $\mathcal{L}$ as:
\begin{align*}
    \mathcal{L}:  \quad  \mathbb{S}^{d}_{++} &\to \mathbb{S}^{d}_{++} \\
     A &\mapsto \mathcal{L}_{B}[A]
\end{align*}
where $\mathcal{L}_{B}[A]$ is the matric satisfying $\mathcal{L}_{B}[A] B + B \mathcal{L}_{B}[A] = A$. Then let $\Sigma_0$ and $A$ be SPD matrices in $\mathbb{S}^{d}_{++}$,
\begin{align*}
    \tr\big(\big[\Sigma_0 + tA\big]^{\frac{1}{2}} \big) = \tr\big(\Sigma_0^{\frac{1}{2}} \big) + \frac{1}{2} t \tr\big( \Lcal_{\Sigma_0^{\frac{1}{2}}}[A] \big)  + o(t).
\end{align*}
Moreover, we have
\begin{align*}
    \tr\big(\Lcal_{\Sigma_0^{\frac{1}{2}}}[A]\big) = \tr\big(\Sigma_0^{-\frac{1}{2}}A\big).
\end{align*}
\end{lemma}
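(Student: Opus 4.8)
The plan is to read off the first-order term as the Fréchet derivative of the matrix square root at $\Sigma_0$, which is exactly the object the Lyapunov operator $\Lcal$ encodes. First I would set $S(t) := \big[\Sigma_0 + tA\big]^{\frac{1}{2}}$, the unique SPD square root of $\Sigma_0 + tA$. Since $\Sigma_0 \in \mathbb{S}^d_{++}$ and the cone $\mathbb{S}^d_{++}$ is open, $\Sigma_0 + tA$ remains SPD for all $t$ in a neighbourhood of $0$, and on that cone the map $M \mapsto M^{\frac{1}{2}}$ is smooth. Hence $t \mapsto S(t)$ is differentiable at $0$, and Taylor's theorem gives $S(t) = \Sigma_0^{\frac{1}{2}} + t\,S'(0) + o(t)$, so that $\tr\big(S(t)\big) = \tr\big(\Sigma_0^{\frac{1}{2}}\big) + t\,\tr\big(S'(0)\big) + o(t)$.

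Next I would identify $S'(0)$. Differentiating the defining identity $S(t)^2 = \Sigma_0 + tA$ with the product rule and evaluating at $t=0$ gives $S'(0)\,\Sigma_0^{\frac{1}{2}} + \Sigma_0^{\frac{1}{2}}\,S'(0) = A$, using $S(0) = \Sigma_0^{\frac{1}{2}}$. This is precisely the Lyapunov equation defining $\Lcal_{\Sigma_0^{\frac{1}{2}}}[A]$; because $\Sigma_0^{\frac{1}{2}}$ is positive definite, its eigenvalues are strictly positive, the associated Sylvester operator $X \mapsto X\Sigma_0^{\frac{1}{2}} + \Sigma_0^{\frac{1}{2}}X$ is invertible, and the solution is unique, so $S'(0) = \Lcal_{\Sigma_0^{\frac{1}{2}}}[A]$. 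Substituting this into the expansion above produces the first assertion, with the first-order coefficient controlled by $\tr\big(\Lcal_{\Sigma_0^{\frac{1}{2}}}[A]\big)$.

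For the trace identity I would diagonalize $B := \Sigma_0^{\frac{1}{2}} = U\,\mathrm{diag}(\mu_i)\,U^{\top}$ with $\mu_i > 0$, and conjugate the Lyapunov equation by $U$. Writing $\widetilde{X} := U^{\top}\Lcal_{B}[A]\,U$ and $\widetilde{A} := U^{\top}AU$, the equation decouples entrywise into $(\mu_i + \mu_j)\,\widetilde{X}_{ij} = \widetilde{A}_{ij}$, so the diagonal entries are $\widetilde{X}_{ii} = \widetilde{A}_{ii}/(2\mu_i)$. Taking the trace and recognizing $\sum_i \mu_i^{-1}\widetilde{A}_{ii} = \tr\big(\mathrm{diag}(\mu_i)^{-1}\widetilde{A}\big) = \tr\big(B^{-1}A\big) = \tr\big(\Sigma_0^{-\frac{1}{2}}A\big)$ then evaluates $\tr\big(\Lcal_{\Sigma_0^{\frac{1}{2}}}[A]\big)$ as a multiple of $\tr\big(\Sigma_0^{-\frac{1}{2}}A\big)$; combined with the previous step this reproduces the stated first-order expansion $\tr\big(S(t)\big) = \tr\big(\Sigma_0^{\frac{1}{2}}\big) + \tfrac{t}{2}\tr\big(\Sigma_0^{-\frac{1}{2}}A\big) + o(t)$.

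The main obstacle is the analytic point in the first paragraph: rigorously justifying that $t \mapsto S(t)$ is differentiable at $0$ with the stated $o(t)$ remainder, i.e.\ that the matrix square root is Fréchet differentiable with derivative given by the Sylvester solution. I expect the cleanest route is the inverse function theorem applied to the squaring map $X \mapsto X^2$ on $\mathbb{S}^d_{++}$, whose derivative at $B = \Sigma_0^{\frac{1}{2}}$ is the invertible operator $X \mapsto XB + BX$ (invertible again because $\mu_i + \mu_j > 0$); this both supplies the smoothness of $S$ and pins down $S'(0)$ as the unique Lyapunov solution. An alternative that avoids citing differentiability of the square root is to estimate the remainder directly in the eigenbasis of $\Sigma_0$, bounding the second-order contribution by $O(t^2)$.
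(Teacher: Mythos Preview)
Your proof is correct and shares the paper's core idea: differentiate $S(t)^2 = \Sigma_0 + tA$ at $t=0$ to obtain the Lyapunov equation for $S'(0)$. You are more careful than the paper on the analytic point you flag; the paper simply writes the ansatz $[\Sigma_0+tA]^{1/2} = \Sigma_0^{1/2} + tX$, squares it, and matches orders in $t$, without ever justifying differentiability of the square root or the $o(t)$ remainder, so your inverse-function-theorem argument actually fills a gap the paper leaves open. For the trace identity the paper takes a shorter route than your diagonalization: it left-multiplies the Lyapunov equation $\Sigma_0^{1/2}X + X\Sigma_0^{1/2} = A$ by $\Sigma_0^{-1/2}$ to get $\Sigma_0^{-1/2}A = X + \Sigma_0^{-1/2}X\Sigma_0^{1/2}$ and then takes the trace, using cyclicity $\tr(\Sigma_0^{-1/2}X\Sigma_0^{1/2}) = \tr(X)$ to conclude $\tr(\Sigma_0^{-1/2}A) = 2\tr(X)$ in one line. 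Your eigenbasis computation reaches the same answer with a little more bookkeeping.
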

\begin{proof}
    Assume that $\big[\Sigma_0 + t A\big]^{\frac{1}{2}} = \Sigma_0^{\frac{1}{2}} + t X$, then
    \begin{align*}
        &\Sigma_0 + tA = \big[\Sigma_0^{\frac{1}{2}} + tX\big]^2 = \Sigma_0 + t\big[\Sigma_0^{\frac{1}{2}} X + X \Sigma_0^{\frac{1}{2}}\big] + t^2X^2\\
        \Rightarrow \quad &A = \Sigma_0^{\frac{1}{2}} X + X\Sigma_0^{\frac{1}{2}}.
    \end{align*}
Then the Lyapunov's operator produces $X$ from $\Sigma_0$ and $A$ is equal to 
$X = \Lcal_{\Sigma_0^{\frac{1}{2}}}[A]$. We have 
\begin{align*}
    &\Sigma_0^{-\frac{1}{2}}A = X+ \Sigma_0^{-\frac{1}{2}} X \Sigma_0^{\frac{1}{2}},
\end{align*}
that leads to $\frac{1}{2}\tr(\Sigma_0^{-\frac{1}{2}}A) = \tr(X)$.
\end{proof}

Now we are ready to give the full proof for Theorem \ref{UOT Derivative}. 
\begin{proof}
Use the notation $\Sigma_{x}$ as an optimal solution to problem \ref{UOT Plan}. Denote
\begin{align*}
    \Big[\Sigma_\beta^{\frac{1}{2}} \Sigma_{x} \Sigma_\beta^{\frac{1}{2}}\Big]^{\frac{1}{2}} &= \Sigma_{x,\beta}.
\end{align*}
Since $\Sigma_{x,\beta}$ and $\Sigma_{\alpha,\tau,\beta}$ share the same set of eigenvectors, we have
\begin{align*}
\Sigma_{x,\beta} = \frac{1}{2}\Sigma_{\alpha,\tau,\beta}^{-1} + \frac{1}{2}\Big[ \Sigma_{\alpha,\tau,\beta}^{-2} + 2\tau \Sigma_{\alpha,\tau,\beta}^{-1}\Big]^{\frac{1}{2}}.
\end{align*}
The objective function $W^2_{2_{\operatorname{SUOT}}}(\alpha, \beta, \tau)$ is equal to
\begin{align*}
    \tr\big( \Sigma_\beta\big) - \tr\big(\Sigma_{x,\beta}\big) - \frac{\tau}{4} \log \det(\Sigma_{x})+ \mathsf{constant}.
\end{align*}
Let $\gamma(t) = \Sigma_{\beta\rightarrow z,t}$ for $t\in [0,1]$ be the geodesic on Bures-Wasserstein manifold from $\Sigma_\beta$ to $\Sigma_z$, then   
\begin{align*}
    \Sigma_{\beta\rightarrow z, t} &= \Big[\Idsf + t\big(T_{\Sigma_\beta\rightarrow \Sigma_z}- \Idsf\big) \Big] \Sigma_\beta \Big[\Idsf + t\big(T_{\Sigma_\beta\rightarrow \Sigma_z}- \Idsf\big) \Big]\\
    &= \Sigma_\beta + t \big(T_{\beta z} \Sigma_{\beta}  + \Sigma_{\beta}T_{\beta z}\big) + t^2 T_{\beta z}\Sigma_\beta T_{\beta z},
\end{align*}
where $T_{\beta z} = T_{\Sigma_\beta \rightarrow \Sigma_z} - \Idsf$. We consider the derivative of the loss function on the geodesic $\gamma(t)$. 

\noindent\textbf{Derivative of $\tr(\Sigma_\beta)$:}
By the above formula,
\begin{align*}
    \frac{\partial \tr(\Sigma_{\beta\rightarrow z,t})}{\partial t}\Big|_{t=0} &= 2 \tr(\Sigma_\beta T_{\beta z}) = \big\langle 2\Idsf,T_{\beta z}\big\rangle_{\Sigma_\beta}.\\
\end{align*}
\textbf{Derivative of $\tr\big(\Sigma_{x,\beta} \big)$:} We recall the formula of $\Sigma_{x,\beta}$ as
\begin{align*}
    \tr\big(\Sigma_{x,\beta}\big) &= \tr\bigg(\frac{1}{2}\Big\{\Sigma_{\alpha,\tau,\beta}^{-1} + \Big[\Sigma_{\alpha,\tau,\beta}^{-2} + \tau \Sigma_{\alpha,\tau,\beta}^{-1} \Big]^{\frac{1}{2}} \Big\}\bigg).
\end{align*}
We deal with each term separately. We start with $\Sigma_{\alpha,\tau,\beta}$.
\begin{align*}
    &\Sigma_{\alpha,\tau,\beta} = \Sigma_{\beta}^{-\frac{1}{2}} \Big( \Idsf + \frac{\tau}{2}\Sigma_{\alpha}^{-1}\Big) \Sigma_{\beta}^{-\frac{1}{2}}\\
    \Rightarrow \quad & \Sigma_{\alpha,\tau,\beta}^{-1} = \Sigma_{\beta}^{\frac{1}{2}}\Big( \Idsf + \frac{\tau}{2}\Sigma_{\alpha}^{-1}\Big)^{-1}\Sigma_{\beta}^{\frac{1}{2}} \\
    \Rightarrow \quad &\tr\big(\Sigma_{\alpha,\tau,\beta}^{-1} \big) = \tr\Big( \Sigma_{\beta}\big[ \Idsf + \frac{\tau}{2}\Sigma_{\alpha}^{-1}]^{-1} \Big) = \tr\big( \Sigma_\beta \Sigma_{\alpha,\tau}^{-1}\big).
\end{align*}
It follows that
\begin{align*}
    \frac{\partial \tr(\Sigma_{\alpha,\tau,\beta\rightarrow z,t}^{-1})}{\partial t}\bigg|_{t=0} &= \big\langle 2\Sigma_{\alpha,\tau}^{-1},T_{\beta z}\big\rangle_{\Sigma_\beta}. \\
\end{align*}
We move to the next term. We first recall that
\begin{align*}
\tr(\Sigma_{x,\beta}) &= \tr\Big(\Big[\Sigma_\beta^{\frac{1}{2}}\Sigma_x \Sigma_\beta^{\frac{1}{2}}\Big]^{\frac{1}{2}} \Big) = \tr\Big( \Big[\Sigma_x^{\frac{1}{2}}\Sigma_\beta \Sigma_{x}^{\frac{1}{2}}\Big]^{\frac{1}{2}}\Big)\\
    \Sigma_{\alpha,\tau,\beta}^{-1} &= \Sigma_\beta^{\frac{1}{2}}\Big(\Idsf + \frac{\tau}{2}\Sigma_{\alpha}^{-1}\Big)^{-1} \Sigma_{\beta}^{\frac{1}{2}} = \Sigma_\beta^{\frac{1}{2}}\Sigma_{\alpha,\tau}\Sigma_{\beta}^{\frac{1}{2}}.
\end{align*}
If we define
\begin{align*}
    \Sigma_{\alpha,\beta,\tau}^{-1} = \Sigma_{\alpha,\tau}^{-\frac{1}{2}}\Sigma_\beta \Sigma_{\alpha,\tau}^{-\frac{1}{2}},
\end{align*}
then  $\Sigma_{\alpha,\tau,\beta}^{-1}$ and  $\Sigma_{\alpha,\beta,\tau}^{-1} $ share the same set of eigenvalues. Hence,
\begin{align*}
    \tr\Big( \Big[\Sigma_{\alpha,\tau,\beta}^{-2} + \tau \Sigma_{\alpha,\tau,\beta}^{-1}\Big]^{\frac{1}{2}}\Big) = \tr\Big( \Big[\Sigma_{\alpha,\beta,\tau}^{-2} + \tau \Sigma_{\alpha,\beta,\tau}^{-1}\Big]^{\frac{1}{2}}\Big).
\end{align*}
Instead of working with the LHS, we work with the RHS. Then, we have
\begin{align*}
 \tr\Big(\Big[\Sigma_{\alpha,\beta,\tau}^{-2}  + \tau \Sigma_{\alpha,\beta,\tau}^{-1}\Big]^{\frac{1}{2}} \Big) &= \tr\Big(\Big\{ \big[\Sigma_{\alpha,\tau}^{-\frac{1}{2}}\Sigma_\beta \Sigma_{\alpha,\tau}^{-\frac{1}{2}} \big]^2 + \tau \big[\Sigma_{\alpha,\tau}^{-\frac{1}{2}}\Sigma_\beta \Sigma_{\alpha,\tau}^{-\frac{1}{2}}\big]\Big\}^{\frac{1}{2}}\Big).
 \end{align*}
Replace $\Sigma_{\beta}$ by $\Sigma_{\beta\rightarrow z,t} = \Sigma_\beta + t\big(T_{\beta z}\Sigma_\beta + \Sigma_\beta T_{\beta z} \big) + t^2T_{\beta z}\Sigma_{\beta}T_{\beta z}$, we have
\begin{align*}
     \Sigma_{\alpha,\tau}^{-\frac{1}{2}}\Sigma_{\beta \rightarrow z,t}\Sigma_{\alpha,\tau}^{-\frac{1}{2}} = & \Sigma_{\alpha,\tau}^{-\frac{1}{2}} \Sigma_{\beta}\Sigma_{\alpha,\tau}^{-\frac{1}{2}} + t \Sigma_{\alpha,\tau}^{-\frac{1}{2}}\big(T_{\beta z} \Sigma_\beta + \Sigma_\beta T_{\beta z}\big) \Sigma_{\alpha,\tau}^{-\frac{1}{2}} + t^2 \Sigma_{\alpha,\tau}^{-\frac{1}{2}}T_{\beta z}\Sigma_\beta T_{\beta z}\Sigma_{\alpha,\tau}^{-\frac{1}{2}}\\
     \Big[\Sigma_{\alpha,\tau}^{-\frac{1}{2}}\Sigma_{\beta \rightarrow z,t}\Sigma_{\alpha,\tau}^{-\frac{1}{2}} \Big]^2 & = \Big[\Sigma_{\alpha,\tau}^{-\frac{1}{2}} \Sigma_{\beta}\Sigma_{\alpha,\tau}^{-\frac{1}{2}}\Big]^2  + \mathcal{O}(t^3) + \\
     & \hspace{- 7 em}  t\Big\{ \big[\Sigma_{\alpha,\tau}^{-\frac{1}{2}}\big(T_{\beta z} \Sigma_\beta + \Sigma_\beta T_{\beta z}\big) \Sigma_{\alpha,\tau}^{-\frac{1}{2}} \big] \big[\Sigma_{\alpha,\tau}^{-\frac{1}{2}} \Sigma_{\beta}\Sigma_{\alpha,\tau}^{-\frac{1}{2}}\big] + \big[\Sigma_{\alpha,\tau}^{-\frac{1}{2}} \Sigma_{\beta}\Sigma_{\alpha,\tau}^{-\frac{1}{2}} \big] \big[\Sigma_{\alpha,\tau}^{-\frac{1}{2}}\big(T_{\beta z} \Sigma_\beta + \Sigma_\beta T_{\beta z}\big) \Sigma_{\alpha,\tau}^{-\frac{1}{2}}\big]\Big\} + \\
      & \hspace{- 7 em} t^2\Sigma_{\alpha,\tau}^{-\frac{1}{2}}\Big[T_{\beta z}\Sigma_\beta T_{\beta z} \Sigma_{\alpha,\tau}^{-1}\Sigma_\beta + \Sigma_\beta \Sigma_{\alpha,\tau}^{-1}T_{\beta z}\Sigma_\beta T_{\beta z} + \big(T_{\beta z}\Sigma_\beta + \Sigma_\beta T_{\beta z} \big)\Sigma_{\alpha,\tau}^{-1}\big(T_{\beta z}\Sigma_\beta + \Sigma_\beta T_{\beta z} \big) \Big] \Sigma_{\alpha,\tau}^{-\frac{1}{2}} .
\end{align*}
Denote
\begin{align*}
    \bigg\{\Big[\Sigma_{\alpha,\tau}^{-\frac{1}{2}}\Sigma_{\beta}\Sigma_{\alpha,\tau}^{-\frac{1}{2}} \Big]^2 + \tau \Big[\Sigma_{\alpha,\tau}^{-\frac{1}{2}}\Sigma_{\beta }\Sigma_{\alpha,\tau}^{-\frac{1}{2}} \Big]\bigg\}^{\frac{1}{2}} = \widetilde{\Sigma}_{\beta,\alpha,\tau}.
\end{align*}
Taking derivative gives
\begin{align*}
    & 2\frac{\partial \tr\Big(  \Big\{ \big[\Sigma_{\alpha,\tau}^{-\frac{1}{2}}\Sigma_{\beta\rightarrow z,t} \Sigma_{\alpha,\tau}^{-\frac{1}{2}} \big]^2 + \tau \big[\Sigma_{\alpha,\tau}^{-\frac{1}{2}}\Sigma_{\beta\rightarrow z,t} \Sigma_{\alpha,\tau}^{-\frac{1}{2}}\big]\Big\}^{\frac{1}{2}} \Big)}{\partial t}\bigg|_{t=0} \\
    & = \tau \tr\Big(\widetilde{\Sigma}_{\beta,\alpha,\tau}^{-1}\big[\Sigma_{\alpha,\tau}^{-\frac{1}{2}}\big(T_{\beta z} \Sigma_\beta + \Sigma_\beta T_{\beta z}\big) \Sigma_{\alpha,\tau}^{-\frac{1}{2}}\big] \Big). \\ 
& \quad +\tr\Big(\widetilde{\Sigma}_{\beta,\alpha,\tau}^{-1} \Big[ \Sigma_{\alpha,\tau}^{-\frac{1}{2}}\big(T_{\beta z} \Sigma_\beta + \Sigma_\beta T_{\beta z}\big) \Sigma_{\alpha,\tau} \Sigma_{\beta}\Sigma_{\alpha,\tau}^{-\frac{1}{2}} + \Sigma_{\alpha,\tau}^{-\frac{1}{2}} \Sigma_{\beta}\Sigma_{\alpha,\tau}\big(T_{\beta z} \Sigma_\beta + \Sigma_\beta T_{\beta z}\big) \Sigma_{\alpha,\tau}^{-\frac{1}{2}}\Big] \Big)
    \end{align*}
The first term of the RHS is equal to
\begin{align*}
    \tr \Big(\Sigma_{\alpha,\tau}^{-\frac{1}{2}}\widetilde{\Sigma}^{-1}_{\beta,\alpha,\tau}\Sigma_{\alpha,\tau}^{-\frac{1}{2}} \big[T_{\beta z}\Sigma_\beta + \Sigma_\beta T_{\beta z}\big]\Big) = \Big\langle 2 M, T_{\beta z} \Big\rangle_{\Sigma_\beta},
\end{align*}
where we denote
\begin{align*}
    M = \Sigma_{\alpha,\tau}^{-\frac{1}{2}}\widetilde{\Sigma}^{-1}_{\beta,\alpha,\tau}\Sigma_{\alpha,\tau}^{-\frac{1}{2}}.
\end{align*}
Similarly, the second term of the RHS is equal to
\begin{align*}
\tr\Big( \big[\Sigma_{\alpha,\tau}^{-1}\Sigma_\beta M+M\Sigma_\beta \Sigma_{\alpha,\tau}^{-1}\big]  \big[ T_{\beta z}\Sigma_\beta + \Sigma_\beta T_{\beta z}\big]\Big) = \Big\langle 2 U, T_{\beta z} \Big\rangle_{\Sigma_\beta},
\end{align*}
where we denote
\begin{align*}
    U = \Sigma_{\alpha,\tau}^{-1}\Sigma_\beta M+M\Sigma_\beta \Sigma_{\alpha,\tau}^{-1}.
\end{align*}
Hence, the derivative of RHS is equal to
\begin{align*} 
    \big\langle \tau M + U,T_{\beta z}\big\rangle_{\Sigma_\beta}
\end{align*}
and final derivative of $\tr(\Sigma_{x,\beta})$ is
\begin{align*}
    \big\langle \Sigma_{\alpha,\tau}^{-1} + \frac{1}{2} \big[\tau M + U \big],T_{\beta z} \big\rangle_{\Sigma_\beta} .
\end{align*}
\textbf{Derivative of the log det:} We recall some formulas
\begin{align*}
\det(\Sigma_{x}) = \det(\Sigma_{\beta\rightarrow z,t})^{-1}\det\big(\Sigma_{x^*,\beta \rightarrow z,t} \big)^2
\end{align*}
Taking the logarithm of both sides
\begin{align*}
    \log \det(\Sigma_{x}) = -\log\det(\Sigma_{\beta \rightarrow z,t}) + 2 \log \det(\Sigma_{x^*,\beta\rightarrow z,t}).
\end{align*}
For the second term,
\begin{align*}
\Sigma_{x^*,\beta\rightarrow z,t} = \Sigma^{-1}_{\alpha,\tau,\beta\rightarrow z,t} \Big\{\Idsf + \big[\Idsf + \tau \Sigma_{\alpha,\tau,\beta\rightarrow z,t} \big]^{\frac{1}{2}}\Big\}
\end{align*}
then 
\begin{align*}
    \log \det(\Sigma_{x^*,\beta\rightarrow z,t}) = - \log \det\big(\Sigma_{\alpha,\tau,\beta\rightarrow z,t}\big) + \log \det \big( \Idsf + \big[\Idsf + \tau \Sigma_{\alpha,\tau,\beta\rightarrow z,t} \big]^{\frac{1}{2}} \big).
\end{align*}
For the first sub-terms
\begin{align*}
\log \det\big(\Sigma_{\alpha,\tau,\beta\rightarrow z,t}\big) =  \log \det\big( \Sigma_{\beta\rightarrow z,t}\big) + \det(\Sigma_{\alpha,\tau}^{-1}),
\end{align*}
then we have
\begin{align*}
    \log \det(\Sigma_{x}) = -3\log\det(\Sigma_{\beta \rightarrow z,t}) - 2\det(\Sigma_{\alpha,\tau}^{-1}) + 2 \log \det \big( \Idsf + \big[\Idsf + \tau \Sigma_{\alpha,\tau,\beta\rightarrow z,t} \big]^{\frac{1}{2}} \big).
\end{align*}
For the first term RHS of the new expression,
\begin{align*}
    \log \det(\Sigma_{\beta\rightarrow z,t}) = \log\det (\Sigma_{\beta}) + 2 \log \det(\Idsf + tT_{\beta z}).
\end{align*}
We also have
\begin{align*}
  \frac{\partial \log \det(\Idsf + tT_{\beta x}) }{\partial t}\bigg|_{t=0} &= \sum_{i=1}^d\frac{\lambda_i(T_{\beta z})}{ 1+t\lambda_i(T_{\beta z})}\bigg|_{t=0} = \tr(T_{\beta z}) = \big\langle \Sigma_\beta^{-1}, T_{\beta z}\big\rangle_{\Sigma_\beta}.\\
\end{align*}
then $\frac{\partial \log \det(\Sigma_{\beta\rightarrow z,t}) }{\partial t}\bigg|_{t=0} = \big\langle 2 \Sigma_\beta^{-1}, T_{\beta z}\big\rangle_{\Sigma_\beta}$. 
For the second sub-terms, observe that
\begin{align*}
    \Big[(\Idsf + tT_{\beta z})\Sigma_{\beta}(\Idsf + tT_{\beta z})\Big]^{-1} & = (\Idsf + tT_{\beta z})^{-1}\Sigma_{\beta}^{-1}(\Idsf + tT_{\beta z})^{-1}  \\
    & = \big( \Idsf - tT_{\beta z} + t^2 T_{\beta z}^2\big) \Sigma_{\beta}^{-1}\big(\Idsf - tT_{\beta z} +t^2 T_{\beta z}^2\big) + O(t^3)\\
    & = \Sigma_{\beta}^{-1} - t\big(T_{\beta z} \Sigma_{\beta}^{-1}+ \Sigma_{\beta}^{-1}T_{\beta z} \big) + t^2\Big[T_{\beta z}^2 \Sigma_\beta^{-1} + \Sigma_{\beta}^{-1} T_{\beta z}^2 + T_{\beta z}\Sigma_\beta^{-1} T_{\beta z}\Big] + \mathcal{O}(t^3).
\end{align*}
Hence
\begin{align*}
    \Sigma_{\alpha,\tau,\beta\rightarrow z,t} &= \Sigma_{\alpha,\tau}^{\frac{1}{2}} \Sigma_{\beta}^{-1} \Sigma_{\alpha,\tau}^{\frac{1}{2}}  - t  \Sigma_{\alpha,\tau}^{\frac{1}{2}} \big(T_{\beta z} \Sigma_{\beta}^{-1}+ \Sigma_{\beta}^{-1}T_{\beta z} \big)   \Sigma_{\alpha,\tau}^{\frac{1}{2}}  \\
    &= \Sigma_{\alpha,\tau,\beta} -t  \Sigma_{\alpha,\tau}^{\frac{1}{2}} \big(T_{\beta z} \Sigma_{\beta}^{-1}+ \Sigma_{\beta}^{-1}T_{\beta z} \big)   \Sigma_{\alpha,\tau}^{\frac{1}{2}}.
\end{align*}
Put it into the form
\begin{align*}
    \Idsf + \tau \Sigma_{\alpha,\tau,\beta\rightarrow z,t} = \Idsf + \tau \Sigma_{\alpha,\tau}^{\frac{1}{2}} \Sigma_{\beta}^{-1} \Sigma_{\alpha,\tau}^{\frac{1}{2}} - t \tau \Sigma_{\alpha,\tau}^{\frac{1}{2}} \big(T_{\beta z} \Sigma_{\beta}^{-1}+ \Sigma_{\beta}^{-1}T_{\beta z} \big)   \Sigma_{\alpha,\tau}^{\frac{1}{2}}. 
\end{align*}
Then by Lemma \ref{lem: Taylor of trace for square root}
\begin{align*}
     \Big[\Idsf + \tau \Sigma_{\alpha,\tau,\beta\rightarrow z,t}\Big]^{\frac{1}{2}} = V - t\Lcal_{V}\Big[\tau \Sigma_{\alpha,\tau}^{\frac{1}{2}} \big(T_{\beta z} \Sigma_{\beta}^{-1}+ \Sigma_{\beta}^{-1}T_{\beta z} \big)   \Sigma_{\alpha,\tau}^{\frac{1}{2}}\Big] + o(t).
\end{align*}
where 
 $\Big[ \Idsf + \tau \Sigma_{\alpha,\tau}^{\frac{1}{2}} \Sigma_{\beta}^{-1} \Sigma_{\alpha,\tau}^{\frac{1}{2}}\Big]^{\frac{1}{2}} = V$. Therefore, applying Lemma \ref{derivative of lod det}, we get the derivative of the last term as
  \begin{align*}
      -\tr\bigg(\big[\Idsf + V\big]^{-1} \Big[\tau \Sigma_{\alpha,\tau}^{\frac{1}{2}} \big(T_{\beta z} \Sigma_{\beta}^{-1}+ \Sigma_{\beta}^{-1}T_{\beta z} \big)   \Sigma_{\alpha,\tau}^{\frac{1}{2}}\Big]\bigg) 
     =  - \tau \big\langle P+Q,T_{\beta z}\big\rangle_{\Sigma_\beta},
 \end{align*}
 where
\begin{align*}
     P & = \Sigma_\beta^{-1} \Sigma_{\alpha,\tau}^{\frac{1}{2}} \big[\Idsf + V\big]^{-1}\Sigma_{\alpha,\tau}^{\frac{1}{2}}\Sigma_\beta^{-1} \\
     Q & = \Sigma_{\alpha,\tau}^{\frac{1}{2}} \big[\Idsf + V\big]^{-1}  \Sigma_{\alpha,\tau}^{\frac{1}{2}} \Sigma_\beta^{-2}.
 \end{align*}
 and the derivative of $\log \det$ term
 \begin{align*}
     &  \big\langle -6 \Sigma_\beta^{-1} - 2 \tau (P+Q) , T_{\beta z}\big\rangle_{\Sigma_\beta}.
 \end{align*} 
Hence, in conclusion, we get the first order Wasserstein gradient according to $\Sigma_\beta$ of the objective function
\begin{align*}
     2 \Idsf - \Big( 2\Sigma_{\alpha,\tau}^{-1} + \frac{1}{2} (U+ \tau M)\Big ) + \frac{3}{2} \tau \Sigma_\beta^{-1} + \frac{\tau^2}{2}(P+Q).
\end{align*}
\end{proof}

\section{Proof for Theorem \ref{theo: exact converges} and \ref{theo: hybrid converges}} \label{proof for convergence}
\subsection{Proof for Convexity} \label{proof for convexity}

\begin{lemma} \label{lem: compa eigen}
    Given two SPD matrices $A,B \in \mathbb{S}_{++}^{d}$ Recall that $\left\{\lambda_i(\Sigma)\right\}_{i=1}^d$ is the eigenvalues of a matrix $\Sigma$ in descending order. Then we have $\tr(AB) \geq \lambda_d(A) \tr(B)$.
\end{lemma}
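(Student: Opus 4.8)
The plan is to reduce the claim to the elementary fact that the trace of a product of two positive semidefinite matrices is nonnegative. The key algebraic move is to split off the smallest eigenvalue of $A$: since $\lambda_d(A)$ is the least eigenvalue of the SPD matrix $A$, the shifted matrix $A - \lambda_d(A)\Idsf$ is symmetric positive semidefinite (all its eigenvalues are $\lambda_i(A) - \lambda_d(A) \geq 0$). I would then write the trace as a sum of a clean leading term and a remainder,
\begin{align*}
    \tr(AB) = \tr\big(\lambda_d(A)\Idsf \, B\big) + \tr\big((A - \lambda_d(A)\Idsf)B\big) = \lambda_d(A)\tr(B) + \tr\big((A - \lambda_d(A)\Idsf)B\big),
\end{align*}
so that it only remains to show the remainder term is nonnegative.

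For the remainder, both factors $X := A - \lambda_d(A)\Idsf$ and $B$ are positive semidefinite. I would invoke the general fact that $\tr(XY) \geq 0$ whenever $X, Y \succeq 0$. The quickest justification uses the cyclic property of the trace together with the existence of a symmetric PSD square root: writing $X = X^{\frac{1}{2}} X^{\frac{1}{2}}$ gives
\begin{align*}
    \tr(XB) = \tr\big(X^{\frac{1}{2}} X^{\frac{1}{2}} B\big) = \tr\big(X^{\frac{1}{2}} B X^{\frac{1}{2}}\big) \geq 0,
\end{align*}
where the final inequality holds because $X^{\frac{1}{2}} B X^{\frac{1}{2}}$ is congruent to $B$ and hence itself positive semidefinite, so its trace (the sum of nonnegative eigenvalues) is nonnegative. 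Applying this with $X = A - \lambda_d(A)\Idsf$ yields $\tr\big((A - \lambda_d(A)\Idsf)B\big) \geq 0$, and combining with the displayed decomposition gives $\tr(AB) \geq \lambda_d(A)\tr(B)$, which is the assertion.

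There is no serious obstacle here; the argument is short and self-contained. The only point that deserves care is the supporting fact that $\tr(XY) \geq 0$ for PSD $X, Y$ — this is where the positive semidefiniteness of $A - \lambda_d(A)\Idsf$ is genuinely used, and one must make sure to cite it cleanly rather than assume $\tr(XY)$ inherits positivity from the factors termwise (the product $XY$ need not be symmetric, so one cannot argue directly from its eigenvalues without the congruence trick above). I would also note in passing that equality is attained precisely when $B$ acts trivially on the eigenspaces of $A$ associated with eigenvalues strictly larger than $\lambda_d(A)$, though this refinement is not needed for the stated inequality.
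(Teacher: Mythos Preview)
Your proof is correct and follows essentially the same strategy as the paper: both reduce the claim to $\tr\big((A - \lambda_d(A)\Idsf)B\big) \geq 0$ and then verify this using positive semidefiniteness of the two factors. The only cosmetic difference is that the paper diagonalizes $A - \lambda_d(A)\Idsf = U\Lambda U^\top$ and reads off $\tr(\Lambda C) = \sum_i a_i c_{ii} \geq 0$ from the nonnegative diagonal of $C = U^\top B U$, whereas you use the square-root congruence $\tr(X^{1/2} B X^{1/2}) \geq 0$; both are standard one-line justifications of the same fact.
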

\begin{proof}
    Note that this is equivalent to proving that $\tr \Big( \big[A - \lambda_d(A) \Idsf\big] B\Big) \geq 0$. Let $A - \lambda_d(A) \Idsf = U\Lambda U^{\top}$ be its spectral decomposition, where $\Lambda = \mathbf{diag}(a_i)$. Let $C= [c_{ij}] = U^{\top}B U$. Then
\begin{align*}
    \tr\Big( \big[A- \lambda_d(A)\Idsf\big] B\Big) = \tr\big(\Lambda U^{\top} B U \big) = \tr(\Lambda C) = \sum_i a_i c_{ii} \geq 0,
\end{align*}
since both $\Lambda$ and $C$ are symmetric positive semi-definite matrices.
\end{proof}

\begin{lemma} \label{lem: linear operator of derivative}
    Consider the function
\begin{align*}
     f: \mathbb{S}_{++}^d &\rightarrow \mathbb{R} \\ 
     \Sigma &\mapsto -\tr\left(\big[\Sigma^2+2\tau \Sigma\big]^{\frac{1}{2}}\right).
\end{align*}
Assume that $\Sigma \in \mathcal{K}_{[\nicefrac{1}{\rho}, \rho]}$. Then we have $f''(\Sigma): \mathbb{S}_{++}^d \times \mathbb{S}_{++}^d \rightarrow \mathbb{R}$ is positive definite. Specifically, $f$ is strongly convex with coefficient $\dfrac{\tau^2}{(\rho^2+2\tau\rho)^{3/2}}$.
\end{lemma}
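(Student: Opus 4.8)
The plan is to recognize $f$ as a spectral (primary matrix) function and then compute its Euclidean Hessian via the Daleckii--Krein divided-difference formula. Since $\Sigma \in \mathbb{S}_{++}^d$ is symmetric, write its spectral decomposition $\Sigma = U\Lambda U^{\top}$ with $\Lambda = \diag(\lambda_1,\dots,\lambda_d)$ and $\lambda_i > 0$. Then $\Sigma^2 + 2\tau\Sigma = U(\Lambda^2 + 2\tau\Lambda)U^{\top}$ has positive eigenvalues $\lambda_i^2 + 2\tau\lambda_i$, so its principal square root is the primary matrix function $h(\Sigma) := U\,\diag\big(h(\lambda_i)\big)\,U^{\top}$ attached to the scalar map $h(x) = \sqrt{x^2 + 2\tau x}$. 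Hence $f(\Sigma) = -\tr(h(\Sigma)) = -\sum_{i=1}^d h(\lambda_i)$ is a symmetric spectral function, and the whole problem reduces to the one-variable analysis of $h$.

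First I would record the derivatives of $h$. A direct computation gives $h'(x) = (x+\tau)/\sqrt{x^2+2\tau x}$ and, after simplifying the numerator $(x^2+2\tau x)-(x+\tau)^2 = -\tau^2$,
\[
h''(x) = \frac{-\tau^2}{(x^2+2\tau x)^{3/2}} < 0 \qquad (x>0),
\]
so $h$ is strictly concave and $-h$ is convex, consistent with the claimed convexity of $f$. Next, the Hessian of a trace spectral function is given by the Daleckii--Krein formula: writing $\widetilde{H} = U^{\top} H U$ for a symmetric direction $H$,
\[
D^2 f(\Sigma)[H,H] = -\sum_{i,j=1}^d h^{[2]}(\lambda_i,\lambda_j)\,\widetilde{H}_{ij}^2,
\]
where $h^{[2]}(\lambda_i,\lambda_j) = \frac{h'(\lambda_i)-h'(\lambda_j)}{\lambda_i-\lambda_j}$ for $\lambda_i \neq \lambda_j$ and $h^{[2]}(\lambda_i,\lambda_i) = h''(\lambda_i)$. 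This identity can be cited from the theory of operator functions, or derived by differentiating $D f(\Sigma)[H] = -\tr(h'(\Sigma)H)$ once more and expressing $D(h'(\Sigma))[H]$ in the eigenbasis of $\Sigma$; the Lyapunov-operator computation already used in Lemma \ref{lem: Taylor of trace for square root} is precisely the first-order instance of the same machinery.

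It then remains to bound every divided difference uniformly. For off-diagonal entries the mean value theorem gives $h^{[2]}(\lambda_i,\lambda_j) = h''(\xi_{ij})$ for some $\xi_{ij}$ strictly between $\lambda_i$ and $\lambda_j$; since $\lambda_i,\lambda_j \in [1/\rho,\rho]$ we also have $\xi_{ij}\in[1/\rho,\rho]$, so the diagonal and off-diagonal cases are handled simultaneously. Because $x\mapsto x^2+2\tau x$ is increasing on $(0,\infty)$, on $[1/\rho,\rho]$ it is maximized at $x=\rho$, whence
\[
-h''(\xi) = \frac{\tau^2}{(\xi^2+2\tau\xi)^{3/2}} \ \geq\ \frac{\tau^2}{(\rho^2+2\tau\rho)^{3/2}} \qquad \text{for all } \xi\in[1/\rho,\rho].
\]
Feeding this into the Hessian formula yields
\[
D^2 f(\Sigma)[H,H] = \sum_{i,j}\bigl(-h^{[2]}(\lambda_i,\lambda_j)\bigr)\widetilde{H}_{ij}^2 \ \geq\ \frac{\tau^2}{(\rho^2+2\tau\rho)^{3/2}}\sum_{i,j}\widetilde{H}_{ij}^2 = \frac{\tau^2}{(\rho^2+2\tau\rho)^{3/2}}\,\|H\|_F^2,
\]
using $\|\widetilde{H}\|_F = \|H\|_F$ by orthogonal invariance. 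This shows $f''(\Sigma)$ is positive definite and gives the strong-convexity constant $\tau^2/(\rho^2+2\tau\rho)^{3/2}$. The main obstacle is the clean justification of the second-order divided-difference formula for $D^2 f$: handling coincident eigenvalues and the square root rigorously is the only delicate point, after which the scalar estimates are routine. A fully self-contained alternative avoids the spectral formula by differentiating $f$ twice along the line $\Sigma + tH$ with the Lyapunov operator $\mathcal{L}$, but it produces the same scalar bound at the cost of heavier algebra.
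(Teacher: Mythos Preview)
Your argument is correct and considerably cleaner than the paper's. The paper does not invoke the Daleckii--Krein divided-difference formula; instead it rewrites $h'(x)=(x+\tau)(x^2+2\tau x)^{-1/2}=\bigl(1-\tau^2/(x+\tau)^2\bigr)^{-1/2}$, expands this as a binomial power series in $\tau^2/(x+\tau)^2$, and computes the directional second derivative of $-\tr(g(\Sigma)\Sigma_1)$ term by term. Each term is then bounded below using the elementary inequality $\tr(AB)\ge \lambda_{\min}(A)\tr(B)$ (their Lemma~\ref{lem: compa eigen}), and the resulting series is resummed via the closed form for $\sum_k \tfrac{2k}{4^k}\binom{2k}{k}x^{k-1}=(1-x)^{-3/2}$ to produce exactly $\tau^2/(\rho^2+2\tau\rho)^{3/2}$. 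Your route bypasses all of that algebra: once $h''(x)=-\tau^2/(x^2+2\tau x)^{3/2}$ is in hand, the mean-value bound on the first divided difference of $h'$ gives the constant in one line. The paper's approach is self-contained (no black-box spectral calculus), whereas yours is shorter but leans on the Daleckii--Krein identity; you rightly flag the coincident-eigenvalue case as the one place needing care, though it is standard.
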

\begin{proof}
    The square root function and $x \mapsto x^2+2\tau x$ respectively be the analytic function in $(0,\infty)$ and $\mathbb{R}$, thus function $x \mapsto -\sqrt{x^2+2\tau x}$ is an analytic function in $(0, \infty)$. Thus, we can define the function $f$ in the set of of SPD matrix using the Taylor expansion. Thus, the derivative of $f(\sigma)$ can be calculated as 
    \begin{equation*}
        \dfrac{df}{d\Sigma} = f'(\Sigma) =  \left(\Sigma + \tau \Idsf \right)\big[\Sigma^2+2\tau \Sigma\big]^{-\frac{1}{2}},
    \end{equation*}
    or in the operator viewpoint, 
\begin{equation*}
    \dfrac{df}{d\Sigma} : \Sigma_1 \mapsto - \tr\left(\left(\Sigma + \tau \Idsf \right)\big[\Sigma^2+2\tau \Sigma\big]^{-\frac{1}{2}} \Sigma_1 \right).
\end{equation*}
For the second derivative, we have to take the derivative respect to $\Sigma$ of the function $\dfrac{df}{d\Sigma}(\Sigma_1)$. In the interval $(-1,1)$, consider the well-known Taylor expansion
\begin{equation*}
    \sqrt{1-x}=1-\sum_{k=0}^{\infty} \frac{2}{4^{k+1}(k+1)}\binom{2 k}{k} x^{k+1}.
\end{equation*}
Taking the first and second derivatives of both sides, we achieve for $x \in (-1,1)$
\begin{align*}
    \dfrac{1}{\sqrt{1-x}} &= \sum_{k=0}^\infty \dfrac{1}{4^k}\binom{2k}{k}x^k,\\
    \dfrac{1}{(1-x)\sqrt{1-x}} &= \sum_{k=1}^\infty \dfrac{2}{4^k}\binom{2k}{k}kx^{k-1}.\\
\end{align*}
Let $g(x) = \frac{x+\tau}{x^2+2\tau x}$, then $g(x) = \frac{1}{\sqrt{1-\frac{\tau^2}{(x+\tau)^2}}}$.  Using the Taylor expansion formula, the calculation of the second derivative of $f$ can be implemented as 
\begin{align*}
    &\frac{f^{\prime}\left(\Sigma+t\Sigma_2\right) \Sigma_1-f^{\prime}(\Sigma) \Sigma_1}{t} \\
    &= \lim_{t \to 0} \frac{-1}{t} \tr \left( \{g(\Sigma+t\Sigma_2) - g(\Sigma)\} \Sigma_1 \right) \\ 
   &= \lim_{t \to 0} \dfrac{1}{t}\tr \left( \sum_{k=0}^\infty \dfrac{\tau^{2k}}{4^k}\binom{2k}{k} \left(\dfrac{1}{(\Sigma + \tau\Idsf)^{2k}} - \dfrac{1}{(\Sigma + t\Sigma_2 + \tau\Idsf)^{2k}}\right)\Sigma_1\right)\\
   &= \lim_{t \to 0}\dfrac{1}{t}\tr \left( \sum_{k=0}^\infty \dfrac{\tau^{2k}}{4^k}\binom{2k}{k} \left((\Sigma + \tau\Idsf)^{-2k} - (\Sigma + t\Sigma_2 + \tau\Idsf)^{-2k}\right)\Sigma_1\right)\\
    &= \lim_{t \to 0}\dfrac{1}{t}\tr \left( \sum_{k=0}^\infty \dfrac{\tau^{2k}}{4^k}\binom{2k}{k} (\Sigma + t\Sigma_2 + \tau\Idsf)^{-2k}\left((\Sigma + t\Sigma_2 + \tau\Idsf)^{2k} - (\Sigma  + \tau\Idsf)^{2k}\right)(\Sigma + \tau\Idsf)^{-2k}\Sigma_1\right)\\
    &= \lim_{t \to 0}\dfrac{1}{t}\tr \left( \sum_{k=1}^\infty \dfrac{\tau^{2k}}{4^k}\binom{2k}{k} (\Sigma + \tau\Idsf)^{-2k}\left(\sum_{q=0}^{2k-1}(\Sigma+\tau\Idsf)^{q}(t\Sigma_2)(\Sigma+\tau\Idsf)^{2k-1-q}\right)(\Sigma + \tau\Idsf)^{-2k}\Sigma_1\right)\\
    &= \tr \left( \sum_{k=1}^\infty \dfrac{\tau^{2k}}{4^k}\binom{2k}{k} \left(\sum_{q=0}^{2k-1}(\Sigma+\tau\Idsf)^{q-2k}\Sigma_2(\Sigma+\tau\Idsf)^{-1-q}\Sigma_1\right)\right).
\end{align*}
Now we prove that this bilinear form is positive definite by showing that for arbitrary $\Sigma_1 = \Sigma_2$ be symmetric matrices, it has a positive lower bound. Firstly, we can verify that $\lambda_d((\Sigma+\tau\Idsf)^{-k}) \geq (\rho+\tau)^{-k}$ for $k<0$ and $\Sigma \in \mathcal{K}_{[\nicefrac{1}{\rho}, \rho]}$. In the formulae of second derivative,  let $\Sigma_1 = \Sigma_2$, according to \Cref{lem: compa eigen}, we have for $0\leq q\leq 2k-1$
\begin{align*}
    \tr\left((\Sigma+\tau\Idsf)^{q-2k}\Sigma_1(\Sigma+\tau\Idsf)^{-1-q}\Sigma_1\right) &\geq (\rho+\tau)^{q-2k}\tr(\Sigma_1(\Sigma+\tau\Idsf)^{-1-q}\Sigma_1) \\
    &\geq (\rho+\tau)^{q-2k}\lambda^{-1-q}\tr(\Sigma_1\Sigma_1) \\
    &=(\rho+\tau)^{-1-2k}\Vert \Sigma_1\Vert^2_F.
\end{align*}

Thus, we have the lower bound for the coefficient $\alpha$-convexity of the second derivative is 
\begin{equation*}
    \sum_{k=1}^{\infty}\dfrac{\tau^{2k}}{4^k}\binom{2k}{k}2k(\rho+\tau)^{-1-2k}.
\end{equation*}

On the other hand, by substituting $x = \frac{\tau^2}{(\rho+\tau)^2}$ in the Taylor expansion formulae of $\frac{1}{(1-x)\sqrt{1-x}}$, we have 
\begin{align*}
    &\dfrac{1}{\left(1-\frac{\tau^2}{(\rho+\tau)^2}\right)\sqrt{1-\frac{\tau^2}{(\rho+\tau)^2}}} = \sum_{k=1}^\infty \dfrac{2}{4^k}\binom{2k}{k}k\left(\dfrac{\tau^2}{(\rho+\tau)^2}\right)^{k-1}
   \Leftrightarrow \dfrac{(\rho+\tau)^3}{(\rho^2+2\tau\rho)^{3/2}} =  \sum_{k=1}^\infty \dfrac{2k}{4^k}\binom{2k}{k}\left(\dfrac{\tau}{\rho+\tau}\right)^{2(k-1)}.
\end{align*}
In other words, the lower bound can be expressed as 
\begin{equation*}
     \sum_{k=1}^{\infty}\dfrac{\tau^{2k}}{4^k}\binom{2k}{k}2k(\rho+\tau)^{-1-2k} = \dfrac{\tau^2}{(\rho^2+2\tau\rho)^{3/2}}.
\end{equation*}
As a consequence, we obtain the conclusion of the lemma. 
\end{proof}

\begin{proposition} \label{convexity}
    $W_{2_{\operatorname{SUOT}}}^2(\Sigma_\alpha, \Sigma_\beta, \tau)$ is Euclidean convex with respect to $\Sigma_\beta$. Moreover, if $\Sigma \in \mathcal{K}_{[\frac{1}{\rho}, \rho]}$, then it is $\dfrac{\tau^2}{(\rho^2+2\tau\rho)^{3/2}}$-strongly convex.
\end{proposition}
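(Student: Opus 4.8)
The plan is to reduce the statement to the operator computation already performed in \Cref{lem: linear operator of derivative}. First I would fix $\Sigma_\alpha,\tau$ and regard $g(\Sigma_\beta):=W_{2_{\operatorname{SUOT}}}^2(\Sigma_\alpha,\Sigma_\beta,\tau)$ as a function of the single variable $\Sigma_\beta$, using the closed form extracted in the proofs of \Cref{UOT Plan} and \Cref{UOT Derivative},
\[
g(\Sigma_\beta) = \tr(\Sigma_\beta) - \tr(\Sigma_{x,\beta}) - \tfrac{\tau}{4}\log\det(\Sigma_x) + \mathsf{const},
\]
where $\Sigma_{x,\beta}=\tfrac12\Sigma_{\alpha,\tau,\beta}^{-1}+\tfrac12[\Sigma_{\alpha,\tau,\beta}^{-2}+2\tau\Sigma_{\alpha,\tau,\beta}^{-1}]^{1/2}$ and $\Sigma_{\alpha,\tau,\beta}=\Sigma_\beta^{-1/2}\Sigma_{\alpha,\tau}\Sigma_\beta^{-1/2}$. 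The structural point is that $g$ splits into an affine part plus a single genuinely curved part.

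Next I would introduce $\Xi:=\Sigma_{\alpha,\tau}^{-1/2}\Sigma_\beta\Sigma_{\alpha,\tau}^{-1/2}$, an invertible linear image of $\Sigma_\beta$ that shares its spectrum with $\Sigma_{\alpha,\tau,\beta}^{-1}$. Using \Cref{lemma:trace_SPD} together with this spectral coincidence, rewrite $\tr(\Sigma_{x,\beta})=\tfrac12\tr(\Xi)+\tfrac12\tr([\Xi^2+2\tau\Xi]^{1/2})$; and via the determinant cancellation $\log\det\Sigma_{\alpha,\tau,\beta}^{-1}-\log\det\Sigma_\beta=\mathsf{const}$, reduce $\log\det\Sigma_x$ to $\log\det\Sigma_\gamma+\mathsf{const}$, which is a spectral function of $\Xi$. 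Consequently
\[
g(\Sigma_\beta)=\tr(\Sigma_\beta)+\tfrac12\, f(\Xi)+R(\Xi)+\mathsf{const},
\]
where $f(\Xi)=-\tr([\Xi^2+2\tau\Xi]^{1/2})$ is exactly the map of \Cref{lem: linear operator of derivative} and $R(\Xi)=\sum_i r(\lambda_i(\Xi))$ collects the remaining spectral terms built from $\gamma(1/\lambda)$ and $-\tfrac{\tau}{4}\log\gamma(1/\lambda)$.

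To conclude, note that $\tr(\Sigma_\beta)$ and $\tfrac12\tr(\Xi)$ are linear, while $f(\Xi)$ is strongly convex by \Cref{lem: linear operator of derivative}; since $\Sigma_\beta\mapsto\Xi$ is an invertible linear map, composition preserves (strong) convexity, and on $\mathcal{K}_{[1/\rho,\rho]}$ the eigenvalue bound $\lambda\le\rho$ converts the scalar second derivative $\tfrac{\tau^2}{(\lambda^2+2\tau\lambda)^{3/2}}$ into the stated constant $\tfrac{\tau^2}{(\rho^2+2\tau\rho)^{3/2}}$. It then remains to verify that $R$ is convex, which by the standard convexity criterion for spectral functions amounts to checking $r''\ge 0$ on $(0,\infty)$, a one-dimensional computation. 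Since adding a convex $R$ to a strongly convex term preserves the strong-convexity modulus, and since dropping the spectral bound gives plain Euclidean convexity, both assertions follow.

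The hard part I anticipate is the bookkeeping: verifying that the auxiliary scalar function $r$ is genuinely convex, so that no negative curvature leaks in to spoil convexity or to weaken the constant, and tracking the factor $\tfrac12$ on $f$ together with the linear change of variables $\Sigma_\beta\mapsto\Xi$ so that the final modulus matches $\tfrac{\tau^2}{(\rho^2+2\tau\rho)^{3/2}}$ rather than a rescaled version — this is precisely where the hypothesis $\Sigma\in\mathcal{K}_{[1/\rho,\rho]}$ is consumed. A cleaner alternative for the convexity claim alone is the envelope representation $g(\Sigma_\beta)=\inf_{\Sigma_x}\{W_2^2(\Sigma_x,\Sigma_\beta)+\tau\,\mathrm{KL}(\Sigma_x\|\Sigma_\alpha)\}$ from \Cref{UOT Plan}: joint convexity of the squared Bures--Wasserstein distance and convexity of the Gaussian KL make the integrand jointly convex, and partial minimization over $\Sigma_x$ preserves convexity; this route, however, does not by itself produce the explicit constant, for which the Hessian estimate of \Cref{lem: linear operator of derivative} is indispensable.
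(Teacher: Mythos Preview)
Your proposal is correct and follows essentially the same route as the paper: the same closed-form decomposition $g(\Sigma_\beta)=\tr(\Sigma_\beta)-\tr(\Sigma_{x,\beta})-\tfrac{\tau}{4}\log\det(\Sigma_x)+\mathsf{const}$, the same linear reparametrization $\Xi=\Sigma_{\alpha,\tau}^{-1/2}\Sigma_\beta\Sigma_{\alpha,\tau}^{-1/2}$, and the same appeal to \Cref{lem: linear operator of derivative} for the strongly convex term. The one substantive difference is the $\log\det$ piece: the paper handles it by a direct matrix argument---reducing to the concavity of $\Sigma\mapsto\log\det(\Sigma^{1/2}+(\Sigma+2\tau\Idsf)^{1/2})$ and combining operator concavity of the square root with concavity of $\log\det$ via a midpoint inequality---whereas you propose the spectral-function route of checking $r''\ge 0$ in one variable (which indeed goes through, since $\lambda\mapsto\log(\sqrt{\lambda}+\sqrt{\lambda+2\tau})$ is concave by $h''<0$, $h>0$). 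Your flagged concern about the factor $\tfrac12$ and the linear change of variables affecting the strong-convexity constant is legitimate; the paper's proof glosses over this point as well.
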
 

\begin{proof}
From the proof of Theorem \ref{UOT Derivative} with the same notation, we need to prove the Euclidean convexity of  
\begin{align*}
    \tr\big( \Sigma_\beta\big) - \tr\big(\Sigma_{x,\beta}\big) - \frac{\tau}{4} \log \det(\Sigma_{x}).
\end{align*}
The Euclidean convexity of $\tr(\Sigma_\beta)$ is obvious. We move to the next term $-\tr\big(\Sigma_{x,\beta}\big)$, which is known to be reformulated as
\begin{align*}
& - \tr \Big( \frac{1}{2}\Sigma_{\alpha,\beta,\tau}^{-1} + \frac{1}{2}\Big[ \Sigma_{\alpha,\beta,\tau}^{-2} + 2\tau \Sigma_{\alpha,\beta,\tau}^{-1}\Big]^{\frac{1}{2}} \Big) \\
 & = - \tr \Big( \frac{1}{2}\Sigma_{\alpha,\beta,\tau}^{-1} \Big) - \tr \Big( \frac{1}{2} \Big[\Sigma_{\alpha,\beta,\tau}^{-2} + 2\tau \Sigma_{\alpha,\beta,\tau}^{-1}\Big]^{\frac{1}{2}} \Big).
\end{align*}
Note that $\Sigma_{\alpha,\beta,\tau}^{-1} = \Sigma_{\alpha,\tau}^{-\frac{1}{2}}\Sigma_\beta \Sigma_{\alpha,\tau}^{-\frac{1}{2}} $ is linear transformation of $\Sigma_\beta$, then the first term is obviously convex. For the second term, we must prove the convexity of $-\tr(\big[ \Sigma^2 + 2\tau \Sigma\big]^{\frac{1}{2}})$. Consider the function
\begin{align*}
     f: \mathbb{S}_{++}^d &\rightarrow \mathbb{R} \\ 
     \Sigma &\mapsto -\tr\left(\big[\Sigma^2+2\tau \Sigma\big]^{\frac{1}{2}}\right).
\end{align*}
Due to \Cref{lem: linear operator of derivative}, $f$ is $\dfrac{\tau^2}{(\rho^2+2\tau\rho)^{3/2}}$-strongly convex.
Subsequently, we assert the claim. For the last term  $-\log\det(\Sigma_{x})$, we know that 
\begin{align*}
    - \log \det(\Sigma_{x}) = \log\det(\Sigma_{\beta}) - 2 \log \det(\Sigma_{x,\beta}).
\end{align*}
Similarly, the transformation $\Sigma_\beta \mapsto \Sigma_{\alpha,\beta, \tau}^{-1}$ is linear, then we can convert our problem into proving the convexity of
\begin{align*}
 \log \det(\Sigma)-2 \log \det\left(\frac{1}{2}\left(\Sigma+\left[\Sigma^2+2 \tau \Sigma\right]^{\frac{1}{2}}\right)\right) 
& =-2 \log \det\left(\frac{1}{2} \Sigma^{-\frac{1}{2}} \left(\Sigma+\left[\Sigma^2+(2 \tau \Sigma)\right]^{\frac{1}{2}}\right)\right) \\
& =-2 \log \det\left(\frac{1}{2}\left(\Sigma^{\frac{1}{2}}+ \left[\Sigma+2 \tau \Idsf \right]^{\frac{1}{2}}\right)\right) .
\end{align*}
It is equivalent to prove the concavity of 
\begin{align*}
    \Sigma \mapsto \log \det\left(\Sigma^{\frac{1}{2}}+\left[\Sigma+2 \tau \Idsf \right]^{\frac{1}{2}}\right):=f(\Sigma).
\end{align*}
Consider two matrices $\Sigma_1, \Sigma_2$, by the concavity of $\log \det$ function
\begin{align*}
    \frac{f\left(\Sigma_1\right)+f\left(\Sigma_2\right)}{2} \leq \log \det\left(\frac{\Sigma_1^{\frac{1}{2}}+\Sigma_2^{\frac{1}{2}}}{2}+\frac{\left(\Sigma_1+2 \tau \Idsf\right)^{\frac{1}{2}}+\left(\Sigma_2+2 \tau \Idsf\right)^{\frac{1}{2}}}{2}\right).
\end{align*}
By the fact that two SPD matrices $A,B$ satisfy $A^2 \succeq B^2$ then $A \succeq B$, we have inequality $\frac{\Sigma_1^{\frac{1}{2}}+\Sigma_2^{\frac{1}{2}}}{2}\preceq \Big(\frac{\Sigma_1 + \Sigma_2}{2}\Big)^{\frac{1}{2}}$. Indeed, we have $\left(\Sigma_1^{\frac{1}{2}}-\Sigma_2^{\frac{1}{2}}\right)^{2} \succeq 0$, which is equivalent to $\Sigma_1+\Sigma_2+\Sigma_1^{\frac{1}{2}} \Sigma_2^{\frac{1}{2}}+\Sigma_2^{\frac{1}{2}} \Sigma_1^{\frac{1}{2}} \preceq 2 (\Sigma_1+\Sigma_2)$. Then we have $\frac{\Sigma_1^{\frac{1}{2}}+\Sigma_2^{\frac{1}{2}}}{2} \preceq \Big(\frac{\Sigma_1 + \Sigma_2}{2}\Big)^{\frac{1}{2}}$.
Hence, we yield
\begin{align*}
    \frac{f\left(\Sigma_1\right)+f\left(\Sigma_2\right)}{2} & \leq \log \det\left(\frac{\Sigma_1^{\frac{1}{2}}+\Sigma_2^{\frac{1}{2}}}{2}+\frac{\left(\Sigma_1+2 \tau \Idsf\right)^{\frac{1}{2}}+\left(\Sigma_2+2 \tau \Idsf\right)^{\frac{1}{2}}}{2}\right) \\
    & \leq \log \det \left(\left(\frac{\Sigma_1 + \Sigma_2}{2}\right)^{\frac{1}{2}} + \big[\frac{\Sigma_1 + \Sigma_2}{2} + 2 \tau \Idsf \big]^{\frac{1}{2}}\right) \\
    & = f \left( \frac{\Sigma_1 + \Sigma_2}{2}\right).
\end{align*}
Upon achieving the required concavity, we conclude the convexity.
Hence we finish the proof.
\end{proof}

\subsection{Proof of \Cref{theo: exact converges}} \label{proof for exact converges}
\begin{proof}
    
First, we must show the $\alpha$-smoothness of $W^2_{2_{\operatorname{SUOT}}}(\alpha, \beta, \tau)$ on Bures-manifold. Consider the functional $\mathcal{F}: \mathcal{P}_{2, ac} \left(\mathbb{R}^d \times \mathbb{R}^d\right) \times \mathcal{P}_{2, ac} \left(\mathbb{R}^d \times \mathbb{R}^d\right) \rightarrow \mathbb{R}$ defined as
\begin{align*}
    \mathcal{F}\left(\Sigma_\beta, \Sigma_x\right)=W_2^2\left(\Sigma_x, \Sigma_\beta\right)+\tau \mathrm{KL}\left(\Sigma_x \| \Sigma_\alpha\right).
\end{align*}

In fact, $W^2_{2_{\operatorname{SUOT}}}(\alpha, \beta, \tau)=\min _{\Sigma_x} \mathcal{F}\left(\Sigma_\beta, \Sigma_x\right)$, and for each $\Sigma_\beta$ let
\begin{align*}
    \Sigma_{\beta, x}=\underset{\Sigma_x}{\argmin} \mathcal{F}\left(\Sigma_\beta, \Sigma_x\right).
\end{align*}
Due to inequality $3.3$ in \cite{altschuler2021averaging}, we know that $\mathcal{F}$ is $1$-smooth in the first variable.




Thus for $W^2_{2_{\operatorname{SUOT}}}(\alpha, \beta, \tau)$, consider two points $\Sigma_{\beta_0}, \Sigma_{\beta_1}$ in $\mathbb{S}_{++}^d$. Let $\Sigma_{\beta_s} (0 \leq s \leq 1)$ be geodesic connecting them. Then we have 
\begin{align*}
    & \hspace{- 4 em} (1-s) \mathcal{F}\left(\Sigma_{\beta_0}, \Sigma_{\beta_0, x}\right)+s \mathcal{F}\left(\Sigma_{\beta_1}, \Sigma_{\beta_1, x}\right)-s(1-s) W_2^2\left(\Sigma_{\beta_0}, \Sigma_{\beta_1}\right) \\
    & \leq (1-s) \mathcal{F}\left(\Sigma_{\beta_0}, \Sigma_{\beta_s, x}\right) + s \mathcal{F}\left(\Sigma_{\beta_1}, \Sigma_{\beta_s, x}\right) - s(1-s) W_2^2\left(\Sigma_{\beta_0}, \Sigma_{\beta_1}\right) \\
    & \leq  \mathcal{F}\left(\Sigma_{\beta_s}, \Sigma_{\beta_s, x}\right),
\end{align*}
where the first inequality happens due to $\Sigma_{\beta_0, x}=\underset{\Sigma_x}{\argmin} \mathcal{F}\left(\Sigma_{\beta_0}, \Sigma_x\right), \Sigma_{\beta_1, x}=\underset{\Sigma_x}{\argmin} \mathcal{F}\left(\Sigma_{\beta_1}, \Sigma_x\right)$ and the second inequality happens due to $1$-smoothness of $\mathcal{F}$ according to $\Sigma_\beta$
Then $W_2^2(\alpha, \beta, \tau)$ is $1$-smooth in $\mathcal{P}_{2, ac} \left(\mathbb{R}^d\right)$ with Bures-Wasserstein metric. Thus, 
\begin{align*}
    L\Big(\Sigma_\beta \Big) = \frac{1}{n} \sum_{i=1}^n W^2_{2_{\operatorname{SUOT}}}(\Sigma_{\alpha_i}, \Sigma_\beta, \tau)
\end{align*}
is also $1$-smooth. That means if $\Sigma_{\beta}^{(k)},\Sigma_{\beta}^{(k+1)}$ are two SPD matrices in update process, then from the 1-smoothness of the barycenter functional, we obtain the
descent step 
\begin{align*}
    L\left(\Sigma_{\beta}^{(k+1)}\right)-L\left(\Sigma_{\beta}^{(k)}\right) \leq-\eta\left(1-\frac{\eta}{2}\right)\left\|\operatorname{grad} L\left(\Sigma_{\beta}^{(k)}\right)\right\|^2_{\Sigma_{\beta}^{(k)}}.
\end{align*}
Summing up gives
\begin{align*}
    L\left(\Sigma_{\beta}^{(0)}\right)-L\left(\Sigma_{\beta}^{(k+1)}\right) \geq \eta\left(1-\frac{\eta}{2}\right) \sum_{t=1}^k \| \operatorname{grad} L\left(\Sigma_{\beta}^{(k)}\right) \|^2 _{\Sigma_{\beta}^{(k)}}.
\end{align*}
It is equivalent that $\sum_{t=1}^k \| \operatorname{grad} L\left(\Sigma_{\beta}^{(k)}\right) \|^2 _{\Sigma_{\beta}^{(k)}}$ has the upper bound as $L\left(\Sigma_{\beta}^{(0)}\right)$, it is also non-decreasing sequences then $\lim_{k\rightarrow \infty}\left\|\operatorname{grad} L\left(\Sigma_{\beta}^{(k)}\right)\right\|_{\Sigma_{\beta}^{(k)}}^2=0$. It leads to that $\Sigma_{\beta}^{(k)}$ converges to the optimal point $\Sigma_\beta^*$. \\

Now to see the convergence rate, first we prove that if $\Sigma_\beta \in \mathcal{K}_{[1 / \rho, \rho]}$, then

\begin{align*}
& L(\Sigma_\beta)-L\left(\Sigma_{\beta}^*\right) \leq \frac{\rho}{8\tau^2}(\rho^2 + 2 \tau \rho)^{\frac{3}{2}}\left\|\operatorname{grad} L(\Sigma_\beta)\right\|_{\Sigma_\beta}^2.
\end{align*}

Indeed, from the second claim in \Cref{convexity}, and since $\mathcal{K}_{[1 / \rho, \rho]}$ is convex with respect to Euclidean geodesics, we see that for $\Sigma \in \mathcal{K}_{[1 / \rho, \rho]}$

\begin{align*}
L(\Sigma_\beta)-L\left(\Sigma_\beta^{\star}\right) & \leq\left\langle\nabla L(\Sigma_\beta), \Sigma_\beta-\Sigma_\beta^{\star}\right\rangle-\frac{1}{2}\dfrac{\tau^2}{(\rho^2+2\tau\rho)^{3/2}}\left\|\Sigma_\beta-\Sigma_\beta^{\star}\right\|_{\mathrm{F}}^2 \\
& =\frac{1}{2}\left\langle\ \operatorname{grad} L(\Sigma_\beta), \Sigma_\beta-\Sigma_\beta^{\star}\right\rangle_{\Sigma_\beta}-\frac{1}{2}\dfrac{\tau^2}{(\rho^2+2\tau\rho)^{3/2}}\left\|\Sigma_\beta-\Sigma_\beta^{\star}\right\|_{\mathrm{F}}^2,
\end{align*}

where the last line uses Appendix A.5 of \cite{altschuler2021averaging}. Next we observe that by combining Cauchy-Schwarz with Young's inequality we get that for all $r>0$,

\begin{align*}
\frac{1}{2}\left\langle\operatorname{grad} L(\Sigma_\beta), \Sigma_\beta-\Sigma_\beta^{\star}\right\rangle_{\Sigma_\beta} & \leq \frac{1}{2}\left\|\operatorname{grad} L(\Sigma_\beta)\right\|_{\Sigma_\beta}\left\|\Sigma_\beta-\Sigma_\beta^{\star}\right\|_{\Sigma_\beta^{-1}} \\
& \leq \frac{r}{16}\left\|\operatorname{grad} L(\Sigma_\beta)\right\|_{\Sigma_\beta}^2+\frac{1}{r}\left\|\Sigma_\beta-\Sigma_\beta^{\star}\right\|_{\Sigma_\beta^{-1}}^2 \\
& \leq \frac{r}{16}\left\|\operatorname{grad} L(\Sigma_\beta)\right\|_{\Sigma_\beta}^2+\frac{\rho}{r}\left\|\Sigma_\beta-\Sigma_\beta^{\star}\right\|_{\mathrm{F}}^2,
\end{align*}
where in the last inequality, we note that $\left\|\Sigma_\beta-\Sigma_\beta^{\star}\right\|_{\Sigma_\beta^{-1}}^2 = \tr\left( (\Sigma_\beta-\Sigma_\beta^{\star})^T \Sigma_\beta^{-1}(\Sigma_\beta-\Sigma_\beta^{\star})\right) = \tr\left( (\Sigma_\beta-\Sigma_\beta^{\star})(\Sigma_\beta-\Sigma_\beta^{\star})^T \Sigma_\beta^{-1}\right) \leq \lambda_1(\Sigma_\beta^{-1}) \tr\left( (\Sigma_\beta-\Sigma_\beta^{\star})(\Sigma_\beta-\Sigma_\beta^{\star})^T \right) \leq \rho \left\|\Sigma_\beta - \Sigma_\beta^{\star}\right\|_{\mathrm{F}}^2$. Putting $r=\frac{2}{\tau^2}\rho(\rho^2 + 2 \tau \rho)^{\frac{3}{2}}$ yields the result. Then, with the assumption that all updated covariance matrices lie in $\mathcal{K}_{[\nicefrac{1}{\rho}, \rho]}$ throughout the optimization trajectory, we have
\begin{align*}
L\left(\Sigma_\beta^{(k+1)}\right)-L\left(\Sigma_\beta^{\star}\right) & =L\left(\Sigma_\beta^{(k+1)}\right)-L\left(\Sigma_\beta^{(k)}\right)+L\left(\Sigma_\beta^{(k)}\right)-L\left(\Sigma_\beta^{\star}\right) \\
& \leq-\eta\left(1-\frac{\eta}{2}\right)\left\|\operatorname{grad} L\left(\Sigma_{\beta}^{(k)}\right)\right\|^2_{\Sigma_{\beta}^{(k)}}+L\left(\Sigma_\beta^{(k)}\right)-L\left(\Sigma_\beta^{\star}\right) \\
& \leq\left(1- \eta(1 - \frac{\eta}{2})\frac{8\tau^2}{\rho (\rho^2 + 2 \tau \rho)^{\frac{3}{2}}}\right)\left\{L\left(\Sigma_\beta^{(k)}\right)-L\left(\Sigma_\beta^{\star}\right)\right\}.
\end{align*}
Then integrating gives
\begin{align*}
    L\left(\Sigma_\beta^{(k)}\right)-L\left(\Sigma_\beta^{\star}\right) \leq \left(1- \frac{8\tau^2 \eta(1 - \frac{\eta}{2})}{\rho(d\rho^2 + 2 \tau \rho)^{\frac{3}{2}}}\right)^k \left\{L\left(\Sigma_\beta^{(0)}\right)-L\left(\Sigma_\beta^{\star}\right)\right\}.
\end{align*}
As a consequence, we obtain the conclusion of the theorem. 
\end{proof}

\subsection{Proof of \Cref{theo: hybrid
converges}} \label{proof for hybrid converges}
\begin{proof}
    We rewrite the SUOT objective function as 
\begin{align*}
    \min_{\Sigma_{\beta}, \Sigma_{x_{i (i= 1, \dots,n)}}}\sum_{i=1}^n W_2^2(\Sigma_{x_i},\Sigma_{\beta}) + \tau \text{KL}(\Sigma_{x_i}\|\Sigma_{\alpha_i}).
\end{align*}

First we prove its Euclidean convexity in $n+1$ variable. Indeed, convexity of $\mathrm{KL}\left(\Sigma_{x_i} \| \Sigma_{\alpha_i}\right)$ according to $\Sigma_{x_i}$ is well known. We only need to prove the convexity in two variable of $W_2^2\left(\Sigma_{x_i}, \Sigma_\beta\right)$.
Consider $(\Sigma_1, \Sigma_{1}^{\prime}),(\Sigma_2, \Sigma_{2}^{\prime}) \in \mathbb{S}_{++}(\mathbb{R}^d) \times \mathbb{S}_{++}(\mathbb{R}^d)$. Taking samples $\left(X_1, X_1^{\prime}\right), \left(X_2, X_2^{\prime}\right)$ such that
\begin{align*}
& X_1 \sim \mathcal{N}\left(0, \Sigma_1\right), X_1^{\prime} \sim \mathcal{N}\left(0, \Sigma_1^{\prime}\right) \\
& X_2 \sim \mathcal{N}\left(0, \Sigma_2\right), X_2^{\prime} \sim \mathcal{N}\left(0, \Sigma_2^{\prime}\right),
\end{align*}
where $\left(X_1, X_1^{\prime}\right)
$ and $\left(X_2, X_2^{\prime}\right)$ satisfy the coupling minimizing; $\left(X_1, X_1^{\prime}\right)$ and $\left(X_2, X_2^{\prime}\right)$ are independent.
Then we have
\begin{align*}
    & t W_2^2\left(\Sigma_1, \Sigma_1^\prime\right)+(1-t) W_2^2\left(\Sigma_2, \Sigma_2^\prime\right) \\
& =\mathbb{E}\left[t\left|X_1-X_1^{\prime}\right|^2+(1-t) \| X_2-X_2^2 \| \right] \\
& =\mathbb{E}\left[\left(\sqrt{t}\left(X_1-X_1^{\prime}\right)+\sqrt{(1-t)}\left(X_2-X_2^{\prime}\right)\right]^2\right] \text { (due to the independency). } \\
& \geq W_2^2\left(t\Sigma_1+(1-t) \Sigma_{2}, t\Sigma_1^{\prime}+(1-t)\Sigma_2^{\prime}\right).
\end{align*}
Then it is Euclidean convex by the definition. Next, we see that the barycenter objective function $F$ decreases during the iterations (suppose that all the updated matrices lie on a compact set $\mathcal{K}_{[\frac{1}{\rho}, \rho]}$). Indeed, the scheme of Hybrid Bures-Wassertein algorithm could be seen as a Block Coordinate Descent on SPD Manifolds \cite{peng2023block}. At each iteration, we fix $\Sigma_{\beta}$ to update $\{\Sigma_{x_i}\}_{i=1}^n$, then fix $\{\Sigma_{x_i}\}_{i=1}^n$ to update $\Sigma_\beta$. Both these updates should be done on SPD manifolds. In detail, at iteration $k$-th,
\begin{itemize}
    \item If we fix $\Sigma_{\beta} = \Sigma_{\beta}^{(k)}$, then $\Sigma_{x_i}^{(k)}$s which are minimizer of $\sum_{i=1}^n W_2^2(\Sigma_{x_i},\Sigma_{\beta}^{(k)}) + \tau \text{KL}(\Sigma_{x_i}\|\Sigma_{\alpha_i})$ must satisfy the forms in \Cref{UOT Plan} of our paper.
    \item If we fix $\Sigma_{x_i} = \Sigma_{x_i}^{(k)} \quad \forall i=1, \dots,n$, we need to minimize $\sum_{i=1}^n W_2^2(\Sigma_{x_i},\Sigma_{\beta})$ according to $\Sigma_\beta$. The updates from \cite{chewi2020gradient} will give the minimizer for this problem as $\Sigma_{\beta}^{(k+1)}$ theoretically supported by Theorem 7 \cite{chewi2020gradient}.
\end{itemize}

Now, the objective function decreases with corresponding solutions sequence $\left\{\Sigma_\beta^{(k)}\right\}_{k=1}^{\infty}$. Moreover, the sequences $\left\{\Sigma_\beta^{(k)}\right\}_{k=1}^{\infty}$ should not tend to infinity; otherwise, the objective function will also tend to infinity. Thus, we can extract a subsequence $\left\{\Sigma_\beta^{(k_ n)}\right\}_{k=1}^{\infty}$ which converges to a limit $\tilde{\Sigma}_\beta$. $\Sigma_{x_i}^{(k_n)}$ will also converge to the corresponding $\tilde{\Sigma}_{x_i}$. Both $\tilde{\Sigma}_\beta$ and $\tilde{\Sigma}_{x_i}$ satisfy that $\frac{\partial L}{\partial \tilde{\Sigma}_\beta}=0, \frac{\partial L}{\partial \tilde{\Sigma}_{x_i}}=0$, then they will be the stationary points. Following \Cref{proof for convexity}, $W^2_{2_{\operatorname{SUOT}}}$ is strictly Euclidean convex then these points are unique optimal points. Moreover, $\left\{L\left(\Sigma_\beta^{(k_n)}, \Sigma_{x_i}^{( k_n)}\right)\right\}_{k=1}^{\infty}$ and $\left\{L\left(\Sigma_\beta^{(k)}, \Sigma_{x_i}^{(k)}\right)\right\}_{k=1}^{\infty}$ are decreasing sequences then $\{L(\Sigma_\beta^{(k)}, \Sigma_{x_i}^{(k)})\}_{k=1}^{\infty}$ decreases to this optimal value, leading to $(\Sigma_\beta^{(k)}, \Sigma_{x_i}^{(k)})$ converges to the optimal points $\tilde{\Sigma}_\beta, \tilde{\Sigma}_{x_i}$. 
\end{proof}

\section{Ablation Study} \label{Ablation study}
\begin{proposition} \label{lem: rela to extreme point}
Considering the optimizer $\Sigma_x$ of Theorem \ref{UOT Plan}, the following convergences hold
\begin{enumerate}
    \item 
        $\Sigma_x \stackrel{\tau \to 0}{\rightarrow} \Sigma_\beta$,
    \item 
        $\left\|\Sigma_x-\Sigma_\alpha\right\|_{F} \stackrel{\tau \to \infty}{\rightarrow} 0$.
\end{enumerate}
\end{proposition}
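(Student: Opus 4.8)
The plan is to argue directly from the closed form of the optimizer in Theorem \ref{UOT Plan}, namely $\Sigma_x = \Sigma_\beta^{-1/2}\mathbf{S}_1\Sigma_\beta^{-1/2}$ with $\mathbf{S}_1 = \frac{\tau}{2}\Sigma_{\alpha,\tau,\beta}^{-1} + \frac{1}{2}\Sigma_{\alpha,\tau,\beta}^{-2}\bigl(\Idsf + (\Idsf + 2\tau\Sigma_{\alpha,\tau,\beta})^{1/2}\bigr)$, and to compute the two limits by tracking the asymptotics of $\Sigma_{\alpha,\tau,\beta} = \Sigma_\beta^{-1/2}\Sigma_{\alpha,\tau}\Sigma_\beta^{-1/2}$. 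The only analytic tools needed are the continuity of matrix inversion and of the matrix square root on the SPD cone, together with the fact that the congruence $X \mapsto \Sigma_\beta^{-1/2} X \Sigma_\beta^{-1/2}$ is linear hence continuous; since every factor in $\mathbf{S}_1$ is a function of the single symmetric matrix $\Sigma_{\alpha,\tau,\beta}$, all these factors commute, which keeps the manipulations transparent. Finite-dimensionality makes all norms equivalent, so it suffices to establish convergence in any one norm.

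For the first claim ($\tau \to 0$) the computation is immediate. Here $\Sigma_{\alpha,\tau} = \Idsf + \frac{\tau}{2}\Sigma_\alpha^{-1} \to \Idsf$, hence $\Sigma_{\alpha,\tau,\beta} \to \Sigma_\beta^{-1}$. The first term $\frac{\tau}{2}\Sigma_{\alpha,\tau,\beta}^{-1}$ vanishes because $\Sigma_{\alpha,\tau,\beta}^{-1} \to \Sigma_\beta$ stays bounded; in the second term $(\Idsf + 2\tau\Sigma_{\alpha,\tau,\beta})^{1/2} \to \Idsf$ and $\Sigma_{\alpha,\tau,\beta}^{-2} \to \Sigma_\beta^{2}$, so $\mathbf{S}_1 \to \frac{1}{2}\Sigma_\beta^{2}(2\Idsf) = \Sigma_\beta^{2}$. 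Conjugating, $\Sigma_x \to \Sigma_\beta^{-1/2}\Sigma_\beta^{2}\Sigma_\beta^{-1/2} = \Sigma_\beta$.

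The hard part is the second claim ($\tau \to \infty$), because $\Sigma_{\alpha,\tau,\beta}$ and several terms of $\mathbf{S}_1$ now diverge, so a naive term-by-term limit is meaningless. The key idea is to factor out the divergent scale. Writing $N := \Sigma_\beta^{-1/2}\Sigma_\alpha^{-1}\Sigma_\beta^{-1/2}$ (an SPD matrix) one has $\Sigma_{\alpha,\tau,\beta} = \Sigma_\beta^{-1} + \frac{\tau}{2}N = \frac{\tau}{2}P_\tau$ with $P_\tau := N + \frac{2}{\tau}\Sigma_\beta^{-1} \to N$. Then the first term collapses exactly to $\frac{\tau}{2}\Sigma_{\alpha,\tau,\beta}^{-1} = P_\tau^{-1} \to N^{-1}$, while in the second term $\Idsf + 2\tau\Sigma_{\alpha,\tau,\beta} = \Idsf + \tau^2 P_\tau$, whose square root equals $\tau\bigl(\frac{1}{\tau^2}\Idsf + P_\tau\bigr)^{1/2}$; combined with $\Sigma_{\alpha,\tau,\beta}^{-2} = \frac{4}{\tau^2}P_\tau^{-2}$ the entire second term equals $\frac{2}{\tau^2}P_\tau^{-2} + \frac{2}{\tau}P_\tau^{-2}\bigl(\frac{1}{\tau^2}\Idsf + P_\tau\bigr)^{1/2}$, which is $O(1/\tau) \to 0$. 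Hence $\mathbf{S}_1 \to N^{-1} = \Sigma_\beta^{1/2}\Sigma_\alpha\Sigma_\beta^{1/2}$, and conjugating gives $\Sigma_x \to \Sigma_\beta^{-1/2}\Sigma_\beta^{1/2}\Sigma_\alpha\Sigma_\beta^{1/2}\Sigma_\beta^{-1/2} = \Sigma_\alpha$, i.e. $\|\Sigma_x - \Sigma_\alpha\|_F \to 0$.

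To make the error estimates rigorous I would invoke eigenvalue continuity: since $P_\tau \to N \succ 0$, for all large $\tau$ the eigenvalues of $P_\tau$ lie in a fixed compact subinterval of $(0,\infty)$, which uniformly bounds $\|P_\tau^{-1}\|$, $\|P_\tau^{-2}\|$ and $\bigl\|\bigl(\frac{1}{\tau^2}\Idsf + P_\tau\bigr)^{1/2}\bigr\|$. The residual limits are then routine consequences of the operator-norm continuity of $X \mapsto X^{-1}$ and $X \mapsto X^{1/2}$ on this compact set, and the claimed Frobenius-norm convergence follows from norm equivalence. The main obstacle throughout is exactly the divergence in the $\tau\to\infty$ regime, and the factorization $\Sigma_{\alpha,\tau,\beta} = \frac{\tau}{2}P_\tau$ is what resolves it cleanly.
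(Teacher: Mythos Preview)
Your proposal is correct and follows essentially the same approach as the paper: both proofs handle the $\tau\to\infty$ case by factoring out the divergent scale (the paper via $\Sigma_{\alpha,\tau}^{-1}\approx\frac{2}{\tau}\Sigma_\alpha$, you via $\Sigma_{\alpha,\tau,\beta}=\frac{\tau}{2}P_\tau$) and then showing the first term of $\mathbf{S}_1$ carries the limit $\Sigma_\beta^{1/2}\Sigma_\alpha\Sigma_\beta^{1/2}$ while the second term vanishes. Your factorization is arguably a bit more transparent, but the underlying idea is identical.
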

\begin{proof}
    The first part is trivial. When $\tau$ goes to zero, we could easily compute that $\Sigma_{\alpha, \tau}^{-1}$ goes to $\Idsf$, then $\Sigma_{\alpha, \tau, \beta}^{-1}$ goes to $\Sigma$ and $\Sigma_x$ goes to $\Sigma_\beta$. For the second part, we note that
    \begin{align*}
        \left\|\Sigma_{\alpha, \tau}^{-1}-\frac{2}{\tau} \Sigma_\alpha\right\|_2 \stackrel{\tau \to \infty}{\rightarrow} 0.
    \end{align*}
Actually, we have
\begin{align*}
\Sigma_{\alpha, \tau}^{-1}  =\left(\Idsf+\frac{\tau}{2} \Sigma_\alpha^{-1}\right)^{-1} =\frac{2}{\tau} \Sigma_\alpha\left(\frac{2}{\tau} \Sigma_\alpha+ \Idsf \right)^{-1},
\end{align*}
so
\begin{align*}
    \left\|\Sigma_{\alpha, \tau}^{-1}-\frac{2}{\tau} \Sigma_\alpha\right\|_F & =\left\|\frac{2}{\tau} \Sigma_\alpha\left[\left(\frac{2}{\tau} \Sigma_\alpha+\Idsf\right)^{-1}-\Idsf\right]\right\|_F \\
& \leq \frac{2}{\tau}\left\|\Sigma_\alpha\right\|_F\left\|\left(\frac{2}{\tau} \Sigma_\alpha+\Idsf\right)^{-1}-\Idsf\right\|_F.
\end{align*}
When $\tau$ goes to infinity, $\frac{2}{\tau} \Sigma_\alpha+\Idsf$ goes to $\Idsf$ then limit of RHS is 0. It follows that our comment is true. Then we have
\begin{align*}
    \left\|\frac{\tau}{2} \Sigma_{\alpha, \tau, \beta}^{-1}-\Sigma_\beta^{\frac{1}{2}} \Sigma_\alpha \Sigma_\beta^{\frac{1}{2}}\right\|_F \stackrel{\tau \rightarrow \infty}{\rightarrow} 0
\end{align*}
due to $\Sigma_{\alpha, \tau, \beta}^{-1}=\Sigma_\beta^{\frac{1}{2}} \Sigma_{\alpha, \tau}^{-1} \Sigma_\beta^{\frac{1}{2}}$. Hence, we have
\begin{align*}
    \left\|\Sigma_x-\Sigma_\alpha\right\|_F \leq \left\|\Sigma_\beta^{-\frac{1}{2}}\right\|_F^2\left(\left\|\frac{\tau}{2} \Sigma_{\alpha, \tau, \beta}^{-1}-\Sigma_\beta^{\frac{1}{2}} \Sigma_\alpha \Sigma_\beta^{\frac{1}{2}}\right\|_F+\mathcal{O}\left(\tau^{-\frac{3}{2}}\right)\right).
\end{align*}
It verifies our conclusion.
\end{proof}

The first figure visualizes our barycenter for various values of $\tau$. The experimental settings are the same as those described in Section \ref{sec:numerical_experiment} but with the variable $\tau$ ranging from 0.005 to 100. The figure shows that when $\tau$ is large enough, SUOT-based Barycenter produced by our method resembles the normal Wasserstein Barycenter (as shown in the last subfigure), thereby verifying our theoretical analysis in Proposition \ref{lem: rela to extreme point}.

\begin{figure}[!htp] 
    \centering
    \subfloat{\includegraphics[width=0.33\linewidth]{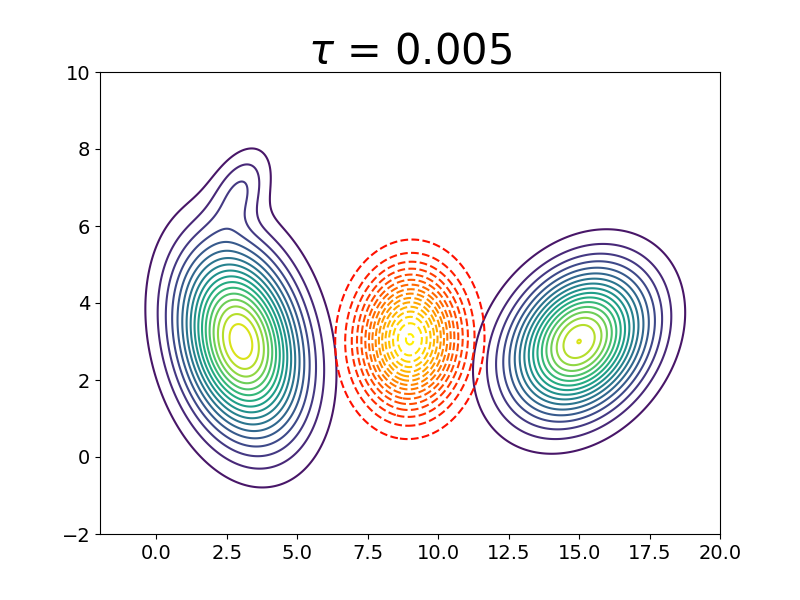}}
    \subfloat{\includegraphics[width=0.33\linewidth]{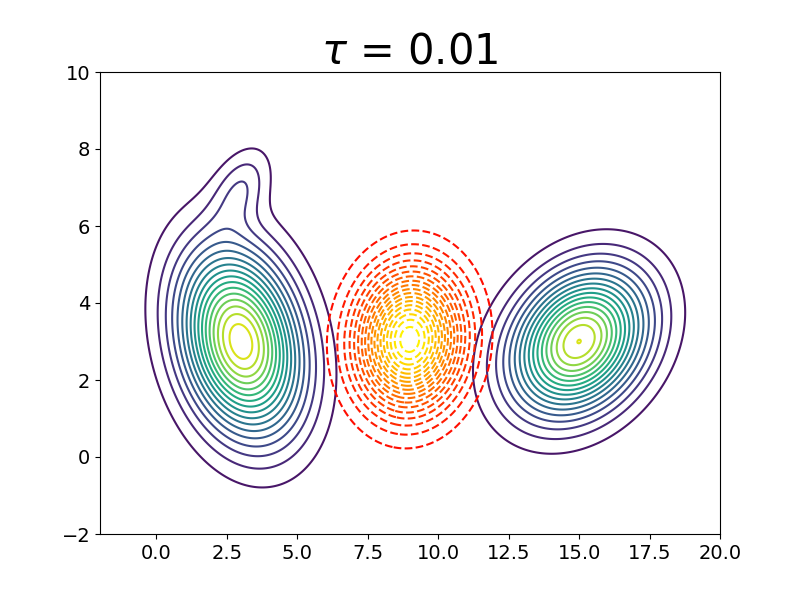}}
    \subfloat{\includegraphics[width=0.33\linewidth]{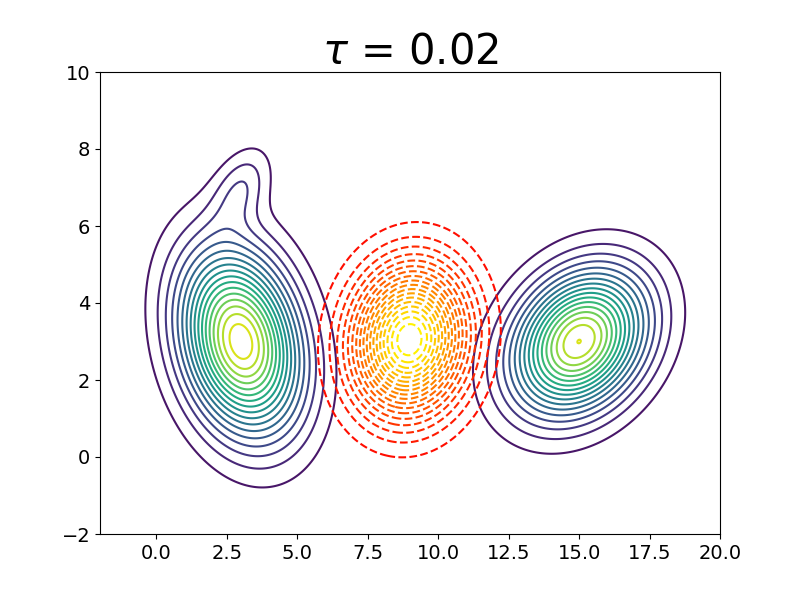}} \\
    \subfloat{\includegraphics[width=0.33\linewidth]{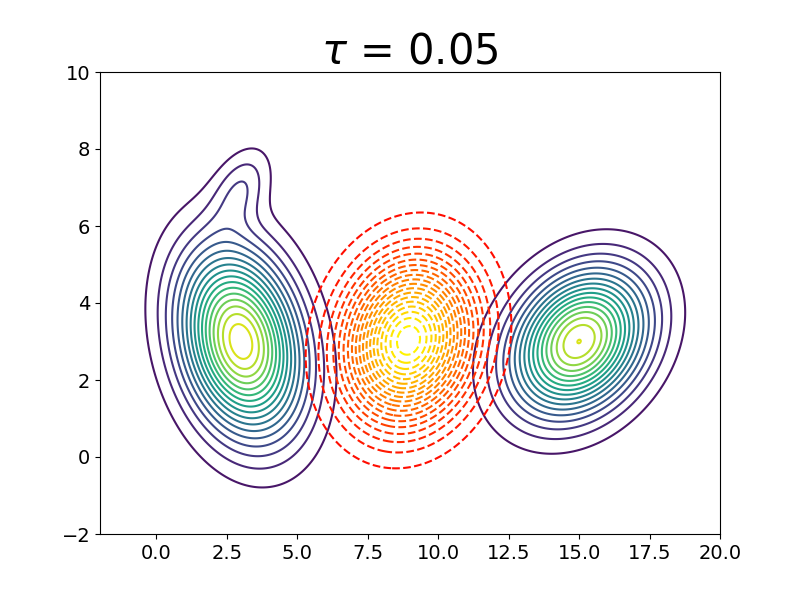}} 
    \subfloat{\includegraphics[width=0.33\linewidth]{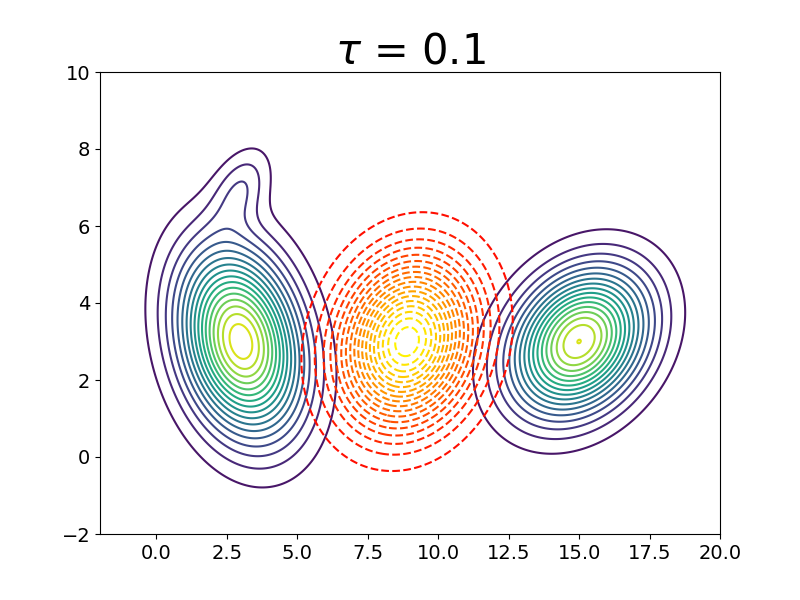}}
    \subfloat{\includegraphics[width=0.33\linewidth]{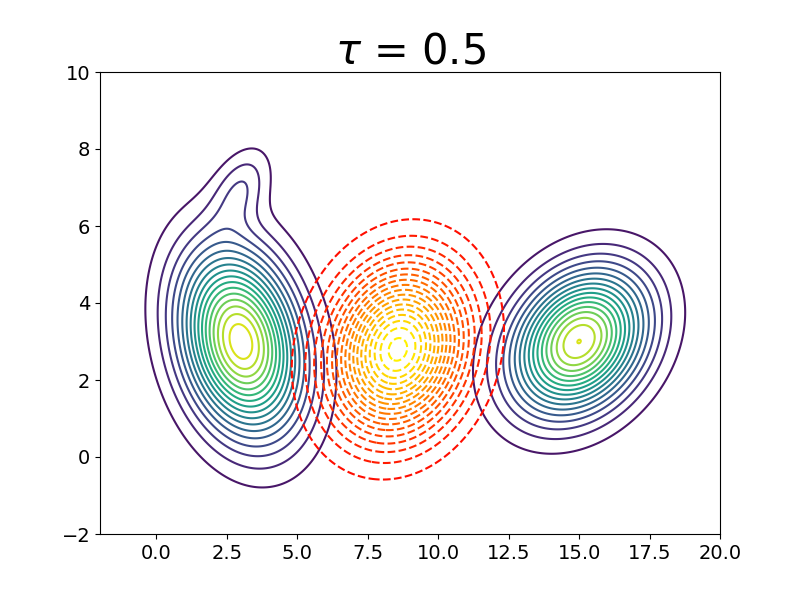}} \\
    \subfloat{\includegraphics[width=0.33\linewidth]{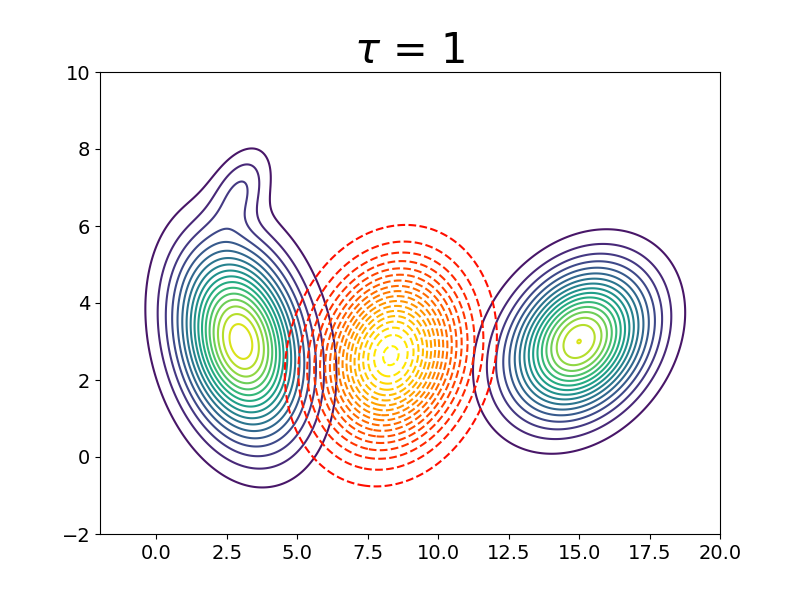}}
    \subfloat{\includegraphics[width=0.33\linewidth]{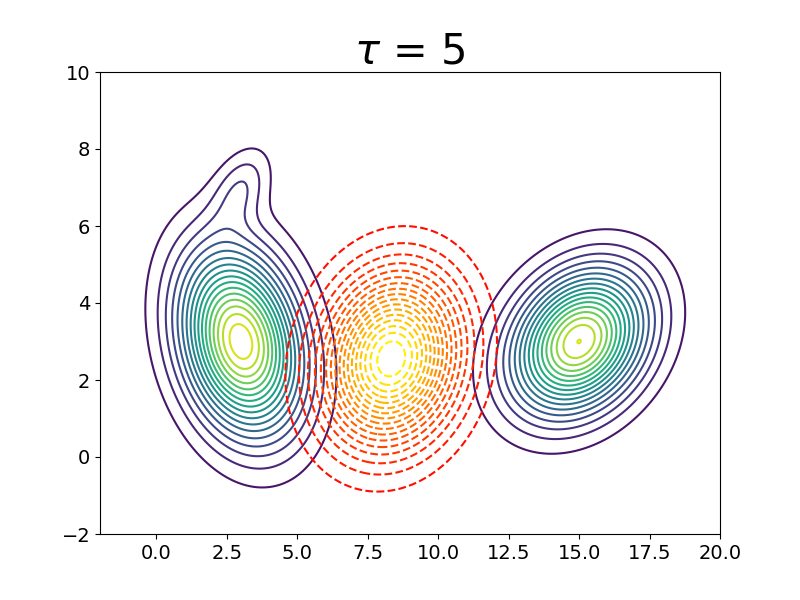}} 
    \subfloat{\includegraphics[width=0.33\linewidth]{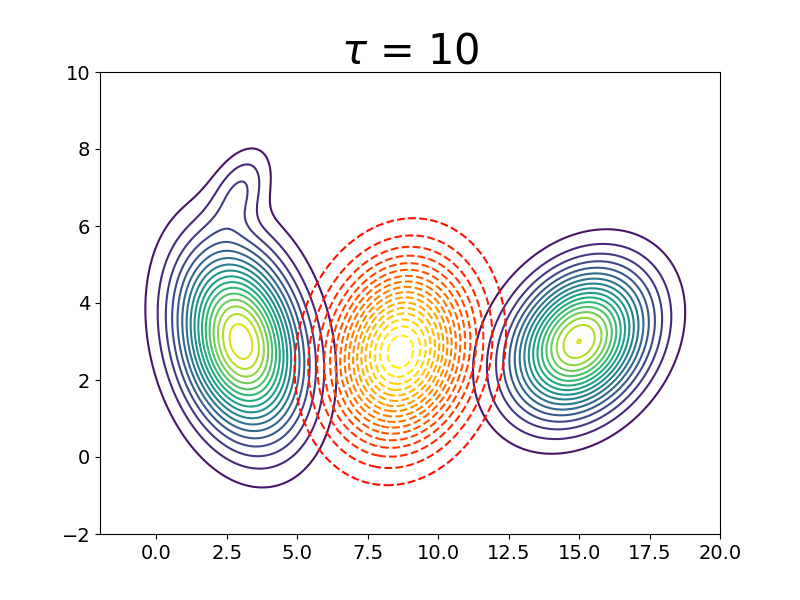}} \\
    \subfloat{\includegraphics[width=0.33\linewidth]{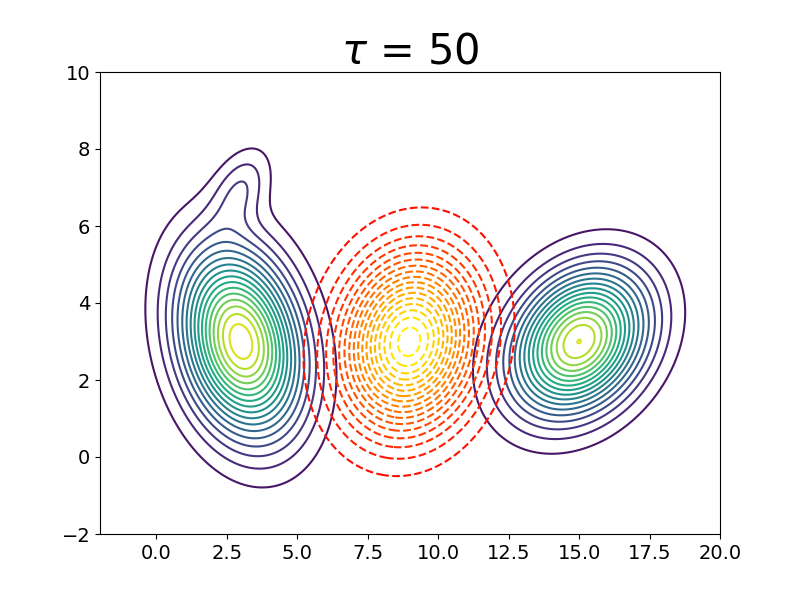}}
    \subfloat{\includegraphics[width=0.33\linewidth]{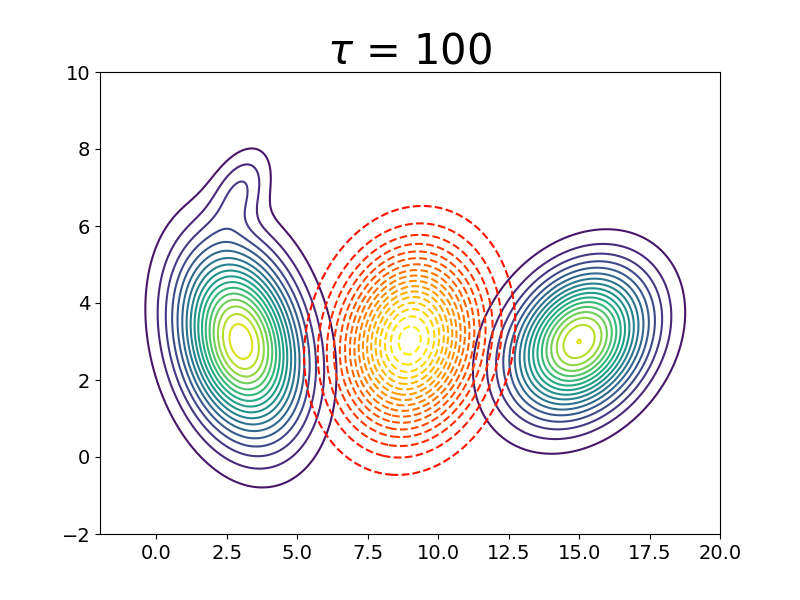}}
    \subfloat{\includegraphics[width=0.29\linewidth]{contour_ot_bary.png}}
    \caption{Ablation study of the dependence between the SUOT-based Barycenter and parameter $\tau$. From top to bottom, left to right, we calculate the barycenter with values of $\tau$ as $0.005, 0.01, 0.02, 0.05, 0.1, 0.5, 1, 5, 10, 50, 100$. The bottom right corner subfigure is normal Wasserstein Barycenter in Figure 2 of the main manuscript}
\end{figure}

\begin{figure}[!htp]
    \centering
    \includegraphics[width=0.75\linewidth]{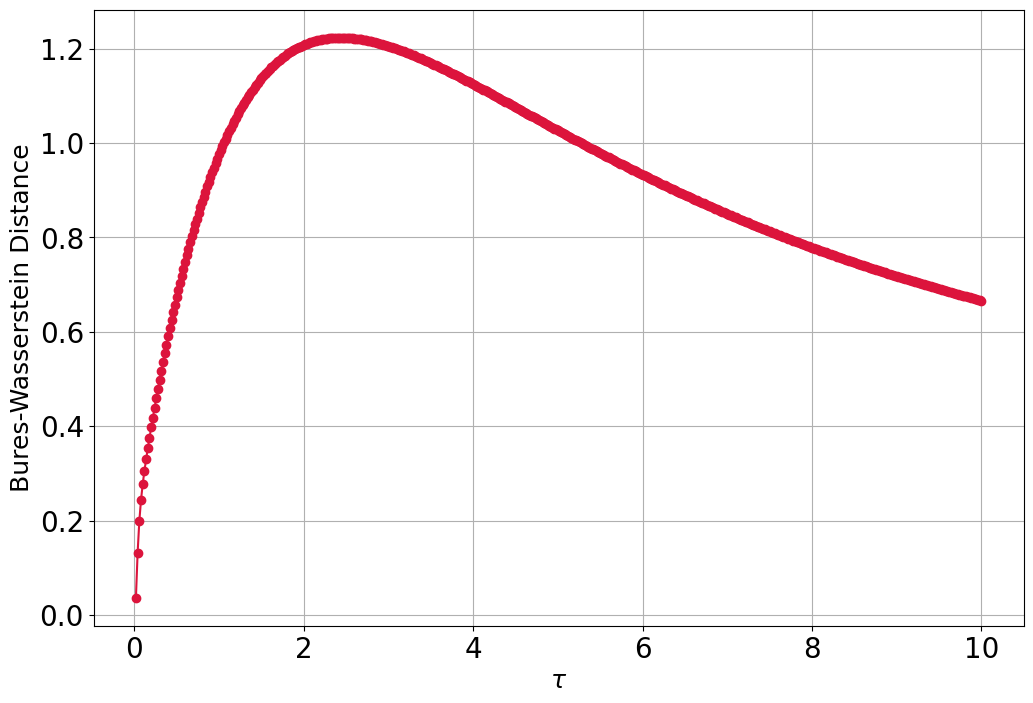}
    \caption{Bures-Wasserstein Distance from SUOT-based Barycenters to the barycenter learnt by \cite{chewi2020gradient}}
\end{figure}

To quantify this, we calculate the distances between the standard Wasserstein Barycenter learned by the method of \cite{chewi2020gradient} and our SUOT-based Barycenters using the OT distance between Gaussians, as described in Proposition 3 of Appendix A.1. The second figure illustrates the distances under the effects of varying $\tau$.

\newpage
\bibliography{sample}
\bibliographystyle{abbrv}

\end{document}